\def\balign#1\ealign{\begin{align}#1\end{align}}
\def\baligns#1\ealigns{\begin{align*}#1\end{align*}}
\def\balignat#1\ealign{\begin{alignat}#1\end{alignat}}
\def\balignats#1\ealigns{\begin{alignat*}#1\end{alignat*}}
\def\bitemize#1\eitemize{\begin{itemize}#1\end{itemize}}
\def\benumerate#1\eenumerate{\begin{enumerate}#1\end{enumerate}}
\newenvironment{talign*}
 {\csname align*\endcsname}
 {\endalign}
\newenvironment{talign}
 {\csname align\endcsname}
 {\endalign}
\def\balignst#1\ealignst{\begin{talign*}#1\end{talign*}}
\def\balignt#1\ealignt{\begin{talign}#1\end{talign}}
\let\originalleft\left
\let\originalright\right
\renewcommand{\left}{\mathopen{}\mathclose\bgroup\originalleft}
\renewcommand{\right}{\aftergroup\egroup\originalright}
\def\tinycitep*#1{{\tiny\citep*{#1}}}
\def\tinycitealt*#1{{\tiny\citealt*{#1}}}
\def\tinycite*#1{{\tiny\cite*{#1}}}
\def\smallcitep*#1{{\scriptsize\citep*{#1}}}
\def\smallcitealt*#1{{\scriptsize\citealt*{#1}}}
\def\smallcite*#1{{\scriptsize\cite*{#1}}}
\def\R{\mathbb{R}}
\def\<{\left\langle} %
\def\>{\right\rangle}
\def\E{\mbb{E}} %
\DeclareMathOperator{\Tr}{Tr} %
\DeclareSymbolFont{rsfs}{U}{rsfs}{m}{n}
\DeclareSymbolFontAlphabet{\mathscrsfs}{rsfs}
\newtheorem{theorem}{Theorem}
\newtheorem{lemma}[theorem]{Lemma}
\newtheorem{corollary}[theorem]{Corollary}
\renewenvironment{proof}{\noindent\textbf{Proof.}\hspace*{.3em}}{\qed\\}
\newenvironment{proof-sketch}{\noindent\textbf{Proof Sketch}
  \hspace*{1em}}{\qed\bigskip\\}
\newenvironment{proof-idea}{\noindent\textbf{Proof Idea}
  \hspace*{1em}}{\qed\bigskip\\}
\newenvironment{proof-of-lemma}[1][{}]{\noindent\textbf{Proof of Lemma {#1}}
  \hspace*{1em}}{\qed\\}
\newenvironment{proof-of-theorem}[1][{}]{\noindent\textbf{Proof of Theorem {#1}}
  \hspace*{1em}}{\qed\\}
\newenvironment{proof-attempt}{\noindent\textbf{Proof Attempt}
  \hspace*{1em}}{\qed\bigskip\\}
\newtheorem{proposition}[theorem]{Proposition}
\newtheorem{assumption}{Assumption}
\def\Ltwo{\mathbb L_2}
\def\wass{{\sf W}}
\def\law{{\mathcal L}}
\def\rmd{{\,\rm d}}
\def\l|{\left\lVert}
\def\r|{\right\rVert}
\def\E{\mathbb E}
\def\R{\mathbb R}
\newcommand{\inprod}[2]{\ensuremath{\left\langle #1 , \, #2 \right\rangle}}
\definecolor{darkmidnightblue}{rgb}{0.0, 0.2, 0.4}
\definecolor{darkpowderblue}{rgb}{0.0, 0.2, 0.6}
\definecolor{dukeblue}{rgb}{0.0, 0.0, 0.61}
\definecolor{darkmidnightblue}{HTML}{003366}    
\definecolor{midnightblue}{HTML}{0059b3}
\definecolor{chromered}{HTML}{f14233}
\begin{document}

\title{Advancing Wasserstein Convergence Analysis of Score-Based Models: Insights from Discretization and Second-Order Acceleration}

 \author{
 Yifeng Yu\thanks{Department of Mathematical Sciences, Tsinghua University \texttt{yyf22@mails.tsinghua.edu.cn}}
 \and 
 Lu Yu\thanks{
  Department of Data Science,
  City University of Hong Kong \texttt{lu.yu@cityu.edu.hk}
 }
}

\maketitle

\begin{abstract}
Score-based diffusion models have emerged as powerful tools in generative modeling, yet their theoretical foundations remain underexplored. 
In this work, we focus on the Wasserstein convergence analysis of score-based diffusion models. Specifically, we investigate the impact of various discretization schemes, including Euler discretization, exponential integrators, and midpoint randomization methods. 
Our analysis provides a quantitative comparison of these discrete approximations, emphasizing their influence on convergence behavior. 
Furthermore, we explore scenarios where Hessian information is available and propose an accelerated sampler based on the local linearization method. 
We demonstrate that this Hessian-based approach achieves faster convergence rates of order 
$\widetilde{\mathcal{O}}\left(\frac{1}{\varepsilon}\right)$ significantly improving upon the standard 
rate $\widetilde{\mathcal{O}}\left(\frac{1}{\varepsilon^2}\right)$ of vanilla diffusion models, where $\varepsilon$ denotes the target accuracy.

\end{abstract}

\section{Introduction}

\textit{Diffusion models} have become a pivotal framework in modern generative modeling, achieving notable success across fields such as image generation~\cite{ramesh2022hierarchical,ho2020denoising,song2020denoising,dhariwal2021diffusion}, natural language processing~\cite{popov2021grad}, and computational biology~\cite{xu2022geodiff,anand2022protein}. These models operate by systematically introducing noise into data through a forward diffusion process and then learning to reverse this process, effectively reconstructing data from noise. This approach enables them to capture the underlying structure of complex, high-dimensional data distributions. %
For a detailed review of diffusion models, we refer the readers to~\cite{yang2023diffusion,tang2024score,chen2024overview}.

A widely adopted formulation of diffusion models is the score-based generative model (SGM), implemented using stochastic differential equations (SDEs)~\cite{song2020score}. Broadly speaking, SGMs rely on two key stochastic processes: a forward process and a backward process. The forward process gradually transforms samples from the data distribution into pure noise, while the backward process reverses this transformation, converting noise back into the target data distribution, thereby enabling generative modeling. 

Despite the remarkable empirical success of diffusion models across various applications, their theoretical understanding remains limited. In recent years, there has been a rapidly expanding body of research on the convergence theory of diffusion models. Broadly, these contributions can be divided into two main approaches, each focusing on different metrics and divergences.
The first category investigates convergence bounds based on $\alpha$-divergence, including the Kullback–Leibler (KL) divergence and the total variation (TV) distance (see e.g.,~\cite{li2024provable,liang2025low,chen2024probability,chen2022sampling,wu2024stochastic,chen2023improved}). 
Among these works, several explore acceleration techniques that leverage higher-order information about the log density (see e.g.,~\cite{li2024accelerating,liang2024broadening,huang2024convergence}).
The second category focuses on convergence bounds in Wasserstein distance, which is often considered more practical and informative for estimation tasks.
One line of work within this category assumes strong log-concavity of the data distribution and access to accurate estimates of the score function~\cite{gao2023wasserstein,gao2023wasserstein,bruno2023diffusion,tang2024contractive,strasman2024analysis}. Another line of work focuses on specific structural assumptions of the data distribution.
For example,~\cite{de2022convergence} establishes Wasserstein-1 convergence with exponential rates under the manifold hypothesis, assuming the data distribution lies on a lower-dimensional manifold or represents an empirical distribution.~\cite{mimikos2024score} provides the Wasserstein-1 convergence analysis when the data distribution is defined on a torus.
Furthermore, recent work~\cite{gentiloni2025beyond} analyzes Wasserstein-2 convergence in SGMs while relaxing log-concavity and score regularity assumptions.

Much of the existing literature on the convergence theory of diffusion models relies on the Euler discretization method. 
Notably,~\cite{chen2023improved} compare the behavior of Euler discretization and exponential integrators~\cite{zhang2022fast,hochbruck2010exponential} in terms of KL divergence.
Additionally,~\cite{de2022convergence} provide a comparative analysis of these two schemes, though without formal theoretical guarantees.
A comprehensive and systematic understanding of how different discretization schemes influence convergence performance in diffusion models remains underexplored. 
Furthermore, while convergence analyses of accelerated diffusion models primarily focus on TV or KL distances, studies investigating Wasserstein convergence for these accelerations remain lacking.

In this work, we address these challenges by analyzing the Wasserstein convergence of score-based diffusion models when the data distribution has a smooth and strongly log-concave density.
Specifically, we investigate the impact of different discretization schemes on convergence behavior. 
Beyond the widely used Euler method and exponential integrator, we explore the midpoint randomization method.
This method was initially introduced in~\cite{shen2019randomized} for discretizing kinetic Langevin diffusion~\cite{cheng2018underdamped} and then has been extensively studied in log-concave sampling complexity theory~\cite{he2020ergodicity,yu2023langevin,yu2024parallelized,yu2024log,kandasamy2024poisson}.
It was later applied to diffusion models~\cite{gupta2024faster,li2024improved}, showing improved KL and TV convergence performance over vanilla models and offering easy parallelization.

We also consider scenarios where accurate estimates of the Hessian of the log density are accessible. Inspired by~\cite{Shoji1998}, we propose a novel sampler based on the local linearization method, which leverages second-order information about the log density. Our analysis shows that this approach significantly improves the upper bounds on the Wasserstein distance between the target data distribution and the generative distribution of the diffusion model.

Our contribution can be summarized as follows.
\begin{itemize}
\setlength\itemsep{0.02em}
    \item In Section~\ref{sec:discretization}, we establish convergence guarantees for SGMs in the Wasserstein-2 distance under various discretization methods, including the Euler method, exponential integrators, the midpoint randomization method, and a hybrid approach combining the latter two. 
    \item In Section~\ref{sec:accleration}, we introduce a novel Hessian-based accelerated sampler for the stochastic diffusion process, leveraging the local linearization method. 
    We then establish its Wasserstein convergence analysis in Theorem~\ref{thm:2order}, achieving state-of-the-art order of
$\widetilde{\mathcal{O}}\left(\frac{1}{\varepsilon}\right)$.
\end{itemize}
In summary, our analysis provides a quantitative comparison of different discrete approximations in terms of the Wasserstein-2 distance, offering practical guidance for choosing discretization points.
Moreover, we present the first Wasserstein convergence analysis of an accelerated sampler that leverages accurate score function estimation and second-order information about log-densities.
This accelerated sampler achieves a faster convergence rate~$\widetilde{\mathcal{O}}\left(1/{\varepsilon}\right)$ in Wasserstein-2 distance, compared to the standard rate~$\widetilde{\mathcal{O}}\left(1/{\varepsilon^2}\right)$ of vanilla diffusion models. 
These results contribute to the understanding of Wasserstein convergence in score-based models, shedding light on aspects that have not been extensively explored before.

\vspace{0.2cm}
\noindent \textbf{More Related Work.}
Score-based diffusion models can be formulated using either SDEs or their deterministic counterparts, known as probability flow ODEs~\cite{song2020score}. While SDE-based samplers generate samples through stochastic simulation, ODE-based samplers provide a deterministic alternative. Theoretical advancements in accelerating these samplers have emerged only recently.
A significant step toward designing provably accelerated, training-free methods were made by~\cite{li2024accelerating}, who propose and analyze acceleration for both ODE- and SDE-based samplers. Their accelerated SDE sampler leverages higher-order expansions of the conditional density to enhance efficiency. This was followed by the work of~\cite{li2024sharp}, which provided convergence guarantees for probability flow ODEs.
Furthermore,~\cite{huang2024convergence} studies the convergence properties of deterministic samplers based on probability flow ODEs, using the Runge-Kutta integrator;~\cite{wu2024stochastic} propose and analyze a training-free acceleration algorithm for SDE-based samplers, based on the stochastic Runge-Kutta method.
~\cite{liang2024broadening} proposes a novel accelerated SDE-based sampler when Hessian information is available.
Another line of work involves the midpoint randomized method. 
In particular, ~\cite{gupta2024faster} explore ODE acceleration by incorporating a randomized midpoint method, leveraging its advantages in parallel computation. 
A more recent work by~\cite{li2024improved} improved upon the ODE sampler proposed by~\cite{gupta2024faster}, achieving the state-of-the-art convergence rate.

We note that all of these works provide convergence analysis in terms of either KL divergence or TV distance.
Among these, \cite{liang2024broadening} accelerates the stochastic DDPM sampler by leveraging precise score and Hessian estimations of the log density, even for possibly non-smooth target distributions. This is achieved through a novel Bayesian approach based on tilting factor representation and Tweedie’s formula.
\cite{huang2024convergence} accelerates the ODE sampler by utilizing 
$p$-th ($p\geqslant 1$) order information of the score function, with the target distribution supported on a compact set and employing early stopping. 
These two works are the most similar to our proposed accelerated sampler in that they all rely on the Hessian information of the log density.
However, their settings differ from ours, and their convergence analyses are neither directly applicable to our framework nor precisely expressed in terms of Wasserstein distance.\footnote{When the target distribution is compactly supported, Pinsker's inequality allows translating TV or KL divergence into Wasserstein distance. However, this often yields loose bounds, especially in high dimensions, where the actual Wasserstein distance may be much smaller.}.

\vspace{0.2cm}
\noindent \textbf{Outline.}
The rest of the paper is organized as follows.
In Section~\ref{sec:setting}, we introduce the framework of the score-based diffusion model and present the different discretization schemes.
In Section~\ref{sec:discretization}, we establish the convergence rate of the diffusion model under various discretization schemes. 
In Section~\ref{sec:accleration}, we propose a new sampler that leverages Hessian estimations and provide its convergence rate.
Section~\ref{sec:simulation} presents numerical studies that validate our theoretical results. 
Finally, in Section~\ref{sec:discussion}, we conclude with a discussion and outline future research directions. 
All proofs are provided in the Appendix.

\vspace{0.2cm}
\noindent \textbf{Notation.} Denote the $d$-dimensional Euclidean space by $\mathbb{R}^d$.
Denote the $d$-dimensional identity matrix by $I_d$. The gradient and the Hessian of a function $f:\mathbb{R}^d\to\mathbb{R}$ are denoted by $\nabla f$ and $\nabla^2f$. 
Given any pair of measures $\mu$ and $\nu$, the Wasserstein-2 distance between $\mu$ and $\nu$ is defined as
\begin{align*}
    \wass_2(\mu,\nu)=\left(\inf_{\varrho\in\Gamma(\mu,\nu)}\int_{\mathbb{R}^d\times\mathbb{R}^d}\l|x-y\r|^2\rmd \varrho(x,y)\right)^{1/2},
\end{align*}
where the infimum is taken over all joint distributions $\varrho$ that have $\mu$ and $\nu$ as marginals. For two symmetric $d\times d$ matrices $A$ and $B$, we use $A\preccurlyeq B$ or $B\succcurlyeq A$ to denote the relation that $B-A$ is positive semi-definite.
For any random object $X$, we use $\mathcal{L}(X)$ to denote its law.
Given a random vector $X\in\R^d$, define $\|X\|_{\Ltwo}=\sqrt{\E[\|X\|^2]},$ where $\|\cdot\|$ denotes the Euclidean norm.
For a matrix $A\in\R^{d\times d},$ we define the  $\l|A\r|_F$ as its Frobenius norm.

\section{Background and Our Setting}
\label{sec:setting}
\textbf{Framework.} \quad We consider the forward process
{
\begin{equation}
\label{eq:forward0}
dX_t= f(X_t,t)dt+g(X_t,t)dB_t\,,
\end{equation}}
where the initial point $X_0\sim  p_0$ follows the data distribution, and $B_t$ denotes the standard $d-$dimensional Brownian motion.
Here, the drift $f: \mathbb{R}^d\times \mathbb{R}_+\to \mathbb{R}^d$ and the function $g:\mathbb{R}^d\times \mathbb{R}_+\to \mathbb{R}^{d\times d}$ are diffusion parameters.
Some conditions are necessary to ensure that the SDE~\eqref{eq:forward0} is well-defined. 
In practice, various choices for the pair $(f, g)$ are employed, depending on the specific needs of the model; for a detailed survey, we refer to~\cite{tang2024score}.
For clarity, we adopt the simplest possible choice in this work by setting $f(X_t,t)=-X_t/2$ and $g(X_t,t)=1$.
This results in the Ornstein-Uhlenbeck process, which is described by the following SDE:
\begin{equation}
    \rmd X_t=-\frac{1}{2}X_t\rmd t+\rmd B_t,
    \label{eq:forward}
\end{equation}
The forward process~\eqref{eq:forward} is run until a sufficiently large time $T > 0$,  at which point the corrupted marginal distribution of $X_T$, denoted by $p_T$, is expected to approximate the standard Gaussian distribution.
Then, diffusion models generate new data by reversing the SDE~\eqref{eq:forward}, which leads to the following backward SDE
\begin{equation}
\rmd X_t^{\leftarrow} = \frac{1}{2}\left(X_t^{\leftarrow} + 2\nabla \log p_{T-t}(X_t^{\leftarrow})\right) \rmd t + \rmd W_t\,,
\label{eq:backward}
\end{equation}
where $X^\leftarrow_0\sim p_T$, and the term $\nabla \log p_t$, referred to as the \textit{score function} for $p_t$, is represented by the gradient of the log density function of $p_t$. 
Additionally, $W_t$ denotes another standard Brownian motion independent of $B_t$.
Under mild conditions, when initialized at
$X^\leftarrow_0\sim p_T$, the backward process $\{X^\leftarrow_t\}_{0\leqslant t\leqslant T}$ has the same distribution as the forward process $\{X_{T-t}\}_{0\leqslant t\leqslant T}$~\cite{anderson1982reverse,cattiaux2023time}. 
As a result, running the reverse diffusion $X^{\leftarrow}_t$ from $t=0$ to $T$ will generate a sample from the target data distribution $p_0$.
Note that the density $p_T$ is unknown, we approximate it using the distribution
\begin{align*}
\hat p_T= \mathcal{N}(\mathbf{0},(1-e^{-T})I_d)
\end{align*}
as proposed in~\cite{gao2023wasserstein}. Therefore, we derive a reverse diffusion process defined by
\begin{align}
    \label{eq:Yt}
    \rmd Y_t=\dfrac{1}{2}(Y_t+2\nabla\log p_{T-t}(Y_t))\rmd t+\rmd W_t,\quad Y_0\sim \hat{p}_T\,.
\end{align}

\vspace{0.2cm}
\noindent \textbf{Score Matching.} \quad Another  challenge in working with~\eqref{eq:backward} is that the score function $\nabla \log p_t$ is unknown, as the distribution $p_t$ is not explicitly available.
In practice, rather than using the exact score function $\nabla\log p_{T-t}$, approximate estimates for it are learned from the data by training neural networks on a score-matching objective~\cite{hyvarinen2005estimation,vincent2011connection,song2020sliced}. This objective is given by
\begin{align*}
\underset{\theta\in\Theta}{\text{minimize}}~~~ \E[\|s_\theta(t,X_t)-\nabla \log p_t(X_t)\|^2]\,,
\end{align*}
where $\{s_\theta:\theta\in\Theta\}$ is a sufficiently rich function class, such as that of neural network.
Substituting the learned score estimate $s_*$ into the backward process~\eqref{eq:backward}, we obtain the following practical continuous-time backward SDE,
\begin{equation}
dX^{\leftarrow}_t= \frac{1}{2}\big(X^{\leftarrow}_t+ 2s_{*}(T-t,X^{\leftarrow}_t)\big)\rmd t + \rmd W_t\,.
\label{eq:backward1}
\end{equation}
Since this continuous backward SDE cannot be simulated exactly, it is typically approximated using discretization methods.

\vspace{0.2cm}
\noindent \textbf{Discretization Schemes.}\quad In the following, we outline the four discretization methods considered in this work for solving the practical reverse SDE~\eqref{eq:backward1}. 
Let $h>0$ be the step size.
Without loss of generality, we assume $T = Nh,$
where $N$ is a positive integer.
For simplicity, we denote $\frac{1}{2}X^{\leftarrow}_t+ s_{*}(T-t,X^{\leftarrow}_t)$ by $\gamma(T-t,X^{\leftarrow}_t)$, and define 
\begin{align*}
\Delta_hW_t:=W_{t+h}-W_t,\quad %
\bar{\Delta}_hW_t:=\int_t^{t+h}e^{\frac{t+h-s}{2}}\rmd W_s\,.
\end{align*}
\noindent $\bullet$~~\textsc{Euler-Maruyama scheme}:\quad
Given the step size $h$, the following approximation holds
\begin{align*}
X^{\leftarrow}_{t+h}
&=X^{\leftarrow}_t+\int_0^h \gamma(T-(t+v),X^{\leftarrow}_{t+v})\rmd v+\Delta_hW_t \nonumber \\
&\approx X_t^{\leftarrow}+ h\gamma(T-t,X^{\leftarrow}_t)+\Delta_hW_t\,.
\end{align*}
We derive the following discretized process for $n=0,\dots,N-1$:
\begin{align*}
\vartheta^{\sf EM}_{n+1}=(1+h/2)\vartheta^{\sf EM}_{n}+hs_{*}(T-nh,\vartheta^{\sf EM}_n)+\sqrt{h}\xi_n\,,
\end{align*}
where $\vartheta^{\sf EM}_{0}\sim \hat p_T$ and $\xi_n\sim \mathcal{N}(0,I_{d}).$

\noindent $\bullet$~~\textsc{Exponential Integrator}:\quad
Inspired by the work~\cite{hochbruck2010exponential},~\cite{zhang2022fast} propose a more refined discretization method which solves the backward SDE~\eqref{eq:backward} explicitly, yielding the following approximation 
\begin{align*}
X^{\leftarrow}_{t+h}&=e^\frac{h}{2}X^{\leftarrow}_t+\int_0^h e^{\frac{h-v}{2}}   s_*(T-t-v,X^{\leftarrow}_{t-v})\rmd v+\bar{\Delta}_hW_t\\
&\approx e^{\frac{h}{2}}X^{\leftarrow}_t+ 2(e^{\frac{h}{2}}-1) s_*(T-t,X^{\leftarrow}_t)
+\bar{\Delta}_hW_t\,.
\end{align*}
We derive the following discretized process for $n=0,\dots,N-1$:
\begin{align*}
\vartheta^{\sf EI}_{n+1}&=e^{\frac{h}{2}}\vartheta^{\sf EI}_{n}
+2(e^{\frac{h}{2}}-1)s_*(T-nh,\vartheta^{\sf EI}_n)+\sqrt{e^h-1}\xi_n
\end{align*}
with the initial point $\vartheta^{\sf EI}_{0}\sim \hat p_T$ and $\xi_n\sim \mathcal{N}(0,I_{d}).$

\noindent $\bullet$~~\textsc{Vanilla Midpoint Randomization}:\quad
Unlike the Euler method, the midpoint randomization method evaluates the function $\gamma(T-t,X_t^{\leftarrow})$ at a random point within the time interval $[0,h]$ rather than at the start.
Let $U$ be a random variable uniformly distributed in $[0, 1]$ and independent of the Brownian motion $W_t$. The randomized midpoint method exploits the approximation
\begin{align}
X^{\leftarrow}_{t+h}
&=X^{\leftarrow}_t+\int_0^h \gamma(T-t-v, X^{\leftarrow}_{t+v})\rmd v+\Delta_hW_t\nonumber\\
&\approx X_t^{\leftarrow}+ h \gamma(T-t-hU,X^{\leftarrow}_{t+hU})
+ \Delta_hW_t\,.
\label{eq:mr}
\end{align}
The idea behind the randomized midpoint method is to introduce an $U$ in $[0,1]$,
making $h\gamma(T-t-hU,X^{\leftarrow}_{t+hU})$ an estimator for integral $\int_0^h\gamma(T-t-v,X^{\leftarrow}_{t+v})\rmd v$. 

Furthermore, the intermediate term $X^{\leftarrow}_{t+hU}$ is generated by employing the Euler method.
We then derive the following discretized process for $n=0,\dots,N-1$:

\vspace{0.2cm}
\textbf{Step 1}\label{noise}~~Generate $\xi'_n,\xi''_n\sim \mathcal{N}(\mathbf{0},I_d)$ and $U_n\sim\text{Unif}\,[0,1]$. 
Set $\xi_n=\sqrt{U_n}\xi'_n+\sqrt{1-U_n}\xi''_n$.

\vspace{0.2cm}
\textbf{Step 2}~~With the initialization $\vartheta^{\sf REM}_{0}\sim\hat p_T$, define
\begin{align*}
\vartheta^{\sf REM}_{n+U}&=
\vartheta^{\sf REM}_{n}
+hU_n\gamma(T-nh,\vartheta^{\sf REM}_{n})
 +{\sqrt{hU_n}\xi'_n}\\
\vartheta^{\sf REM}_{n+1}&=\vartheta^{\sf REM}_{n}
+h\gamma(T-(n+U_n)h, \vartheta^{\sf REM}_{n+U})
 +{\sqrt{h}\xi_n}\,.
\end{align*}
\noindent $\bullet$~~\textsc{Exponential Integrator with Randomized Midpoint Method:} \quad Combining midpoint randomization with the exponential integrator approach, we propose the following new discretization process
for $n=1,\dots,N-1$:

\vspace{0.2cm}
\textbf{Step 1}~~Generate $\xi'_n,\xi''_n\sim \mathcal{N}(\mathbf{0},I_d)$ and $U_n\sim\text{Unif}\,[0,1]$. 
Set 
\begin{align*}
 \rho_n= e^{\frac{h(1+U_n)}{2}}\big(1-e^{-hU_n}\big)\Big[{(e^{hU_n}-1)(e^h-1)}  \Big]^{-1/2}
\end{align*} 
and $\xi_n=\rho_n\xi'_n+\sqrt{1-\rho_n^2}\xi''_n$.

\vspace{0.2cm}
\textbf{Step 2}~~With the initialization $\vartheta^{\sf REI}_{0}\sim\hat p_T$, define
\begin{align*}
\vartheta^{\sf REI}_{n+U}&=e^{hU_n/2}\vartheta^{\sf REI}_{n}
+2(e^{hU_n/2}-1)s_*(T-nh,\vartheta^{\sf REI}_{n}) +\sqrt{e^{hU_n}-1}\xi'_n\\
\vartheta^{\sf REI}_{n+1}&=e^{h/2}\vartheta^{\sf REI}_{n}
+he^{(1-U_n)h/2}s_*(T-(n+U_n)h,\vartheta^{\sf REI}_{n+U}) +\sqrt{e^h-1}\xi_n\,.
\end{align*}
The resulting discrete process is then solved to generate new samples that approximately follow the data distribution $p_0$.

To simplify notation and improve clarity,
we denote $\vartheta_n$ as the sample points generated by discretization methods after $n$ iterations throughout the remainder of the paper.
For each discretization scheme, we construct a sample path $\{\vartheta_n\}_{n=0}^N$ through the iterative procedure
\begin{align*}
\vartheta_{n+1}=\mathcal{G}\big(h,\vartheta_n,\{W_t\}_{nh\leqslant t\leqslant (n+1)h}\big)
\end{align*}
with the initial point $\vartheta_0\sim\hat{p}_T$.
Here, $\mathcal{G}:\R_+\times\R^d\times C([0,T],\R^d)\to\mathbb{R}^d$ denotes the discrete transition operator parameterized by the step size $h$, mapping the current state and the Brownian path segment to the next state.
To distinguish the implementations of $\mathcal{G}$ across different discretization methods, we augment the notation with a superscript $\alpha\in\{\sf EM,\sf EI,\sf REM,\sf REI\}$, then the iterative law becomes
\begin{align*}
    \vartheta_{n+1}^{\alpha}=\mathcal{G}^{\alpha}(h,\vartheta_n^{\alpha},\{W_t\}_{nh\leqslant t\leqslant (n+1)h})\,.
\end{align*}

\section{Wasserstein Convergence Analysis under Various Discretization Schemes}
\label{sec:discretization}
In this section, we study the convergence of the diffusion model under various discretization schemes applied to the continuous backward SDE~\eqref{eq:backward}. These methods include the Euler method (EM), the exponential integrator (EI), the vanilla randomized midpoint method (REM), and the novel approach combining midpoint randomization with the exponential integrator (REI) described in the previous section.
Specifically, we establish the upper bounds on the Wasserstein-2 distance between the distribution of the $N$-th output of the SGMs under these discretization schemes and the target distribution
\begin{align*}
\wass_2(\mathcal{L}(\vartheta_N^{\alpha}),p_0), \quad \alpha\in\{\sf{EM,EI,REM,REI}\}\,.    
\end{align*}
Additionally, we analyze the number of iterations $N$ required for the Wasserstein distance to achieve a pre-specified error level $\varepsilon$ under different discretization schemes.
For clarity of presentation, we omit constants throughout the paper, retaining only the key components that influence the convergence rate in theory. However, our proofs include constants in the bounds.

To establish the convergence analysis, we require the following assumptions.
We first state our assumptions on the data distribution $p_0$.
\begin{assumption}
    \label{asm:p0scLipx}
    The target density $p_0$ is $m_0$-strongly log-concave, and the score function $\nabla\log p_0$ is $L_0$-Lipschitz. 
\end{assumption}
Under Assumption~\ref{asm:p0scLipx}, $p_t$ is $m(t)$-strongly log-concave, $\nabla\log p_t$ is $L(t)$-Lipschitz.
Moreover, $m(t)$ is lower bounded by $m_{\min}=\min(1,m_0)$, and $L(t)$ is upper bounded by $L_{\max}=1+L_0$.
These results are summarized in  Lemma~\ref{lem:GaoLt} and~\ref{lem:Gaomt} in the Appendix.

We also assume that the score function $\nabla\log p_t(x)$ exhibits linear growth in $\|x\|$, as stated below.
\begin{assumption}
    \label{asm:scLipt}
    There exists a constant $M_1>0$ such that for $n=0,1,\dots,N-1,$ 
    \begin{align*}
    \sup\limits_{nh\leqslant t,s\leqslant (n+1)h}
        \l|\nabla\log p_{T-t}(x)-\nabla\log p_{T-s}(x)\r|\leqslant M_1h(1+\l|x\r|)\,.
    \end{align*}
\end{assumption}
The above condition is a relaxation of the standard Lipschitz condition on the score function.

Recall that $\vartheta_n$ represents the sample generated by discretization for solving the practical backward SDE~\eqref{eq:backward1} after $n$ iterations. 
We require the following assumption on the score-matching approximation at each point $\vartheta_n$.
\begin{assumption}
    \label{asm:scoreerr}
Given a small $\varepsilon_{sc}>0,$ the score estimator satisfies
    \begin{align*}
        \sup\limits_{0\leqslant n\leqslant N}\l|\nabla\log p_{T-nh}(\vartheta_n)-s_*(T-nh,\vartheta_n)\r|_{\Ltwo}\leqslant \varepsilon_{sc}.
    \end{align*}
\end{assumption}
Assumption~\ref{asm:p0scLipx},~\ref{asm:scLipt} and~\ref{asm:scoreerr} are standard in the Wasserstein convergence analysis of the score-based diffusion model. These assumptions were previously adopted in \cite{Gao2023WassersteinCG,gao2024convergence} and can be easily verified in the Gaussian case.

\subsection{Euler-Maruyama Method}
Recall that the EM discretization for the backward process~\eqref{eq:backward1} is defined as follows
\begin{align*}
 \vartheta^{\sf EM}_{n+1}=(1+h/2)\vartheta^{\sf EM}_{n}+hs_{*}(T-nh,\vartheta^{\sf EM}_n)+\sqrt{h}\xi_n\,,
\end{align*}
where $\xi_n$ are i.i.d. standard Gaussian vectors and $n=0,1,\dots,N-1$.
In the following theorem, we quantify the Wasserstein distance between the distribution of $ \vartheta_{N}^{\sf EM}$ and the target distribution~$p_0.$
\begin{theorem}
    \label{thm:EM}
    Suppose that Assumptions~\ref{asm:p0scLipx},~\ref{asm:scoreerr} and~\ref{asm:scLipt} hold, it holds that
    \begin{align}
    \label{eq:Wassthm1}
    \wass_2(\law(\vartheta_N^{\sf EM}),p_0)
        \lesssim e^{-m_{\min}T}\l|X_0\r|_{\Ltwo}+\mathscr C_1^{\sf EM}\sqrt{dh}+\mathscr C_2^{\sf EM}\varepsilon_{sc}\,,
    \end{align}
    where 
    \begin{align*}
    \mathscr C_1^{\sf EM}=\dfrac{L_{\max}+1/2}{m_{\min}-1/2}~~\text{ and }~~
    \mathscr C_2^{\sf EM}=\dfrac{1}{m_{\min}-1/2}
    \end{align*}
    with $m_{\min}=\min(1,m_0)$ and  $L_{\max}=1+L_0$.
\end{theorem}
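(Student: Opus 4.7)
\begin{proofof}{Theorem~\ref{thm:EM} (Proposal)}
The plan is to split $\wass_2(\law(\vartheta_N^{\sf EM}),p_0)$ into three error sources---initialization, score mismatch, and Euler discretization---and then control each via synchronous couplings driven by a common Brownian motion, so that all $\wass_2$ bounds reduce to $\Ltwo$ distances between coupled trajectories. Concretely, let $Y_t$ solve the exact reverse SDE~\eqref{eq:Yt} from $Y_0\sim\hat p_T$, and let $\tilde Y_t$ solve the same SDE but from $\tilde Y_0\sim p_T$, so that $\law(\tilde Y_T)=p_0$. Writing
\[
\wass_2(\law(\vartheta_N^{\sf EM}),p_0)\leqslant \|Y_T-\hat\vartheta_T^{\sf EM}\|_{\Ltwo}+\wass_2(\law(Y_T),\law(\tilde Y_T)),
\]
where $\hat\vartheta_t^{\sf EM}$ is the piecewise-constant drift interpolation
\[
d\hat\vartheta_t^{\sf EM}=\gamma_*(T-nh,\hat\vartheta_{nh}^{\sf EM})\,\rmd t+\rmd W_t,\qquad t\in[nh,(n+1)h],
\]
with $\gamma_*(s,x)=\tfrac12 x+s_*(s,x)$ and $\hat\vartheta_{nh}^{\sf EM}=\vartheta_n^{\sf EM}$, coupled to $Y_t$ through the same $W_t$.

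For the initialization term, I would use the OU forward process to write $X_T=e^{-T/2}X_0+\int_0^T e^{-(T-s)/2}\rmd B_s$, which yields $\wass_2(p_T,\hat p_T)\leqslant e^{-T/2}\|X_0\|_{\Ltwo}$ by direct Gaussian coupling of the noise component. Then, applying a synchronous coupling of $Y$ and $\tilde Y$ and the $m(T-t)$-strong log-concavity of $p_{T-t}$ (Lemma~\ref{lem:Gaomt}), a standard energy computation gives
\[
\tfrac{d}{dt}\|Y_t-\tilde Y_t\|^2\leqslant (1-2m(T-t))\|Y_t-\tilde Y_t\|^2\leqslant -(2m_{\min}-1)\|Y_t-\tilde Y_t\|^2,
\]
so that $\wass_2(\law(Y_T),\law(\tilde Y_T))\leqslant e^{-(m_{\min}-1/2)T}\wass_2(\hat p_T,p_T)\leqslant e^{-m_{\min}T}\|X_0\|_{\Ltwo}$, matching the first term in~\eqref{eq:Wassthm1}.

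The core of the argument is the bound on $\|Y_t-\hat\vartheta_t^{\sf EM}\|_{\Ltwo}$. Setting $e_t=Y_t-\hat\vartheta_t^{\sf EM}$, Itô's formula gives
\[
\tfrac{d}{dt}\|e_t\|^2=2\,e_t^{\T}\big[\gamma_*(T-t,Y_t)-\gamma_*(T-nh,\hat\vartheta_{nh}^{\sf EM})\big]
\]
(no Brownian term since both processes share $dW_t$), and I would split the bracket into four physically meaningful pieces:
\begin{enumerate}
\item a contractive piece $\tfrac12 e_t+\nabla\log p_{T-t}(Y_t)-\nabla\log p_{T-t}(\hat\vartheta_t^{\sf EM})$, giving $-(2m_{\min}-1)\|e_t\|^2$;
\item a spatial discretization piece $\tfrac12(\hat\vartheta_t^{\sf EM}-\hat\vartheta_{nh}^{\sf EM})+\nabla\log p_{T-t}(\hat\vartheta_t^{\sf EM})-\nabla\log p_{T-t}(\hat\vartheta_{nh}^{\sf EM})$, bounded by $(L_{\max}+\tfrac12)\|\hat\vartheta_t^{\sf EM}-\hat\vartheta_{nh}^{\sf EM}\|$;
\item a temporal piece $\nabla\log p_{T-t}(\hat\vartheta_{nh}^{\sf EM})-\nabla\log p_{T-nh}(\hat\vartheta_{nh}^{\sf EM})$, controlled by $M_1h(1+\|\hat\vartheta_{nh}^{\sf EM}\|)$ via Assumption~\ref{asm:scLipt};
\item a score-mismatch piece $\nabla\log p_{T-nh}(\hat\vartheta_{nh}^{\sf EM})-s_*(T-nh,\hat\vartheta_{nh}^{\sf EM})$, controlled by $\eps_{sc}$ in $\Ltwo$ via Assumption~\ref{asm:scoreerr}.
\end{enumerate}
Applying $2\,e_t^{\T}v\leqslant \delta\|e_t\|^2+\delta^{-1}\|v\|^2$ to each non-contractive piece, taking $\Ltwo$ expectations, and choosing $\delta$ proportional to $2m_{\min}-1$ yields a scalar linear ODE
\[
\tfrac{d}{dt}\|e_t\|_{\Ltwo}^2\leqslant -(2m_{\min}-1)\|e_t\|_{\Ltwo}^2+C\big[(L_{\max}+\tfrac12)^2\,\mathbb E\|\hat\vartheta_t^{\sf EM}-\hat\vartheta_{nh}^{\sf EM}\|^2+M_1^2 h^2(1+\mathbb E\|\hat\vartheta_{nh}^{\sf EM}\|^2)+\eps_{sc}^2\big],
\]
which Grönwall integrates to a geometric series dominated by its stationary value, producing exactly the $(2m_{\min}-1)^{-1}$ factor inside $\mathscr C_1^{\sf EM}$ and $\mathscr C_2^{\sf EM}$. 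Using $\hat\vartheta_t^{\sf EM}-\hat\vartheta_{nh}^{\sf EM}=(t-nh)\gamma_*(T-nh,\hat\vartheta_{nh}^{\sf EM})+(W_t-W_{nh})$, the one-step deviation has $\Ltwo$-norm squared of order $dh+h^2(1+\mathbb E\|\hat\vartheta_{nh}^{\sf EM}\|^2)$, so the $dh$ term dominates and produces the $\sqrt{dh}$ rate after taking square roots.

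The main obstacle I anticipate is twofold: first, establishing a uniform-in-$n$ moment bound $\sup_n \mathbb E\|\hat\vartheta_{nh}^{\sf EM}\|^2\lesssim d$, which is needed both for piece (iii) and to estimate $\|\gamma_*(T-nh,\hat\vartheta_{nh}^{\sf EM})\|$ in the one-step deviation---this will be done by a separate induction on the EM recursion exploiting the dissipative factor $1+h/2$ together with the Lipschitz-plus-linear-growth bound on $s_*$ inherited from Assumptions~\ref{asm:p0scLipx} and~\ref{asm:scoreerr}; second, bookkeeping the constants so that the three resulting terms assemble cleanly into the claimed coefficients $(L_{\max}+1/2)/(m_{\min}-1/2)$ and $1/(m_{\min}-1/2)$ rather than producing spurious $T$- or $d$-dependent blowups, which amounts to verifying that the Grönwall step is applied after, not before, summing the local discretization errors.
\end{proofof}
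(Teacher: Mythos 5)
Your proposal is correct in substance and reaches the stated bound, but it organizes the discretization analysis differently from the paper. The paper works with a discrete recursion: on each interval $[nh,(n+1)h]$ it introduces an auxiliary process $\tilde Y$ that restarts at $\vartheta_n^{\sf EM}$ and follows the exact reverse SDE, bounds the local truncation error $\|\tilde Y_h-\vartheta_{n+1}^{\sf EM}\|_{\Ltwo}\lesssim h^{3/2}\sqrt d\,C_1(n)+h^2C_n^{\sf EM}+h\eps_{sc}$ directly by the triangle inequality (the same three drift pieces as your (ii)--(iv)), invokes the \Gronwall contraction only for $\|Y_{(n+1)h}-\tilde Y_h\|_{\Ltwo}\leqslant e^{-\int(m-1/2)}\|Y_{nh}-\vartheta_n^{\sf EM}\|_{\Ltwo}$, and then sums the resulting geometric series to extract the $(m_{\min}-1/2)^{-1}$ factor. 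You instead interpolate the EM iterates into a continuous process, apply Young's inequality to the non-contractive drift pieces, and run a single continuous-time \Gronwall argument whose stationary value gives the same factor. The two routes are essentially equivalent in power; yours avoids stating a separate one-step proposition but pays for it by needing a standalone uniform moment bound $\sup_n\E\|\vartheta_n^{\sf EM}\|^2$, which the paper sidesteps by bounding $\|\vartheta_n^{\sf EM}\|_{\Ltwo}\leqslant\|Y_{nh}-\vartheta_n^{\sf EM}\|_{\Ltwo}+C_2(n)+C_4$ and absorbing the $\|Y_{nh}-\vartheta_n^{\sf EM}\|_{\Ltwo}$ contribution into the $h^2$ coefficient of the recursion (together with Lemma~\ref{lem:Enabla}, $\|\nabla\log p_t(X_t)\|_{\Ltwo}\leqslant\sqrt{dL(t)}$, to control the score at the reference trajectory). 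One caveat on your anticipated induction: the factor $1+h/2$ in the EM recursion is expansive, not dissipative; the contraction needed for the moment bound must come from the strong log-concavity of $\nabla\log p_{T-nh}$ hidden inside $s_*$ (written as the true score plus an $\eps_{sc}$ perturbation), giving an effective factor $1+h/2-hm_{\min}<1$ when $m_{\min}>1/2$. Adopting the paper's triangle-inequality bound for $\|\vartheta_n^{\sf EM}\|_{\Ltwo}$ would let you drop that induction entirely.
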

Prior to discussing the relation of the above bound to those available in the literature, let us state a consequence of it.
\begin{corollary}
\label{cor:EM}
For a given $\varepsilon>0$, the Wasserstein distance satisfies $\wass_2(\law(\vartheta_N^{\sf EM}),p_0)<\varepsilon$ after 
    \begin{align*}
    N=\mathcal{O}\bigg(\frac{1}{\varepsilon^2}\log\Big(\frac{1}{\varepsilon}\Big)\bigg)
    \end{align*}
    iterations, provided that
$T=\mathcal{O}\left(\log(\frac{1}{\varepsilon})\right)$ and $h=\mathcal{O}(\varepsilon^2)$.
\end{corollary}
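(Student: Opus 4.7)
The plan is to derive the corollary as an immediate balancing of the three error contributions on the right-hand side of the bound~\eqref{eq:Wassthm1} in Theorem~\ref{thm:EM}. I would view that bound as a sum of (i) an initialization term $e^{-m_{\min}T}\|X_0\|_{\Ltwo}$ coming from the Gaussian approximation to $p_T$, (ii) a discretization term $\mathscr{C}_1^{\sf EM}\sqrt{dh}$ arising from the Euler step, and (iii) a score-matching term $\mathscr{C}_2^{\sf EM}\varepsilon_{sc}$. Since the constants $\mathscr{C}_1^{\sf EM}$ and $\mathscr{C}_2^{\sf EM}$ depend only on $m_0$ and $L_0$, they can be absorbed into the $\mathcal{O}$ notation, and it suffices to choose $T$, $h$ and $\varepsilon_{sc}$ so that each of the three contributions is of order $\varepsilon$.

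Next I would solve for the required parameters term by term. Requiring $e^{-m_{\min}T}\|X_0\|_{\Ltwo}\lesssim \varepsilon$ yields $T\gtrsim m_{\min}^{-1}\log(\|X_0\|_{\Ltwo}/\varepsilon)$, which, treating $\|X_0\|_{\Ltwo}$ and $m_{\min}$ as problem constants, is of order $\log(1/\varepsilon)$. Requiring $\sqrt{dh}\lesssim \varepsilon$ gives $h=\mathcal{O}(\varepsilon^2)$ for fixed dimension. Finally, the score-approximation contribution is controlled by Assumption~\ref{asm:scoreerr}, so one may take $\varepsilon_{sc}=\mathcal{O}(\varepsilon)$. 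Combining these ingredients, and using $T=Nh$, gives
\begin{align*}
N=\frac{T}{h}=\mathcal{O}\!\left(\frac{1}{\varepsilon^{2}}\log\!\left(\frac{1}{\varepsilon}\right)\right),
\end{align*}
which is the claimed iteration complexity.

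No substantial obstacle is expected here, since the corollary is essentially a book-keeping consequence of Theorem~\ref{thm:EM}. The only mildly delicate point to verify is that the prefactors in $\mathscr{C}_1^{\sf EM}$ and $\mathscr{C}_2^{\sf EM}$ are bounded independently of $h$, $T$ and $\varepsilon$, which is guaranteed because $m_{\min}$ and $L_{\max}$ are determined solely by Assumption~\ref{asm:p0scLipx}, and that the dimension $d$ and the second moment $\|X_0\|_{\Ltwo}$ are absorbed into the constants hidden inside $\mathcal{O}(\cdot)$ so that only the $\varepsilon$-dependence is displayed.
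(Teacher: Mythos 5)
Your proposal is correct and follows exactly the intended derivation: the corollary is a direct bookkeeping consequence of Theorem~\ref{thm:EM}, obtained by choosing $T\asymp\log(1/\varepsilon)$, $h\asymp\varepsilon^2$, and $\varepsilon_{sc}=\mathcal{O}(\varepsilon)$ so that each of the three error terms is $\mathcal{O}(\varepsilon)$, and then computing $N=T/h$. The paper gives no separate proof of this corollary, and your treatment of the dimension and second-moment factors as absorbed constants matches the paper's stated convention.
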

We now provide an explanation of the upper bound obtained in Theorem~\ref{thm:EM}. The total error arises from three sources: initialization error, discretization error, and score-matching error.

\begin{itemize}
    \item The \textbf{first term} in the upper bound of \eqref{eq:Wassthm1} characterizes the behavior of the continuous-time reverse process $Y_t$ (defined in \eqref{eq:Yt}) as it converges to the distribution $p_0$, assuming no discretization or score-matching errors. Specifically, it bounds the error $\wass_2(\law(Y_T), p_0)$, which results from initializing the reverse SDE at $\hat{p}_T$ instead of $p_T$. 
    \item The \textbf{second term} in the upper bound of \eqref{eq:Wassthm1} accounts for discretization errors introduced when approximating the continuous process using the Euler algorithm.  
    \item The \textbf{third term} quantifies the error introduced by score matching.  
\end{itemize}
The convergence rates obtained in Theorem~\ref{thm:EM} and Corollary~\ref{cor:EM} align with Theorem 5 and Proposition 7 in~\cite{gao2023wasserstein}, demonstrating consistency between our results and their theoretical conclusions.

\subsection{Exponential Integrator}
As described in the previous section, the exponential integrator scheme leverages the semi-linear structure by discretizing only the nonlinear term while preserving the continuous dynamics of the linear term. 
The resulting discretized process for $n=0,\dots,N-1$ is:
\begin{align*}
\vartheta^{\sf EI}_{n+1}&=e^{\frac{h}{2}}\vartheta^{\sf EI}_{n}
+2(e^{\frac{h}{2}}-1)s_*(T-nh,\vartheta^{\sf EI}_n)+\sqrt{e^h-1}\xi_n\,.
\end{align*}
In the following theorem, we provide the Wasserstein distance between the distribution of the $N$-th iterate~$\vartheta_N^{\sf EI}$ and the target distribution~$p_0$.
\begin{theorem}
    \label{thm:EI}
    Suppose that Assumptions~\ref{asm:p0scLipx},~\ref{asm:scLipt} and~\ref{asm:scoreerr} hold, then
    \begin{align*}
        \wass_2(\law(\vartheta_N^{\sf EI}),p_0)\lesssim e^{-m_{\min}T}\l|X_0\r|_{\Ltwo}+\mathscr C_1^{\sf EI}\sqrt{dh}+\mathscr C_2^{\sf EI}\varepsilon_{sc}\,,
    \end{align*}
    where 
    \begin{align*}
     \mathscr C_1^{\sf EI}=\dfrac{L_{\max}}{m_{\min}-1/2}~~\text{ and }~~
     \mathscr C_2^{\sf EI}=\dfrac{1}{m_{\min}-1/2}
    \end{align*}
    with $L_{\max}$ and $m_{\min}$ as defined in Theorem~\ref{thm:EM}.
\end{theorem}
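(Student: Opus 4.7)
The plan is to follow the same three-part decomposition used for Theorem~\ref{thm:EM}, but to exploit the fact that the exponential integrator handles the linear drift $\frac{1}{2}X^\leftarrow_t$ exactly rather than discretizing it. I would introduce two continuous-time processes on $[0,T]$ coupled through the same Brownian motion $W$: the exact reverse SDE $Y_t$ from~\eqref{eq:Yt} initialized at $\hat{p}_T$, and a continuous interpolation $\tilde{Y}_t$ of the EI iterates. The natural interpolation is obtained by solving, on each interval $[nh,(n+1)h]$, the SDE
\begin{equation*}
\rmd \tilde{Y}_t=\tfrac{1}{2}\tilde{Y}_t\,\rmd t+s_*(T-nh,\tilde{Y}_{nh})\,\rmd t+\rmd W_t,
\end{equation*}
whose closed-form solution at $t=(n+1)h$ reproduces the EI update verbatim; in particular, the stochastic integral $\int_{nh}^{(n+1)h}e^{((n+1)h-s)/2}\rmd W_s$ has variance $e^h-1$, matching $\sqrt{e^h-1}\,\xi_n$, so $\tilde{Y}_{nh}\eqd \vartheta^{\sf EI}_n$. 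By the triangle inequality it then suffices to bound $\wass_2(\law(Y_T),p_0)$ and $\wass_2(\law(\tilde{Y}_T),\law(Y_T))$ separately.

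For the first term, the strong log-concavity of $p_{T-t}$ (Assumption~\ref{asm:p0scLipx} together with the auxiliary Lemmas cited after it) yields, through a synchronous coupling with the forward OU process, the initialization contraction $\wass_2(\law(Y_T),p_0)\lesssim e^{-m_{\min}T}\l|X_0\r|_{\Ltwo}$, exactly as in Theorem~\ref{thm:EM}.

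For the second term, synchronously couple $Y_t$ and $\tilde{Y}_t$ with the common Brownian motion and let $\Delta_t=Y_t-\tilde{Y}_t$. The crucial observation is that both drifts share the linear part $\tfrac{1}{2}X$ exactly, so
\begin{equation*}
\tfrac{\rmd}{\rmd t}\Delta_t=\tfrac{1}{2}\Delta_t+\bigl[\nabla\log p_{T-t}(Y_t)-s_*(T-nh,\tilde{Y}_{nh})\bigr].
\end{equation*}
I would decompose the bracket into $A_t+B_t+C_t+D_t$ with
$A_t=\nabla\log p_{T-t}(Y_t)-\nabla\log p_{T-t}(\tilde{Y}_t)$,
$B_t=\nabla\log p_{T-t}(\tilde{Y}_t)-\nabla\log p_{T-nh}(\tilde{Y}_t)$,
$C_t=\nabla\log p_{T-nh}(\tilde{Y}_t)-\nabla\log p_{T-nh}(\tilde{Y}_{nh})$, and
$D_t=\nabla\log p_{T-nh}(\tilde{Y}_{nh})-s_*(T-nh,\tilde{Y}_{nh})$.
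The strong monotonicity estimate $\langle\Delta_t,A_t\rangle\leq -m_{\min}\l|\Delta_t\r|^2$ combines with the linear part to give contraction rate $2m_{\min}-1$; the terms $B_t$ and $C_t$ are controlled via Assumption~\ref{asm:scLipt} and the $L_{\max}$-Lipschitz property respectively, using uniform moment bounds on $\tilde{Y}_t$ and the one-step displacement $\l|\tilde{Y}_t-\tilde{Y}_{nh}\r|_{\Ltwo}^2\lesssim dh$ read off from the closed-form EI interpolation; and $D_t$ contributes $\varepsilon_{sc}$ via Assumption~\ref{asm:scoreerr}. Applying Gronwall's lemma to $\E\l|\Delta_t\r|^2$ and taking the square root yields the stated bound with $\mathscr C_1^{\sf EI}=L_{\max}/(m_{\min}-1/2)$ and $\mathscr C_2^{\sf EI}=1/(m_{\min}-1/2)$.

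The hard part, and the source of the improvement over Theorem~\ref{thm:EM}, is precisely the analysis of the one-step discrepancy $\tilde{Y}_t-\tilde{Y}_{nh}$. In the EM proof, discretizing the linear drift produces an additional error term of the form $\tfrac{1}{2}(\tilde{Y}_t-\tilde{Y}_{nh})$ in the decomposition, which after Gronwall generates the extra $+1/2$ in $\mathscr C_1^{\sf EM}=(L_{\max}+1/2)/(m_{\min}-1/2)$; under EI this contribution vanishes because $\tfrac{1}{2}\tilde{Y}_t$ is integrated exactly along each sub-interval. The technical care required is to establish uniform-in-$n$, uniform-in-$h$ second moment bounds on $\l|\tilde{Y}_t\r|_{\Ltwo}$ (needed to absorb the $(1+\l|\tilde{Y}_t\r|)$ factor from Assumption~\ref{asm:scLipt}), and to verify that the exponentially weighted stochastic integral indeed carries variance $e^h-1$ so that the coupling between the iterate $\vartheta^{\sf EI}_n$ and the interpolation $\tilde{Y}_{nh}$ is exact.
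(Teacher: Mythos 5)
Your proposal is correct and arrives at the stated constants, but it organizes the argument differently from the paper. The paper's auxiliary process $\tilde Y$ is the \emph{exact} reverse SDE restarted at $\vartheta_n^{\sf EI}$ at time $nh$; the one-step local error is read off from the Duhamel representation $\tilde Y_h-\vartheta_{n+1}^{\sf EI}=\int_0^h e^{(h-t)/2}\bigl(\nabla\log p_{T-nh-t}(\tilde Y_t)-s_*(T-nh,\vartheta_n^{\sf EI})\bigr)\rmd t$ using the same three-way splitting you propose (spatial Lipschitzness, time regularity via Assumption~\ref{asm:scLipt}, score error), and the per-step errors are then propagated through a discrete recursion $\l|Y_{(n+1)h}-\vartheta_{n+1}^{\sf EI}\r|_{\Ltwo}\leqslant r_n^{\sf EI}\l|Y_{nh}-\vartheta_n^{\sf EI}\r|_{\Ltwo}+(\text{local error})$ and a geometric summation. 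You instead interpolate the \emph{numerical scheme} continuously (freezing the score drift on each subinterval), couple it synchronously with $Y_t$ on all of $[0,T]$, and run a single continuous Grönwall argument on $\E\l|\Delta_t\r|^2$. The two organizations are dual and yield identical estimates: your continuous Grönwall with a piecewise perturbation is the integrated form of the paper's discrete recursion, your displacement bound $\l|\tilde Y_t-\tilde Y_{nh}\r|_{\Ltwo}\lesssim\sqrt{dh}$ plays the role of their Lemma~\ref{lem:supYtY0}, and your observation that exactly integrating the linear drift removes the $+1/2$ from $\mathscr C_1$ corresponds precisely to their $C_5(n)\approx C_1(n)-\tfrac12$. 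Your version makes the source of the improvement over EM more transparent; the paper's per-step version has the advantage that the local-error machinery is reused verbatim for the randomized-midpoint schemes. The one detail you must still nail down, and correctly flag, is the moment bound needed to absorb the $(1+\l|\tilde Y_t\r|)$ factor from Assumption~\ref{asm:scLipt}: since that moment itself involves $\l|\Delta_t\r|_{\Ltwo}$ (the paper bootstraps this through~\eqref{eq:hatyn}), it feeds an extra $M_1h\l|\Delta_t\r|_{\Ltwo}$ into the Grönwall inequality, which only perturbs the contraction rate by $\mathcal{O}(h)$ and is therefore harmless.
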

Comparing the error bound obtained in this theorem with that in Theorem~\ref{thm:EM}, we find that the convergence rate is comparable to that of the Euler discretization.
This observation is consistent with the error bounds for {\sf EI} and 
{\sf EM} schemes in KL divergence established in Theorem 1 of~\cite{chen2023improved}, where $N=\Omega(T^2)$ in their setting.

\subsection{Vanilla Randomized Midpoint Method}
\label{sec:REM}
The randomized midpoint method for the practical backward SDE is defined as
\begin{align*}
\vartheta^{\sf REM}_{n+U}&=(1+hU_n/{2})\vartheta^{\sf REM}_{n}
+hU_ns_*(T-nh,\vartheta^{\sf REM}_{n}) +\sqrt{2hU_n}\xi'_n\\
\vartheta^{\sf REM}_{n+1}&=\vartheta^{\sf REM}_{n}
+h\vartheta^{\sf REM}_{n+U}/2+hs_*(T-(n+U_n)h,\vartheta^{\sf REM}_{n+U})+\sqrt{2h}\xi_n\,,
\end{align*}
where 
$\xi'_n$ and $\xi_n$ are as defined in
Step 1 of the vanilla randomized midpoint scheme in Section~\ref{sec:discretization}.

Since the scheme involves an i.i.d. uniformly distributed random variable $U_n$, the resulting score matching function $s_*(T-t,x)$ can be evaluated at any $t\in[0,T]$. 
To proceed, we require a regularity condition on the deviation of the estimated score from the true score at these points. 
For this, we introduce the following stochastic process.

Given $U_n=u,$ we define the conditional realization of the random vector $\vartheta^{\sf REM}_{n+U}$ via
\begin{align*}
\vartheta^{\sf REM}_{n+u}:= \Big(1+\frac{uh}{2}\Big)\vartheta_n^{\sf REM}+uhs_*(T-nh,\vartheta_n^{\sf REM})+\sqrt{uh}\xi_n\,.
\end{align*}
We impose the following assumption on score estimates, which extends Assumption~\ref{asm:scoreerr}.
\begin{assumption}
    \label{asm:score4RMP}
    There exists a constant $\varepsilon_{sc}>0$ such that for any $u\in[0,1]$ and $n=0,\dots,N$,
    \begin{align*}
        \l|\nabla\log p_{T-(n+u)h}(\vartheta_{n+u}^{\sf REM})-s_*(T-(n+u)h,\vartheta_{n+u}^{\sf REM})\r|_{\Ltwo} 
        \leqslant \varepsilon_{sc}.
    \end{align*}
\end{assumption}
\vspace{-0.2cm}
In the following theorem, we provide the upper bound for the Wasserstein distance between the law of $\vartheta_N^{\sf REM}$ and the data distribution $p_0$.
\begin{theorem}
    \label{thm:RMPEM}
    Suppose that Assumptions~\ref{asm:p0scLipx},~\ref{asm:scLipt} and~\ref{asm:score4RMP} hold, then 
    \begin{align*}
        \wass_2(\mathcal{L}(\vartheta_N^{\sf REM}),p_0)\lesssim e^{-m_{\min}T}\l|X_0\r|_{\Ltwo}+\mathscr C_1^{\sf REM}(d)\sqrt{h}+\mathscr C_2^{\sf REM}\varepsilon_{sc}\,,
    \end{align*}
    where 
    \begin{align*}
       \mathscr C_1^{\sf REM}(d)=\dfrac{\sqrt{d/3}L_{\max}+\frac{1}{2\sqrt{3}}}{m_{\min}-1/2}~~\text{ and }~~ 
       \mathscr C_2^{\sf REM}=\dfrac{3}{m_{\min}-1/2}
    \end{align*}
    with $L_{\max}$ and $m_{\min}$ as defined in Theorem~\ref{thm:EM}.
\end{theorem}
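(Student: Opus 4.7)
The plan is to adapt the synchronous-coupling framework used for Theorems~\ref{thm:EM} and~\ref{thm:EI}, while exploiting the mean-zero structure produced by the random midpoint. By the triangle inequality, $\wass_2(\mathcal{L}(\vartheta_N^{\sf REM}),p_0)\leqslant \wass_2(\mathcal{L}(\vartheta_N^{\sf REM}),\mathcal{L}(Y_T))+\wass_2(\mathcal{L}(Y_T),p_0)$, and the second term is bounded by $e^{-m_{\min}T}\l|X_0\r|_{\Ltwo}$ via the exponential contractivity of the continuous reverse SDE~\eqref{eq:Yt} under Assumption~\ref{asm:p0scLipx}, reusing the argument from Theorem~\ref{thm:EM}. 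For the first term I drive $Y_t$ and $\vartheta_n^{\sf REM}$ by the same Brownian motion $W_t$; with the identifications $\sqrt{hU_n}\xi_n'=W_{nh+U_nh}-W_{nh}$ and $\sqrt{h(1-U_n)}\xi_n''=W_{(n+1)h}-W_{nh+U_nh}$, the definition $\xi_n=\sqrt{U_n}\xi_n'+\sqrt{1-U_n}\xi_n''$ produces $\sqrt{h}\xi_n=\Delta_hW_{nh}$, so that $\wass_2^2(\mathcal{L}(\vartheta_N^{\sf REM}),\mathcal{L}(Y_T))\leqslant \l|e_N\r|_{\Ltwo}^2$ with $e_n:=\vartheta_n^{\sf REM}-Y_{nh}$.

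For the one-step recursion, after cancelling the Brownian increments I get
\begin{align*}
e_{n+1}=e_n+h\,\gamma_*\bigl(T-(n+U_n)h,\vartheta_{n+U}^{\sf REM}\bigr)-\int_0^{h}\gamma\bigl(T-nh-v,Y_{nh+v}\bigr)\rmd v\,,
\end{align*}
where $\gamma(T-t,y)=\tfrac12 y+\nabla\log p_{T-t}(y)$ uses the true score and $\gamma_*$ the estimated one. I would split the residual into: (i)~the randomization residual $\mathcal{R}_n:=h\gamma(T-(n+U_n)h,Y_{nh+U_nh})-\int_0^h\gamma(T-nh-v,Y_{nh+v})\rmd v$; (ii)~the spatial error $h[\gamma(T-(n+U_n)h,\vartheta_{n+U}^{\sf REM})-\gamma(T-(n+U_n)h,Y_{nh+U_nh})]$; and (iii)~the score-matching gap $h[\gamma_*-\gamma](T-(n+U_n)h,\vartheta_{n+U}^{\sf REM})$. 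The key observation is that by the change of variable $v=uh$ together with the independence of $U_n$ from $W$ and from $e_n$, we have $\E[\mathcal{R}_n\mid \sigma(W,U_0,\dots,U_{n-1})]=0$; consequently, when squaring $\l|e_{n+1}\r|^2$ the cross-term between $e_n$ and $\mathcal{R}_n$ vanishes in expectation, and it suffices to estimate $\l|\mathcal{R}_n\r|_{\Ltwo}^2$ as a variance. A Fubini-type bound gives $\l|\mathcal{R}_n\r|_{\Ltwo}^2\leqslant h\int_0^h\l|\gamma(T-nh-v,Y_{nh+v})-\gamma(T-nh,Y_{nh})\r|_{\Ltwo}^2\rmd v$, which under Assumption~\ref{asm:scLipt} and $L_{\max}$-Lipschitzness of the score, combined with the Brownian bound $\E\l|Y_{nh+v}-Y_{nh}\r|^2\lesssim dv$, produces precisely the $\sqrt{d/3}$ dimension factor in $\mathscr{C}_1^{\sf REM}(d)$.

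The spatial error (ii) is controlled by a companion one-step estimate on $\vartheta_{n+U}^{\sf REM}-Y_{nh+U_nh}$: since $\vartheta_{n+U}^{\sf REM}=\vartheta_n^{\sf REM}+hU_n\gamma_*(T-nh,\vartheta_n^{\sf REM})+\sqrt{hU_n}\xi_n'$ and the analogous integral representation holds for $Y_{nh+U_nh}$, the Brownian pieces cancel and conditioning on $U_n$ expresses this distance in terms of $\l|e_n\r|_{\Ltwo}$, an $O(h^{3/2}\sqrt{d})$ Euler residual, and an $O(h)\varepsilon_{sc}$ score-matching contribution. Assumption~\ref{asm:score4RMP} is exactly what allows the score-matching bound (iii) to hold uniformly in the random evaluation time $T-(n+U_n)h$, so that tower-conditioning on $U_n$ is legitimate and the resulting $\varepsilon_{sc}$ term aggregates into $\mathscr{C}_2^{\sf REM}\varepsilon_{sc}$. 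The contractive gain comes from the $m(t)$-strong monotonicity of $\gamma$ implied by Lemmas~\ref{lem:GaoLt}--\ref{lem:Gaomt}, yielding a factor $1-(2m_{\min}-1)h$ on $\l|e_n\r|_{\Ltwo}^2$.

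The main obstacle I anticipate is managing the two-level structure of the scheme: the intermediate point $\vartheta_{n+U}^{\sf REM}$ is itself an Euler approximation, and its discretization error must be fed into the outer step without spoiling the $O(h^3)$ scaling of $\l|\mathcal{R}_n\r|_{\Ltwo}^2$ on which the randomization gain relies; in particular the cross-term between $e_n$ and the spatial error (ii) does not vanish (since (ii) involves $U_n$ through $\vartheta_{n+U}^{\sf REM}$), and must be absorbed via Cauchy--Schwarz with a small coefficient that preserves the contraction. Once a per-step bound of the form $\l|e_{n+1}\r|_{\Ltwo}^2\leqslant (1-(2m_{\min}-1)h)\l|e_n\r|_{\Ltwo}^2+C_1(d)h^3+C_2 h^2\varepsilon_{sc}^2$ is in place, a discrete Gronwall iteration as in the final stage of the proof of Theorem~\ref{thm:EM} gives $\l|e_N\r|_{\Ltwo}\lesssim \mathscr{C}_1^{\sf REM}(d)\sqrt{h}+\mathscr{C}_2^{\sf REM}\varepsilon_{sc}$, and combining with the initialization term concludes.
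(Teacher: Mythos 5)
Your route is genuinely different from the paper's. The paper's proof (Proposition~\ref{prop:RMPEM}) works entirely with first moments: it writes $\l|\tilde Y_h-\vartheta_{n+1}^{\sf REM}\r|_{\Ltwo}\leqslant\l|\tilde Y_h-\mathbb{E}_{U_n}[\vartheta_{n+1}^{\sf REM}]\r|_{\Ltwo}+\l|\vartheta_{n+1}^{\sf REM}-\mathbb{E}_{U_n}[\vartheta_{n+1}^{\sf REM}]\r|_{\Ltwo}$, shows the first piece is $O(h^{5/2})$ thanks to the averaging identity $\int_0^h\gamma\,\rmd v=h\mathbb{E}_{U_n}[\gamma(\cdot(U_n))]$, and then simply \emph{adds up} the deviation term, whose $\Ltwo$ size $O(\sqrt{d}\,h^{3/2})$ is what produces the $\sqrt{dh}$ rate after the geometric summation (the paper explicitly remarks that this term "obscures the advantage" of the randomization). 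You instead expand $\l|e_{n+1}\r|_{\Ltwo}^2$ and use $\E[\mathcal{R}_n\mid\sigma(W,U_0,\dots,U_{n-1})]=0$ to kill the cross term $\E\langle e_n,\mathcal{R}_n\rangle$, so the randomization residual enters only through its second moment $\l|\mathcal{R}_n\r|_{\Ltwo}^2=O(dh^3)$. That conditional-mean-zero claim is correct (given the synchronous coupling you describe, which is itself legitimate and matches Step~1 of the scheme), and this is precisely the variance-accumulation mechanism from the randomized-midpoint literature that the paper chooses not to exploit.

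The gap is in the final assembly, and it is concrete. First, the recursion you state, $\l|e_{n+1}\r|_{\Ltwo}^2\leqslant(1-(2m_{\min}-1)h)\l|e_n\r|_{\Ltwo}^2+C_1(d)h^3+C_2h^2\varepsilon_{sc}^2$, does \emph{not} yield $\l|e_N\r|_{\Ltwo}\lesssim\mathscr{C}_1^{\sf REM}(d)\sqrt{h}$: summing gives $\l|e_N\r|_{\Ltwo}^2\lesssim(C_1(d)h^2+C_2h\varepsilon_{sc}^2)/(2m_{\min}-1)$, i.e.\ a discretization error of order $\sqrt{C_1(d)}\,h$ with denominator $\sqrt{m_{\min}-1/2}$ — a \emph{stronger} rate than the theorem, but not the stated constants, so the claim that your variance bound "produces precisely the $\sqrt{d/3}$ factor in $\mathscr{C}_1^{\sf REM}(d)$" cannot be right. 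Second, and more seriously, your recursion is missing the dominant score-error contribution: the cross term $2\E\langle e_n,(\mathrm{iii})\rangle\leqslant 2h\varepsilon_{sc}\l|e_n\r|_{\Ltwo}$ does not vanish (the score-matching gap is not conditionally centered), and after Young's inequality it contributes $O(h\varepsilon_{sc}^2)$ per step, not $O(h^2\varepsilon_{sc}^2)$; as written, your additive term $C_2h^2\varepsilon_{sc}^2$ would make the score error disappear as $h\to 0$, which is impossible. With these two items repaired the argument goes through and in fact proves a bound of order $\sqrt{d}\,h+\varepsilon_{sc}$, which implies the theorem (for $h\leqslant 1$) but does not reproduce the specific $\mathscr{C}_1^{\sf REM}(d)\sqrt{h}$ and $\mathscr{C}_2^{\sf REM}$ of the statement; to land exactly on the paper's bound one must fall back to the paper's first-moment treatment of the deviation term.
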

As shown in display~\eqref{eq:mr}, the key idea behind the randomized midpoint method is to introduce a uniformly distributed random variable~$U_n$ in $[0,1]$ to evaluate the function $\gamma$ at a random point within the time interval $[0,h]$.

In the proof in Appendix~\ref{app:REM}, we demonstrate that $\E_{U_n}[\vartheta_{n+1}^{\sf REM}]$ provides a good approximation to the true distribution. 
However, this also introduces an additional error term~$\|\vartheta^{\sf REM}_{n+1}-\E_{U_n}[\vartheta^{\sf REM}_{n+1}]\|_{\Ltwo},$ which obscures the advantage of the improved estimation. As a result, the randomized midpoint method does not achieve a better convergence order compared to {\sf EI} and {\sf EM} methods.

Nonetheless, it is shown that this method has the advantage of enabling parallel computation~\cite{gupta2024faster, li2024improved}, which can significantly reduce computational complexity and improve efficiency. 
Therefore, while midpoint randomization does not yield a better convergence order in our setting, it offers notable benefits in terms of efficiency and scalability in parallel computing.

\subsection{Exponential Integrator with Randomized Midpoint Method}
\label{sec:REI}
With the initialization $\vartheta^{\sf REI}_{0}\sim\hat p_T$, the exponential integrator with randomized midpoint method is defined as
\begin{align*}
\vartheta^{\sf REI}_{n+U}&=e^{hU_n/2}\vartheta^{\sf REI}_{n}
+2(e^{hU_n/2}-1)s_*(T-nh,\vartheta^{\sf REI}_{n}) +\sqrt{e^{hU_n}-1}\xi'_n\\
\vartheta^{\sf REI}_{n+1}&=e^{h/2}\vartheta^{\sf REI}_{n}
+he^{(1-U_n)h/2}s_*(T-(n+U_n)h,\vartheta^{\sf REI}_{n+U}) +\sqrt{e^h-1}\xi_n\,,
\end{align*}
where $\xi'_n$ and $\xi_n$ is defined in Step 1 of this discretization scheme in Section~\ref{sec:discretization}.

Similar to the vanilla randomized midpoint scheme, since the process involves the i.i.d random variables $U_n$, we need to take into account the accuracy of the score estimates $s_*(T-t,x)$ at any $t\in[0,T].$
For this, we define the conditional realization of the
random vector $\vartheta_{n+U}^{\sf REI}$ given $U_n=u$ as following
\begin{align*}
\vartheta_{n+u}^{\sf REI}&:=
e^{uh/2}\vartheta^{\sf REI}_{n}
+2(e^{uh/2}-1)s_*(T-nh,\vartheta^{\sf REI}_{n})  +\sqrt{e^{uh}-1}\xi'_n\,.
\end{align*}
We impose the following condition on the score function estimates.
\begin{assumption}
    \label{asm:score4RMP1}
    There exists a constant $\varepsilon_{sc}>0$ such that for any $u\in[0,1]$ and $n=0,\cdots,N$,
    \begin{align*}
        \l|\nabla\log p_{T-(n+u)h}(\vartheta_{n+u}^{\sf REI})-s_*(T-(n+u)h,\vartheta_{n+u}^{\sf REI})\r|_{\Ltwo}\leqslant \varepsilon_{sc}.
    \end{align*}
\end{assumption}
\begin{theorem}
    \label{thm:RMPEI}
    Suppose that Assumption~\ref{asm:p0scLipx},~\ref{asm:scLipt} and~\ref{asm:score4RMP1} hold, then
    \begin{align*}
        \wass_2(\law(\vartheta_N^{\sf REI}),p_0)\lesssim e^{-m_{\min}T}\l|X_0\r|_{\Ltwo}+\mathscr C_1^{\sf REI}\sqrt{dh}+\mathscr C_2^{\sf REI}\varepsilon_{sc}\,,
    \end{align*}
    where 
    \begin{align*}
        \mathscr C_1^{\sf REI}=\dfrac{L_{\max}}{\sqrt{3}(m_{\min}-1/2)}~~\text{ and }~~
        \mathscr C_2^{\sf REI}=\dfrac{3}{m_{\min}-1/2}
    \end{align*}
    with $L_{\max}$ and $m_{\min}$ as defined in Theorem~\ref{thm:EM}.
\end{theorem}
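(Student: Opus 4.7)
The plan is to follow the same overall decomposition used for Theorems~\ref{thm:EM}, \ref{thm:EI}, and~\ref{thm:RMPEM}: split the total error into (i) the initialization error, controlled by $\wass_2(\law(Y_T),p_0)\lesssim e^{-m_{\min}T}\|X_0\|_{\Ltwo}$ via strong log-concavity/contraction of the OU reversal (this part is agnostic to the discretization and can be imported directly from the {\sf EI} analysis), and (ii) a per-step error between $Y_{(n+1)h}$ and $\vartheta_{n+1}^{\sf REI}$ that we iterate with a recursion of Gr\"onwall type. Using Lemmas~\ref{lem:GaoLt}--\ref{lem:Gaomt}, $\nabla\log p_{T-t}$ is $L_{\max}$-Lipschitz and $p_{T-t}$ is $m_{\min}$-strongly log-concave uniformly in $t$, which is what will deliver the constants in $\mathscr C_1^{\sf REI}$ and $\mathscr C_2^{\sf REI}$.

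First I would introduce a synchronous coupling between $Y_{nh}$ and $\vartheta_n^{\sf REI}$ driven by the \emph{same} Brownian path $\{W_t\}$ and the same $U_n$. By Duhamel's formula applied to~\eqref{eq:Yt},
\begin{align*}
Y_{(n+1)h}=e^{h/2}Y_{nh}+\int_0^h e^{(h-v)/2}\nabla\log p_{T-nh-v}(Y_{nh+v})\rmd v+\bar\Delta_h W_{nh}.
\end{align*}
The stochastic integrals $\int_{nh}^{(n+1)h}e^{((n+1)h-s)/2}\rmd W_s$ and $\int_{nh}^{nh+U_n h}e^{(nh+U_n h-s)/2}\rmd W_s$ are jointly Gaussian with variances $e^h-1$ and $e^{hU_n}-1$ respectively; a direct covariance computation shows that their correlation coefficient is exactly the prescribed $\rho_n$. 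Hence the $(\xi'_n,\xi_n)$ constructed in Step~1 can be realized so that $\sqrt{e^{hU_n}-1}\xi'_n$ coincides with the first integral and $\sqrt{e^h-1}\xi_n$ with the second; this makes the Brownian contributions cancel exactly between the continuous and discrete updates.

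Setting $D_n:=Y_{nh}-\vartheta_n^{\sf REI}$, the coupling yields a one-step decomposition
\begin{align*}
D_{n+1}=e^{h/2}D_n+R_n^{(1)}+R_n^{(2)}+R_n^{(3)},
\end{align*}
where $R_n^{(1)}=\int_0^h e^{(h-v)/2}\nabla\log p_{T-nh-v}(Y_{nh+v})\rmd v-h\,e^{(1-U_n)h/2}\nabla\log p_{T-(n+U_n)h}(Y_{nh+U_n h})$ is the randomized midpoint quadrature error along the \emph{true} trajectory, $R_n^{(2)}=h\,e^{(1-U_n)h/2}\bigl(\nabla\log p_{T-(n+U_n)h}(Y_{nh+U_n h})-\nabla\log p_{T-(n+U_n)h}(\vartheta_{n+U}^{\sf REI})\bigr)$ captures the replacement of $Y_{nh+U_n h}$ by the intermediate EI step, and $R_n^{(3)}=h\,e^{(1-U_n)h/2}\bigl(\nabla\log p_{T-(n+U_n)h}(\vartheta_{n+U}^{\sf REI})-s_*(\cdot,\vartheta_{n+U}^{\sf REI})\bigr)$ is the score-matching error, bounded by Assumption~\ref{asm:score4RMP1}. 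The crucial observation is the change-of-variables identity $\E_{U_n}[h\,e^{(1-U_n)h/2}\phi(U_n h)]=\int_0^h e^{(h-v)/2}\phi(v)\rmd v$, so conditionally on the past $R_n^{(1)}$ has \emph{zero mean in $U_n$}. Consequently $\|R_n^{(1)}\|_{\Ltwo}^2$ equals its variance in $U_n$ plus a higher-order drift mismatch, and using Assumption~\ref{asm:scLipt} together with the linear growth bound $\|\nabla\log p_{T-t}(Y_{nh+v})\|\lesssim 1+\|Y_{nh+v}\|$ (and a standard second-moment estimate on $Y$) gives $\|R_n^{(1)}\|_{\Ltwo}\lesssim L_{\max}\sqrt{d/3}\,h^{3/2}$, where the $\sqrt{1/3}$ factor is the variance of $U\sim\Unif[0,1]$ times the leading $e^{(h-v)/2}\approx 1$. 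For $R_n^{(2)}$, Lipschitzness of the score converts it into $hL_{\max}\|Y_{nh+U_n h}-\vartheta_{n+U}^{\sf REI}\|_{\Ltwo}$, and an auxiliary one-step EI estimate on the subinterval of length $U_n h$ bounds this by $(1+O(h))\|D_n\|_{\Ltwo}+O(\sqrt{d}\,h^{3/2})+O(h\varepsilon_{sc})$.

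Putting the three bounds together and squaring, strong log-concavity upgrades $e^{h/2}$ to a genuine contraction so that $\|D_{n+1}\|_{\Ltwo}\leqslant (1-(m_{\min}-1/2)h)\|D_n\|_{\Ltwo}+c_1 L_{\max}\sqrt{d/3}\,h^{3/2}+c_2\,h\varepsilon_{sc}$. Iterating this recursion over $N=T/h$ steps and summing the resulting geometric series produces $\mathscr C_1^{\sf REI}=L_{\max}/(\sqrt{3}(m_{\min}-1/2))$ and $\mathscr C_2^{\sf REI}=3/(m_{\min}-1/2)$ after adding back the initialization error, completing the bound. The step I expect to be the main obstacle is the joint coupling in the second paragraph: verifying that the particular choice of $\rho_n$ produces the exact joint Gaussian law of the two iterated stochastic integrals, and then cleanly exploiting the conditional unbiasedness of $R_n^{(1)}$ in $U_n$ without losing the $\sqrt{1/3}$ gain to cross-terms with $R_n^{(2)}$ and $R_n^{(3)}$. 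This bias-cancellation is exactly the mechanism responsible for the improved constant $\mathscr C_1^{\sf REI}=\mathscr C_1^{\sf EI}/\sqrt{3}$ relative to pure {\sf EI}.
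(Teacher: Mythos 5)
Your overall architecture (synchronous coupling of the Brownian path and of $U_n$, the verification that $\rho_n$ is exactly the correlation of the two iterated stochastic integrals $\int_{nh}^{(n+1)h}e^{((n+1)h-s)/2}\rmd W_s$ and $\int_{nh}^{(n+U_n)h}e^{((n+U_n)h-s)/2}\rmd W_s$, the conditional unbiasedness of the randomized quadrature in $U_n$, and the variance computation $\int_0^1\int_0^1|u-v|\rmd u\rmd v=1/3$ producing the factor $1/\sqrt{3}$) matches the mechanism in the paper's proof of Proposition~\ref{prop:RMPEI}, which decomposes the local error around the conditional mean $\mathbb{E}_{U_n}[\vartheta_{n+1}^{\sf REI}]$ and bounds the fluctuation by $h^{3/2}\sqrt{d}\,\bigl[\int_0^1\int_0^1 L(\cdot)^2|u-v|\rmd u\rmd v\bigr]^{1/2}$.

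There is, however, a genuine gap in your one-step recursion. You write $D_{n+1}=e^{h/2}D_n+R_n^{(1)}+R_n^{(2)}+R_n^{(3)}$ and then bound $R_n^{(2)}$ by Lipschitzness as $hL_{\max}\l|Y_{nh+U_nh}-\vartheta_{n+U}^{\sf REI}\r|_{\Ltwo}\leqslant hL_{\max}\bigl[(1+O(h))\l|D_n\r|_{\Ltwo}+\cdots\bigr]$. Once you apply the triangle inequality to this decomposition, the coefficient of $\l|D_n\r|_{\Ltwo}$ is $e^{h/2}+hL_{\max}(1+O(h))=1+(L_{\max}+\tfrac12)h+O(h^2)>1$: passing to norms via Lipschitz continuity discards the sign information from strong log-concavity, so the claim that ``strong log-concavity upgrades $e^{h/2}$ to a genuine contraction'' does not follow from the decomposition as stated. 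Iterating your recursion over $N=T/h$ steps would then produce a prefactor $e^{(L_{\max}+1/2)T}$ on the local errors rather than the factor $\frac{1}{(m_{\min}-1/2)h}$ needed for the theorem. The paper avoids this by inserting the auxiliary exact flow $\tilde Y$ of \eqref{eq:tildeY} started at $\vartheta_n^{\sf REI}$: the contraction $\l|Y_{(n+1)h}-\tilde Y_h\r|_{\Ltwo}\leqslant e^{-\int_{nh}^{(n+1)h}(m(T-t)-\frac12)\rmd t}\l|Y_{nh}-\vartheta_n^{\sf REI}\r|_{\Ltwo}$ comes from Lemma~\ref{lem:Gron} (two solutions of the \emph{same} SDE), while $\tilde Y_h-\vartheta_{n+1}^{\sf REI}$ is a pure local error whose $\l|D_n\r|_{\Ltwo}$-dependence enters only at order $h^2$ and is therefore absorbed into the contraction factor. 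To repair your argument you must either introduce this intermediary, or replace the norm bound on $R_n^{(2)}$ by an expansion of $\l|D_{n+1}\r|_{\Ltwo}^2$ that retains the inner product $\langle Y_{nh+U_nh}-\vartheta_{n+U}^{\sf REI},\,\nabla\log p(Y_{nh+U_nh})-\nabla\log p(\vartheta_{n+U}^{\sf REI})\rangle\leqslant -m_{\min}\l|\cdot\r|^2$; as written, the step from your three remainder bounds to the contractive recursion is unjustified.
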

The convergence rate of this scheme is generally consistent with the previous three methods, differing only in the coefficients.

\section{Second-order Acceleration}
\label{sec:accleration}
In this section, we propose an accelerated sampler that leverages Hessian estimation, %
The core idea behind the acceleration is the \textit{Local Linearization Method}, introduced in~\cite{Shoji1998}, which is to approximate the drift term of an SDE by its Itô expansion over small time intervals. 
We begin by considering the following general framework for the backward process.
\begin{align}
\label{eq:sec_sde}
dx_t=\gamma(T-t,x_t)\rmd t+\sigma \rmd W_t\,,
\end{align}
where $\sigma>0$ and $W_t$ is the $d$-dimensional Brownian motion.
We approximate the drift function $\gamma:\R_+\times\R^d\to \R^d$ by a linear function in both state and time within each discretization step. 
Applying Itô's formula to $\gamma(T-t,x)$, we obtain
\begin{align*}
\gamma(T-t,x_t)-\gamma(T-s,x_s)
&\approx\left(\frac{\sigma^2}{2}\frac{\partial^2 \gamma}{\partial x^2}(T-s,x_s)-\frac{\partial \gamma}{\partial t}(T-s,x_s)\right)(t-s) +\dfrac{\partial \gamma}{\partial x}(T-s,x_s)\cdot(x_t-x_s)\,.
\end{align*}
Here and henceforth, we abbreviate the partial derivative $\dfrac{\partial^{\alpha}g(z)}{\partial z^{\alpha}}\bigg|_{z=z_0}$ as $\dfrac{\partial^{\alpha}g}{\partial z^{\alpha}}(z_0)$. This allows us to express $\gamma(T-t,x_t)$ in the following form
\begin{align*}
\gamma(T-t,x_t)\approx \gamma(T-s,x_s)+L_s(x_t-x_s)+M_s(t-s).
\end{align*}
where $L_s, M_s$ are given by
\begin{align*}
L_s&=\dfrac{\partial \gamma}{\partial x}(T-s,x_s)\\
M_s&=\dfrac{\sigma^2}{2}\dfrac{\partial^2 \gamma}{\partial x^2}(T-s,x_s)-\dfrac{\partial \gamma}{\partial t}(T-s,x_s)\,.
\end{align*}
{Thus, $L_s$ is the first-order spatial derivative of $\gamma$, capturing its local variation with respect to changes in position $x$. 
On the other hand, $M_s$ represents the temporal evolution of $\gamma$, incorporating information about how its shape changes over both space and time.
}

Substituting this into the original SDE~\eqref{eq:sec_sde}, we obtain
\begin{align*}
dx_t=[\gamma(T-s,x_s)+L_s(x_t-x_s)+M_s(t-s)]\rmd t+\sigma \rmd W_t.
\end{align*}
This formulation ensures that the discretized process retains the essential structure of the original dynamics while being computationally tractable. 
Unlike discretization schemes we used in Section~\ref{sec:discretization}, which rely on direct numerical integration, this transformed SDE can be solved analytically within each small time interval. 

By using \text{It\^o}'s formula to $e^{-L_st}x_t$, we obtain
\begin{align*}
    \rmd(e^{-L_st}x_t)= e^{-L_st}(\gamma(T-s,x_s)-L_sx_s+M_s(t-s))\rmd t+e^{-L_st}\sigma\rmd W_t.
\end{align*}
Integrating from $s$ to $s+\Delta t$ then gives
\begin{align*}
    x_{s+\Delta t}
    &=e^{L_s\Delta t}x_s+\int_s^{s+\Delta t}e^{L_s(s+\Delta t-t)}\rmd t(\gamma(T-s,x_s)-L_sx_s)\\
    &\quad+\int_s^{s+\Delta t}e^{L_s(s+\Delta t-t)}(t-s)\rmd t\,M_s\\
    &\quad+\sigma\int_s^{s+\Delta t}e^{L_s(s+\Delta t-t)}\rmd W_t\\
    &=x_s+L_s^{-1}(e^{L_s\Delta t}-1)\gamma(T-s,x_s)\\
    &\quad+L_s^{-2}\big[(e^{L_s\Delta t}-1)-L_s\Delta t\big]M_s\\
    &\quad+\sigma\int_s^{s+\Delta t}e^{L_s(s+\Delta t-u)}dW_t\,.
\end{align*}
Setting $\gamma(T-t,x)=\dfrac{1}{2}x+\nabla\log p_{T-t}(x),\sigma=1$, 
let $\Delta t\in[0,h]$ and $s=nh$.
In the resulting expression, we denote $x_{s}$ by $\vartheta_n^{\sf SO}$.
Then, we obtain that for any $t\in[nh,(n+1)h]$
\begin{align}
\label{eq:SOxt}
    x_t&=\vartheta_n^{\sf SO}+\int_{nh}^t(\dfrac{1}{2}\vartheta_n^{\sf SO}+\nabla \log p_{T-nh}(\vartheta_n^{\sf SO})\\
    &\quad +L_n(x_u-\vartheta_n^{\sf SO})+M_n(u-nh))\rmd u+\int_{nh}^{t}\rmd W_u\notag
\end{align}
where
\begin{align*}
    L_n&=\dfrac{1}{2}I_d+\nabla^2\log p_{T-nh}(\vartheta_n^{\sf SO})\\
    M_n&=\dfrac{1}{2}\sum_{j=1}^d\dfrac{\partial^2}{\partial x_j^2}\nabla\log p_{T-nh}(\vartheta_n^{\sf SO})-\dfrac{\partial}{\partial t}\nabla\log p_{T-nh}(\vartheta_n^{\sf SO})\,.
\end{align*}
{Thus, $L_n$ contains the Hessian of the score function.
Meanwhile, $M_n$ measures the difference between the spatial and temporal changes in the score function, reflecting the balance between curvature effects and temporal adaptation in the diffusion process.}
Notice that $x_{(n+1)h}$ is the point we aim to approximate. For this, we need to estimate the score function and its higher-order derivatives to obtain accurate estimates of $L_n$ and $M_ n$, denoted by $s_*^{(L)}$ and $s_*^{(M)}$, respectively.

By the work of~\cite{meng2021highorder}, higher-order derivatives with respect to the spatial variable $x$ can be estimated with sufficient accuracy.
Additionally, we show in the proof of Proposition~\ref{prop:2order} that the partial derivative of $\nabla\log p_t(x)$ with respect to $t$ can be estimated without requiring additional assumptions. 
Thus, it suffices to impose the following assumption.
\begin{assumption}
\label{asm:scerr4SO}
    For some constants $\varepsilon_{sc}^{(L)},\varepsilon_{sc}^{(M)}>0$, the estimate for high-order derivatives of the score function satisfies that,
    \begin{align*}
        \sup_{0\leqslant n\leqslant N}\l|s_*^{(L)}(T-nh,\vartheta_n^{\sf SO})-L_n\r|_{\Ltwo}\leqslant \varepsilon_{sc}^{(L)}\\
        \sup_{0\leqslant n\leqslant N}\l|s_*^{(M)}(T-nh,\vartheta_n^{\sf SO})-M_n\r|_{\Ltwo}\leqslant \varepsilon_{sc}^{(M)}.
    \end{align*}
\end{assumption}
The conditions in Assumption~\ref{asm:scerr4SO} have been previously adopted in works such as Assumption 3 in~\cite{liang2024broadening} and Assumption 3 in~\cite{li2024accelerating}.

Substituting these estimates into the SDE~\eqref{eq:SOxt} then gives
\begin{align*}
    x_t&=\vartheta_n^{\sf SO}+\int_{nh}^t\Big(\gamma(T-nh,\vartheta_n^{\sf SO})+s_*^{(L)}(T-nh,\vartheta_n^{\sf SO})(x_u-\vartheta_n^{\sf SO})\\
    &\qquad +s_*^{(M)}(T-nh,\vartheta_n^{\sf SO})(u-nh)\Big)\rmd u+\int_{nh}^t\rmd W_u\,.
\end{align*}
Let $\vartheta_{n+1}^{\sf SO}$ denote $x_{(n+1)h}$. 
The second-order discretization scheme is given by
\begin{align*}
    \vartheta_{n+1}^{\sf SO}
    &=\vartheta_n^{\sf SO}+s_*^{(L)}(T-nh,\vartheta_n^{\sf SO})^{-1}\left(e^{s_*^{(L)}(T-nh,\vartheta_n^{\sf SO})h}-I_d\right)\left(\dfrac{1}{2}\vartheta_n^{\sf SO}+s_*(T-nh,\vartheta_n^{\sf SO})\right)\\
    &\quad +s_*^{(L)}(T-nh,\vartheta_n^{\sf SO})^{-2}\left(e^{s_*^{(L)}(T-nh,\vartheta_n^{\sf SO})h}-s_*^{(L)}(T-nh,\vartheta_n^{\sf SO})h-I_d\right) s_*^{(M)}(T-nh,\vartheta_n^{\sf SO})\\
    &\quad +\int_{nh}^{(n+1)h}e^{s_*^{(L)}(T-nh,\vartheta_n^{\sf SO})[(n+1)h-t]}\rmd W_t\,.
\end{align*}
We require additional assumptions on the smoothness of the score function, similar to those imposed on $\nabla\log p_t(x)$ in Section~\ref{sec:discretization}.
\begin{assumption}
    \label{asm:scLipx4SO}
   There exists a constant $L_F>0$ such that 
    \begin{align*}
        \left\lVert\nabla^2\log p_t(x)-\nabla^2\log p_t(y)\right\rVert_F\leqslant L_F\l|x-y\r|\,.
    \end{align*}
\end{assumption}
This implies $\nabla^2\log p_t(x)$ is $L_F$-Lipschitz in $x$ with respect to the Frobenius norm. 
As shown in Theorems 4 and 5 of \cite{liang2024broadening}, this condition plays a crucial role in bounding the Wasserstein distance for Hessian estimates and can be easily verified in the Gaussian case.

We also require the following assumption.
\begin{assumption}
    \label{asm:scLipt4SO}
    There exists a constant $M_2>0$ such that, for any $n=0,\dots,N-1$ and $t\in[nh,(n+1)h]$, it holds that
    \begin{align*}
        \l|\nabla^2\log p_{T-t}(x)-\nabla^2\log p_{T-nh}(x)\r|\leqslant M_2h(1+\l|x\r|)\,.
    \end{align*}
\end{assumption}
We are now ready to quantify the $\wass_2$ distance between the generated distribution $\law(\vartheta_N^{\sf SO})$ and the target distribution $p_0$.
\begin{theorem}
    \label{thm:2order}
    Suppose that Assumptions~\ref{asm:p0scLipx},~\ref{asm:scoreerr},~\ref{asm:scerr4SO},~\ref{asm:scLipx4SO} and~\ref{asm:scLipt4SO} hold, then
    \begin{align*}
        \wass_2(\law(\vartheta_N^{\sf SO}),p_0)
        \leqslant& e^{-m_{\min}T}\l|X_0\r|_{\Ltwo}+\mathscr C_1^{\sf SO}(d)h+\mathscr C_2^{\sf SO}\left(\varepsilon_{sc}+\dfrac{2}{3}h^{1/2}\varepsilon_{sc}^{(L)}+\dfrac{1}{2}h\varepsilon_{sc}^{(M)}\right)
    \end{align*}
    where 
    \begin{align*}
    \mathscr C_1^{\sf SO}(d)&=e^{(L_{\max}-1/2)h}\cdot\dfrac{(\sqrt{d}L_{\max}^{3/2}+3dL_F/2)}{m_{\min}-1/2}~~\text{ and }~~
    \mathscr C_2^{\sf SO}=\dfrac{e^{(L_{\max}-1/2)h}}{m_{\min}-1/2}
    \end{align*}
    with $L_{\max}$ and $m_{\min}$ as defined in Theorem~\ref{thm:EM}.
\end{theorem}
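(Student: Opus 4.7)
The starting point is the triangle inequality
$\wass_2(\law(\vartheta_N^{\sf SO}),p_0)\le \|Y_T-\vartheta_N^{\sf SO}\|_{\Ltwo}+\wass_2(\law(Y_T),p_0)$,
where $Y_t$ denotes the continuous reverse process~\eqref{eq:Yt} driven by the same Brownian motion as the scheme and initialized at the same draw from $\hat p_T$. The second term collapses to $e^{-m_{\min}T}\|X_0\|_{\Ltwo}$ via the contraction of the forward OU flow combined with the strong log-concavity of $p_t$, exactly as in the initialization arguments for Theorems~\ref{thm:EM}--\ref{thm:RMPEI}. The problem therefore reduces to controlling the coupled $\Ltwo$ gap $e_n:=\|Y_{nh}-\vartheta_n^{\sf SO}\|_{\Ltwo}$ starting from $e_0=0$.

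For the one-step analysis, introduce the interpolation $\bar\vartheta_t^{\sf SO}$ on $[nh,(n+1)h]$ as the strong solution of the linear SDE $d\bar\vartheta_t^{\sf SO}=[\gamma(T-nh,\vartheta_n^{\sf SO})+\hat L_n(\bar\vartheta_t^{\sf SO}-\vartheta_n^{\sf SO})+\hat M_n(t-nh)]\,dt+dW_t$ with $\hat L_n:=s_*^{(L)}(T-nh,\vartheta_n^{\sf SO})$, $\hat M_n:=s_*^{(M)}(T-nh,\vartheta_n^{\sf SO})$, and $\bar\vartheta_{nh}^{\sf SO}=\vartheta_n^{\sf SO}$. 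The local-linearization derivation used to define the scheme yields $\bar\vartheta_{(n+1)h}^{\sf SO}=\vartheta_{n+1}^{\sf SO}$, and since both SDEs share the same $dW_t$ the process $Z_t:=Y_t-\bar\vartheta_t^{\sf SO}$ satisfies a pure ODE with $Z_{nh}=Y_{nh}-\vartheta_n^{\sf SO}$. Applying Itô's formula to $(t,x)\mapsto\gamma(T-t,x)$ along $Y_\cdot$ gives the decomposition $\gamma(T-t,Y_t)=\gamma(T-nh,Y_{nh})+L_n^*(Y_t-Y_{nh})+M_n^*(t-nh)+R_t$, where $L_n^*,M_n^*$ are the exact analogues of $\hat L_n,\hat M_n$ at $(nh,Y_{nh})$ and $R_t$ is the linearization residual. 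The crucial observation is that $R_t$ contains a centered Brownian-quadratic piece $\tfrac12\nabla^2\gamma[(W_t-W_{nh}),(W_t-W_{nh})]-\tfrac12\Delta\gamma\,(t-nh)$ whose compensator has been absorbed precisely into the $\tfrac12\Delta\gamma$ term of $M_n^*$, together with standard spatial and temporal second-order Taylor remainders; under Assumptions~\ref{asm:scLipx4SO}--\ref{asm:scLipt4SO} this yields the sharp bound $\E\|R_t\|^2\lesssim(t-nh)^2(dL_{\max}^3+d^2L_F^2)$. Replacing $L_n^*,M_n^*,\gamma(T-nh,Y_{nh})$ by their estimated counterparts at $\vartheta_n^{\sf SO}$ produces further contributions controlled by Assumptions~\ref{asm:scoreerr} and~\ref{asm:scerr4SO}: $\varepsilon_{sc}$ enters directly into the drift, $\varepsilon_{sc}^{(L)}$ picks up an extra weight $\|\bar\vartheta_t^{\sf SO}-\vartheta_n^{\sf SO}\|_{\Ltwo}=O(\sqrt h)$, and $\varepsilon_{sc}^{(M)}$ an extra $(t-nh)=O(h)$, matching the exact coefficients in the theorem after integration.

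Isolating the contractive piece $\hat L_n Z_t$ inside the drift difference and invoking the spectral bound $\hat L_n\preccurlyeq(\tfrac12-m_{\min})I_d$ implied by strong log-concavity of $p_{T-nh}$, together with Young's inequality, yields the dissipation $\frac{d}{dt}\|Z_t\|_{\Ltwo}^2\le -(m_{\min}-\tfrac12)\|Z_t\|_{\Ltwo}^2+(m_{\min}-\tfrac12)^{-1}\|\text{(other)}(t)\|_{\Ltwo}^2$. Integrating over each $[nh,(n+1)h]$ and iterating from $e_0=0$ produces a geometric recursion whose summation gives $e_N^2\lesssim(m_{\min}-\tfrac12)^{-2}\sup_t\|\text{(other)}(t)\|_{\Ltwo}^2$, times the $e^{2(L_{\max}-1/2)h}$ correction arising from the operator-norm bound $\|e^{\hat L_n h}\|_{\mathrm{op}}\le e^{(L_{\max}-1/2)h}$ used on the non-contractive remainder terms; substituting the per-step bounds above reproduces exactly $\mathscr C_1^{\sf SO}(d)h+\mathscr C_2^{\sf SO}(\varepsilon_{sc}+\tfrac23 h^{1/2}\varepsilon_{sc}^{(L)}+\tfrac12 h\varepsilon_{sc}^{(M)})$. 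The principal obstacle is the Itô-expansion bookkeeping in the middle paragraph: one must identify the exact cancellation between $\tfrac12\Delta\gamma$ in $\hat M_n$ and the compensator of the Brownian-quadratic Taylor remainder, verify that $\partial_t\nabla\log p_{T-t}$ reduces via the forward Fokker--Planck equation to spatial derivatives of $p_{T-t}$ (so no separate temporal estimator is needed beyond $s_*^{(L)}$ and $s_*^{(M)}$), and carefully track the Frobenius-norm dimension dependence provided by Assumption~\ref{asm:scLipx4SO}, which is the sole source of the $d^2L_F^2$ contribution to $\mathscr C_1^{\sf SO}(d)$.
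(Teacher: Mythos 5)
Your overall architecture matches the paper's: initialization error via the contraction of the reverse flow, a per-step comparison between the true reverse process and a locally linearized SDE, a further splitting between exact coefficients $(L_n,M_n)$ and their estimates (yielding the $\varepsilon_{sc}$, $\tfrac23 h^{3/2}\varepsilon_{sc}^{(L)}$, $\tfrac12 h^{2}\varepsilon_{sc}^{(M)}$ weights exactly as in the paper), the reduction of $\partial_t\nabla\log p_{T-t}$ to spatial derivatives via the Fokker--Planck equation, and the Frobenius-norm bookkeeping from Assumption~\ref{asm:scLipx4SO} that produces the $dL_F$ contribution. The Grönwall factor $e^{(L_{\max}-1/2)h}$ also arises for the same reason as in the paper (an operator-norm, not spectral, bound on $L_n$).

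There is, however, a genuine gap in where you locate the contraction. You claim the dissipation comes from the spectral bound $\hat L_n\preccurlyeq(\tfrac12-m_{\min})I_d$ on the \emph{estimated} operator $\hat L_n=s_*^{(L)}(T-nh,\vartheta_n^{\sf SO})$. Assumption~\ref{asm:scerr4SO} only gives an $\Ltwo$ bound $\|\hat L_n-L_n\|_{\Ltwo}\leqslant\varepsilon_{sc}^{(L)}$; negative semi-definiteness of $\tfrac12 I_d+\nabla^2\log p_{T-nh}+ (m_{\min}-\tfrac12) I_d$ holds for the exact $L_n$ but is not inherited by $\hat L_n$, so the asserted spectral bound is unavailable. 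More structurally, the paper never extracts contraction from the linearized drift at all: it inserts an auxiliary solution $\tilde Y$ of the \emph{true nonlinear} reverse SDE started at $\vartheta_n^{\sf SO}$, obtains $\|Y_{(n+1)h}-\tilde Y_h\|_{\Ltwo}\leqslant e^{-\int(m(T-t)-1/2)\dt}\,e_n$ from strong log-concavity (Lemma~\ref{lem:Gron}), and then shows the local error $\|\tilde Y_h-\vartheta_{n+1}^{\sf SO}\|_{\Ltwo}$ depends on $e_n$ only at order $h^2$. In your direct coupling $Z_t=Y_t-\bar\vartheta_t^{\sf SO}$, the linearization is centered at $\vartheta_n^{\sf SO}$ while $Y$ passes through $Y_{nh}$; re-centering produces residuals such as $(L_n^*-\hat L_n)(Y_t-Y_{nh})$ and the second-order Taylor remainder between $Y_{nh}$ and $\vartheta_n^{\sf SO}$, which contribute terms of order $hL_F e_n^2$ (and, if mishandled, $hL_{\max}e_n$) to the one-step recursion. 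These are not $O(h^2)e_n$, so the per-step ratio is not $e^{-(m_{\min}-1/2)h}+O(h^2)<1$ without an additional a priori (bootstrap) bound on $e_n$, and the geometric summation that yields the $1/(m_{\min}-1/2)$ prefactor does not follow as written. Fixing this either requires importing the paper's $\tilde Y$ device or adding the induction on the size of $e_n$ that your sketch omits.
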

Before discussing the results of this theorem, let us state its direct consequence.
\begin{corollary}
    For a given $\varepsilon>0$, the Wasserstein distance satisfies $\wass_2(\law(\vartheta_N^{\sf SO}),p_0)<\varepsilon$ after
    \begin{align*}
        N=\mathcal{O}\left(\dfrac{1}{\varepsilon}\log\Big(\dfrac{1}{\varepsilon}\Big)\right)
    \end{align*}
    iterations, provided that $T=\mathcal{O}\left(\log(\frac{1}{\varepsilon})\right)$ and $h=\mathcal{O}(\varepsilon)$.
\end{corollary}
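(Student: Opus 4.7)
The plan is to derive the iteration complexity by balancing the three contributions to the upper bound in Theorem~\ref{thm:2order}. Writing the bound as
\[
\wass_2(\law(\vartheta_N^{\sf SO}),p_0) \;\leqslant\; \underbrace{e^{-m_{\min}T}\l|X_0\r|_{\Ltwo}}_{\text{(I)}} \;+\; \underbrace{\mathscr C_1^{\sf SO}(d)\,h}_{\text{(II)}} \;+\; \underbrace{\mathscr C_2^{\sf SO}\bigl(\varepsilon_{sc}+\tfrac{2}{3}h^{1/2}\varepsilon_{sc}^{(L)}+\tfrac{1}{2}h\,\varepsilon_{sc}^{(M)}\bigr)}_{\text{(III)}},
\]
I would require each piece to be at most $\varepsilon/3$. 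The score-matching contribution (III) is standard: since the score and Hessian estimation errors $\varepsilon_{sc}, \varepsilon_{sc}^{(L)}, \varepsilon_{sc}^{(M)}$ are treated as sufficiently small inputs, it suffices to ensure $\varepsilon_{sc}, h^{1/2}\varepsilon_{sc}^{(L)}, h\,\varepsilon_{sc}^{(M)} \lesssim \varepsilon$, which is absorbed into the $\mathcal{O}$ notation and does not affect the scaling of $N$.

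Next, for term (I), solving $e^{-m_{\min}T}\|X_0\|_{\Ltwo} \leqslant \varepsilon/3$ gives
\[
T \;\geqslant\; \tfrac{1}{m_{\min}}\log\!\bigl(3\|X_0\|_{\Ltwo}/\varepsilon\bigr) \;=\; \mathcal{O}\!\left(\log(1/\varepsilon)\right),
\]
using that $m_{\min}$ depends only on the target and that $\|X_0\|_{\Ltwo}$ is a constant with respect to $\varepsilon$. For term (II), I would first observe that the prefactor
\[
\mathscr C_1^{\sf SO}(d) \;=\; e^{(L_{\max}-1/2)h}\cdot\frac{\sqrt{d}L_{\max}^{3/2}+3dL_F/2}{m_{\min}-1/2}
\]
depends on $h$ only through $e^{(L_{\max}-1/2)h}$, which is bounded by a universal constant once $h\leqslant 1$ (as will be the case for $\varepsilon$ small). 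Hence choosing
\[
h \;\leqslant\; \frac{\varepsilon}{3\,\mathscr C_1^{\sf SO}(d)} \;=\; \mathcal{O}(\varepsilon)
\]
controls (II) by $\varepsilon/3$.

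Combining these two choices, the number of iterations is
\[
N \;=\; \frac{T}{h} \;=\; \mathcal{O}\!\left(\frac{\log(1/\varepsilon)}{\varepsilon}\right) \;=\; \mathcal{O}\!\left(\frac{1}{\varepsilon}\log(1/\varepsilon)\right),
\]
as claimed. The only subtlety I anticipate is the mild circularity in bounding $\mathscr C_1^{\sf SO}(d)$, which itself contains $h$ via the factor $e^{(L_{\max}-1/2)h}$; this is resolved by first restricting $h\leqslant 1$ (absorbed into the constant) so that the exponential is $\mathcal{O}(1)$, and then choosing $h$ in terms of the resulting constant. No genuine obstacle arises beyond careful bookkeeping.
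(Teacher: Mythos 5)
Your proposal is correct and follows exactly the argument the paper intends: the corollary is stated as a "direct consequence" of Theorem~\ref{thm:2order}, obtained by forcing each of the three error terms below $\varepsilon$, which yields $T=\mathcal{O}(\log(1/\varepsilon))$, $h=\mathcal{O}(\varepsilon)$, and hence $N=T/h=\mathcal{O}(\varepsilon^{-1}\log(1/\varepsilon))$. Your extra remark about bounding the $h$-dependent prefactor $e^{(L_{\max}-1/2)h}$ by a constant for $h\leqslant 1$ is a sensible piece of bookkeeping that the paper leaves implicit.
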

{
In the above theorem and corollary, we present the first Wasserstein convergence analysis of an accelerated sampler that utilizes accurate score function estimation and second-order information about log-densities.
More specifically, the error arises from three sources:
\begin{itemize}
    \item Initialization error: This is captured by the term ~$e^{-\min T}\|X_0\|_{\Ltwo}$,  which is independent of the chosen discretization scheme.
    \item Discretization error: This term reflects the advantage of second-order acceleration. By designing the discretization scheme to better approximate the original reverse SDE, we mitigate this error.
    \item Estimation error: This term accounts for errors in estimating both the spatial and temporal components of the score function. Since the scheme involves not only the score function estimate but also the terms $L_n$ and $M_n$, their corresponding errors are included in this term as well.
\end{itemize}
}

A comparison with the convergence rates of the four algorithms in Section~\ref{sec:discretization} highlights the clear computational advantage of the proposed second-order method.
Specifically,
\begin{itemize}
\setlength\itemsep{0.02em}
    \item The second-order method requires only $\widetilde{\mathcal{O}}(1/\varepsilon)$ iterations, whereas the other four methods require~$\widetilde{\mathcal{O}}(1/\varepsilon^2)$.
    \item The second-order method allows for a larger step size $h=\mathcal{O}(\varepsilon)$, enabling faster progression in each iteration, while other methods are limited to a much smaller step size of $h=\mathcal{O}(\varepsilon^2)$.
\end{itemize}
As a result, the second-order method achieves the same accuracy with fewer iterations and a larger step size, making it a more efficient numerical scheme for approximating the target distribution.

Moreover, the convergence rate $\widetilde{\mathcal{O}}(1/\varepsilon)$ is consistent with the rate obtained in \cite{liang2024broadening}, which proposes an accelerated DDPM sampler~\cite{ho2020denoising} relying on Hessian estimation. Additionally, this rate aligns with the iteration complexity results presented in \cite{huang2024convergence}, specifically when setting $p=1$ in their framework.

\section{Numerical Studies}
\label{sec:simulation}

\begin{figure*}[t]
    \centering
    \includegraphics[width=\textwidth]{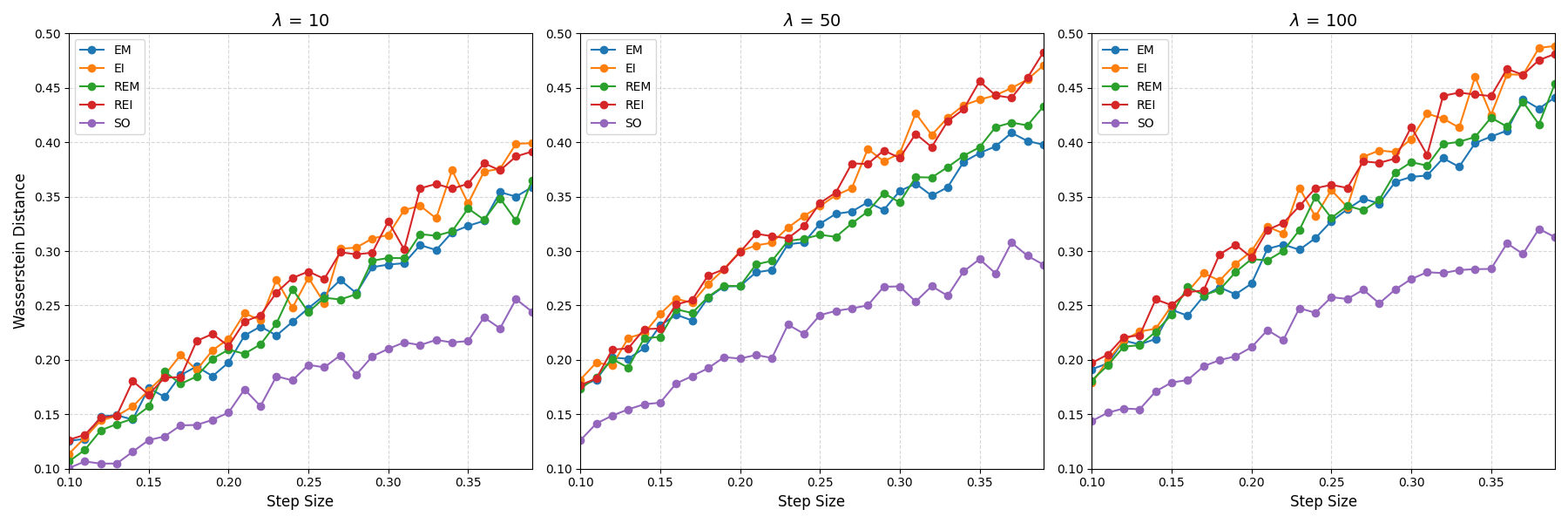}
    \caption{Error of various discretization schemes and second-order sampler with different choice of step size.}
    \label{fig:Wass_pic}
\end{figure*}
In this section, we compare the performance of the SGMs uner {\sf EM, EI, REM, REI} discretization schemes described in Section~\ref{sec:setting} and the second-order acceleration method ({\sf SO}) proposed in Section~\ref{sec:accleration}. 
We apply the five algorithms to the posterior of penalized logistic regression, defined by $p_0(\theta)\propto \exp(-f(\theta))$, with the potential function
\begin{align*}
    f(\theta)=\dfrac{\lambda}{2}\l|\theta\r|^2+\dfrac{1}{n_{\sf data}}\sum_{i=1}^{n_{\sf data}}\log\big(1+\exp(-y_ix_i^\top \theta)\big),
\end{align*}
where $\lambda>0$ denotes the tuning parameter. 
The data $\{x_i, y_i\}_{i=1}^{n_{\sf data}}$, composed of binary labels $y_i\in\{-1,1\}$ and features $x_i\in\mathbb{R}^d$ generated from $x_{i,j}\mathop{\sim}\limits^{iid}\mathcal{N}(0,100)$. 
In our experiments, we set $\lambda=10,50$ and $100$, corresponding to the plots from left to right, respectively, with $d=2$ and $n_{\sf data}=100$.
We provide more details of the calculations and implementations in Appendix~\ref{app:simulation}.

Figure~\ref{fig:Wass_pic} shows the Wasserstein distance measured along the first dimension between the empirical distributions of the $N$-th outputs from SGMs and the target distribution, with different choices of the step size~$h$. 
Here, we set $N=10/h$.
These numerical results support our theoretical findings—SGMs under different discretization schemes described in Section~\ref{sec:discretization} exhibit similar convergence behavior, while the Hessian-based sampler proposed in Section~\ref{sec:accleration} always outperforms the other four methods.
 
\section{Discussion}
\label{sec:discussion}
The Wasserstein-2 distance, used in this paper, serves as a natural and practical metric for measuring errors in diffusion models. However, recent work on the convergence theory of diffusion models has also explored alternative metrics such as total variation distance and KL divergence. A promising direction for future research is to establish convergence guarantees with respect to these alternative distances.

Moreover, while this work makes progress in provably accelerating SDE-based diffusion sampling in Wasserstein distance, it would also be valuable to explore deterministic samplers based on probability flow ODEs.

Additionally, for clarity and simplicity, we focus on a specific choice of drift functions $f$ and $g$ in the forward process, corresponding to the Ornstein–Uhlenbeck process. Extending this analysis to a more general framework with broader choices of $(f,g)$ is an interesting avenue for future research.

Finally, the assumption of strong log concavity of the data distribution is often considered restrictive. A key future direction is to relax this assumption and analyze the Wasserstein convergence behavior of the score-based diffusion models under weaker conditions.

\newpage
\bibliographystyle{amsalpha}
\bibliography{bib}
\newpage

\appendix

\section{Proof of Section~\ref{sec:discretization}}

We define several stochastic processes associated with the backward process $X_t^{\leftarrow}$ and the sample path $\vartheta_n$. First, recall that $X_t^{\leftarrow}$ is described by the following SDE:
\begin{align*}
    \rmd X_t^{\leftarrow}=\left(\dfrac{1}{2}X_t^{\leftarrow}+\nabla\log p_t(X_t^{\leftarrow})\right)\rmd t+\rmd W_t,\quad X_0^{\leftarrow}\sim p_T
\end{align*}
and $\{\vartheta_n^{\alpha},0\leqslant n\leqslant N\}$ satisfies the iterative law:
\begin{align*}
    \vartheta_{n+1}^{\alpha}=\mathcal{G}_h^{\alpha}(\vartheta_n^{\alpha},\{W_t\}_{nh\leqslant t\leqslant (n+1)h}),
\end{align*}
where $\alpha\in\{\sf EM,\sf EI,\sf REM,\sf REI,\sf SO\}$.

Based on $X_t^{\leftarrow}$, we define the following two processes, $\{Y_t,0\leqslant t\leqslant T\}$ and $\{\tilde{Y}_t,0\leqslant t\leqslant h\}$. $Y_t$ satisfies the SDE
\begin{align*}
    \rmd Y_t=\left(\dfrac{1}{2}Y_t+\nabla\log p_{T-t}(Y_t)\right)\rmd t+\rmd W_t,\quad Y_0\sim\hat p_T.
\end{align*}
$\tilde Y_t$ actually relies on $X_t^{\leftarrow}$ on the time interval $[nh,(n+1)h]$ for each $n$. However, we only need this notation in the proof of one-step discretization error, then we allow for some slight abuse of notation by omitting $n$, since it will not lead to any confusion. Therefore, $\{\tilde Y_t,0\leqslant t\leqslant h\}$ satisfies
\begin{align}
    \rmd\tilde Y_t=\left(\dfrac{1}{2}\tilde Y_t+\nabla\log p_{T-t}(\tilde Y_t)\right)\rmd t+\rmd W_t,\quad \tilde Y_0=\vartheta_n.
    \label{eq:tildeY}
\end{align}

Recall that we have defined two processes $\vartheta_{n+u}^{\sf REM}$ and $\vartheta_{n+u}^{\sf REI}$ in Section~\ref{sec:REM} and~\ref{sec:REI}, that is, for any $u\in[0,1]$ and $n=0,\cdots,N$, 
\begin{align*}
    \vartheta^{\sf REM}_{n+u}&:= (1+\frac{uh}{2})\vartheta_n^{\sf REM}+uhs_*(T-nh,\vartheta_n^{\sf REM})+\sqrt{uh}\xi_n\,,\\
    \vartheta_{n+u}^{\sf REI}&:=e^{uh/2}\vartheta^{\sf REI}_{n}+2(e^{uh/2}-1)s_*(T-nh,\vartheta^{\sf REI}_{n})+\sqrt{e^{uh}-1}\xi'_n\,.
\end{align*}

This section is devoted to proving the convergence rate of the diffusion model under various discretization schemes. 
To this end, we need the following auxiliary lemmas.
\begin{lemma}[Lemma 12 in~\cite{Gao2023WassersteinCG}]{\label{lem:GaoLt}}
    Suppose that Assumption~\ref{asm:p0scLipx} holds. Then, $\nabla\log p_t(x)$ is $L(t)$-Lipschitz, where $L(t)$ is given by 
    \begin{align*}
        L(t)=\min\{(1-e^{-t})^{-1},e^tL_0\}=
        \begin{cases}e^tL_0 & \text{ if }~t\leqslant \log(1+\frac{1}{L_0})\\
        (1-e^{-t})^{-1} & \text{ if }~t>\log(1+\frac{1}{L_0})
        \end{cases}\,.
    \end{align*}  
    Therefore, 
    \begin{align*}
        L(t)\leqslant  L_0+1\,.
    \end{align*}
\end{lemma}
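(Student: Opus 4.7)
The plan is to exploit the Ornstein--Uhlenbeck representation $X_t = e^{-t/2} X_0 + \sqrt{1 - e^{-t}}\, Z$ with $Z \sim \mathcal{N}(0, I_d)$ independent of $X_0$, which writes $p_t$ as a Gaussian convolution of the density $\tilde p_0$ of $e^{-t/2} X_0$ with variance $\sigma^2 = 1 - e^{-t}$. Under the change of variables $\tilde p_0(y) = e^{dt/2} p_0(e^{t/2} y)$, Assumption~\ref{asm:p0scLipx} translates into $\nabla^2 \log \tilde p_0 \preccurlyeq -e^t m_0\, I_d$ together with $\|\nabla^2 \log \tilde p_0\|_{\mathrm{op}} \leqslant e^t L_0$, so the task reduces to bounding the Lipschitz constant of the score of a density that is a $\sigma^2$-Gaussian smoothing of something whose score is $e^t L_0$-Lipschitz.

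First I would establish the Tweedie-type identity
\begin{align*}
\nabla^2 \log p_t(x) = -\frac{1}{\sigma^2}\, I_d + \frac{1}{\sigma^4}\, \Cov\bigl(\tilde X_0 \mid X_t = x\bigr),
\end{align*}
obtained by differentiating $\nabla \log p_t(x) = -\sigma^{-2}(x - \E[\tilde X_0 \mid X_t = x])$ and recognizing the derivative of the posterior mean as $\sigma^{-2}$ times the posterior covariance. This reduces the problem to two-sided control of $\Cov(\tilde X_0 \mid X_t = x)$. The posterior has potential $V_{\mathrm{post}}(x_0) = -\log \tilde p_0(x_0) + |x - x_0|^2/(2\sigma^2)$, whose Hessian is sandwiched between $(e^t m_0 + \sigma^{-2})\, I_d$ and $(e^t L_0 + \sigma^{-2})\, I_d$. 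Applying Brascamp--Lieb to the lower Hessian bound gives $\Cov \preccurlyeq \sigma^2/(e^t m_0 \sigma^2 + 1)\, I_d$, which fed into the identity produces $\nabla^2 \log p_t \preccurlyeq -m(t)\, I_d$ and supplies the strong log-concavity used in Lemma~\ref{lem:Gaomt}. For the matching lower bound on the covariance I would use a Cram\'er--Rao-type argument: Stein's identity forces $\Cov(v^\top \tilde X_0,\, v^\top \nabla \log p_{\mathrm{post}}) = -|v|^2$, and combining Cauchy--Schwarz with the Fisher information identity $\E[\nabla \log p_{\mathrm{post}}\,(\nabla \log p_{\mathrm{post}})^\top] = \E[\nabla^2 V_{\mathrm{post}}]$ yields $\Cov \succcurlyeq \sigma^2/(e^t L_0 \sigma^2 + 1)\, I_d$. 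Combining these with the trivial $\Cov \succcurlyeq 0$, substitution into the identity delivers the two estimates $\|\nabla^2 \log p_t\|_{\mathrm{op}} \leqslant e^t L_0$ and $\|\nabla^2 \log p_t\|_{\mathrm{op}} \leqslant (1 - e^{-t})^{-1}$, whose minimum is the claimed $L(t)$.

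The uniform bound $L(t) \leqslant L_0 + 1$ then follows by locating the crossover $t^\star = \log(1 + 1/L_0)$, at which both candidate bounds simultaneously equal $L_0 + 1$: for $t \leqslant t^\star$ the increasing bound $e^t L_0$ is active and dominated by $L_0 + 1$, and for $t \geqslant t^\star$ the decreasing bound $(1 - e^{-t})^{-1}$ takes over and is likewise dominated by $L_0 + 1$. I expect the principal obstacle to be the Cram\'er--Rao-type lower bound on the posterior covariance, since Brascamp--Lieb yields only one-sided control; the required integration-by-parts identity converting $\E[|\nabla \log p_{\mathrm{post}}|^2]$ into $\E[\nabla^2 V_{\mathrm{post}}]$ needs sufficient tail decay of $\tilde p_0$, which is fortunately automatic under the strong log-concavity imposed by Assumption~\ref{asm:p0scLipx}.
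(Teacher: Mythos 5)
The paper does not actually prove this lemma: it is imported verbatim as Lemma~12 of \cite{Gao2023WassersteinCG}, so there is no in-paper proof to compare against. Your argument is correct and self-contained, and it follows the standard route for such Hessian-of-the-score bounds. The key steps all check out: the OU representation $X_t = e^{-t/2}X_0 + \sqrt{1-e^{-t}}\,Z$ and the rescaling $\tilde p_0(y)=e^{dt/2}p_0(e^{t/2}y)$ correctly convert Assumption~\ref{asm:p0scLipx} into $-e^tL_0 I_d \preccurlyeq \nabla^2\log\tilde p_0 \preccurlyeq -e^t m_0 I_d$; the Tweedie identity $\nabla^2\log p_t(x) = -\sigma^{-2}I_d + \sigma^{-4}\Cov(\tilde X_0\mid X_t=x)$ is right (the Jacobian of the posterior mean is indeed $\sigma^{-2}$ times the posterior covariance); Brascamp--Lieb applied to the posterior potential gives exactly the $m(t)$ of Lemma~\ref{lem:Gaomt}; the trivial bound $\Cov\succcurlyeq 0$ gives $\nabla^2\log p_t\succcurlyeq -(1-e^{-t})^{-1}I_d$; and the Cram\'er--Rao step gives $\Cov\succcurlyeq (e^tL_0+\sigma^{-2})^{-1}I_d$, hence $\nabla^2\log p_t\succcurlyeq -\frac{e^tL_0}{e^tL_0\sigma^2+1}I_d$, which is in fact slightly sharper than the stated $e^tL_0$ (the minimum of your two bounds is $\frac{L_0}{L_0(1-e^{-t})+e^{-t}} \leqslant L(t)$). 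Since the Brascamp--Lieb side already forces $\nabla^2\log p_t\prec 0$, the operator norm is controlled by the lower Hessian bound alone, and the crossover computation giving $L(t)\leqslant L_0+1$ at $t^\star=\log(1+1/L_0)$ is correct. The only point worth making explicit in a full write-up is the justification of the two integration-by-parts identities (Stein and Fisher-information) for the posterior, which, as you note, is immediate from its uniform strong log-concavity.
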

\begin{lemma}[Proposition 10 in~\cite{Gao2023WassersteinCG}]{\label{lem:Gaomt}}
    Suppose that Assumption~\ref{asm:p0scLipx} holds. Then, $\nabla\log p_t(x)$ is $m(t)$-strongly log-concave, where $m(t)$ is given by
    \begin{align*}
        m(t)=\dfrac{1}{e^{-t}/m_0+(1-e^{-t})}\,.
    \end{align*}
    Therefore, 
    \begin{align*}
        m(t)\geqslant \min\{1,m_0\}\,.
    \end{align*}
\end{lemma}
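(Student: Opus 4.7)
The plan is to exploit the explicit solution of the Ornstein--Uhlenbeck forward SDE and then apply a Brascamp--Lieb/Tweedie argument to bound $-\nabla^2\log p_t$ from below. Solving $\mathrm{d}X_t = -X_t/2\,\mathrm{d}t + \mathrm{d}B_t$ gives $X_t = e^{-t/2}X_0 + \sigma_t Z$, with $\sigma_t^2 := 1 - e^{-t}$ and $Z\sim\mathcal{N}(0,I_d)$ independent of $X_0\sim p_0$. Letting $Y_0 := e^{-t/2}X_0$ with density $q_t$, a change of variables gives $q_t(y) = e^{dt/2}p_0(e^{t/2}y)$, so that under Assumption~\ref{asm:p0scLipx} one has $-\nabla^2\log q_t(y) = e^t(-\nabla^2\log p_0)(e^{t/2}y) \succcurlyeq e^t m_0 I_d$. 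Thus $q_t$ is $(e^t m_0)$-strongly log-concave, while $p_t = q_t * \varphi_{\sigma_t^2}$ is the convolution with the Gaussian density $\varphi_{\sigma_t^2}$ of $\mathcal{N}(0,\sigma_t^2 I_d)$.

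Differentiating $p_t(x) = \mathbb{E}[\varphi_{\sigma_t^2}(x - Y_0)]$ twice under the expectation and simplifying (Tweedie's formula) yields the identity $\nabla^2 \log p_t(x) = -\sigma_t^{-2}I_d + \sigma_t^{-4}\,\mathrm{Cov}(Y_0 \mid X_t = x)$. The conditional law of $Y_0$ given $X_t = x$ has log-density $\log q_t(y) - \|x - y\|^2/(2\sigma_t^2)$ up to an additive constant, and therefore is $(e^t m_0 + \sigma_t^{-2})$-strongly log-concave. The Brascamp--Lieb variance inequality then gives $\mathrm{Cov}(Y_0 \mid X_t = x) \preccurlyeq (e^t m_0 + \sigma_t^{-2})^{-1} I_d$. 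Substituting into the Tweedie identity reduces to an elementary algebraic simplification that collapses to $-\nabla^2\log p_t(x) \succcurlyeq m(t) I_d$ with $m(t) = (e^{-t}/m_0 + 1 - e^{-t})^{-1}$, exactly as claimed.

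For the lower bound, I would note that $e^{-t}/m_0 + (1 - e^{-t})$ is a convex combination of $1/m_0$ and $1$ with weights $e^{-t}$ and $1 - e^{-t}$, and therefore lies in $[\min(1, 1/m_0), \max(1, 1/m_0)]$; inverting gives $m(t) \geqslant \min(1, m_0)$. The main technical obstacle is the Brascamp--Lieb step combined with differentiation under the expectation: both require enough regularity of $p_0$ to exchange derivatives and integrals and to apply the variance inequality to the conditional measure. Fortunately Assumption~\ref{asm:p0scLipx} furnishes both a Lipschitz score and strong log-concavity of $p_0$, which is more than sufficient to legitimize these operations; a small side check is that the identity $\nabla \mathbb{E}[Y_0 \mid X_t = x] = \sigma_t^{-2}\,\mathrm{Cov}(Y_0 \mid X_t = x)$ is standard for Gaussian observation models, and no additional hypotheses are needed.
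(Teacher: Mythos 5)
Your proposal is correct. Note that the paper does not actually prove this lemma itself: it is imported verbatim as Proposition~10 of the cited reference (the appendix subsection headed ``Proof of Lemma~\ref{lem:Gaomt}'' in fact contains the proof of the \Gronwall-type Lemma~\ref{lem:Gron}, a mislabel). Your argument is the standard and, as far as I can tell, essentially the same derivation used in the source: write $X_t \eqdist e^{-t/2}X_0+\sigma_t Z$ with $\sigma_t^2=1-e^{-t}$, observe that the rescaled variable $e^{-t/2}X_0$ is $e^t m_0$-strongly log-concave, and then use the fact that convolving an $\alpha$-strongly log-concave density with an isotropic Gaussian of variance $\sigma^2$ yields a $(1/\alpha+\sigma^2)^{-1}$-strongly log-concave density --- which is exactly the harmonic-mean formula $m(t)=(e^{-t}/m_0+1-e^{-t})^{-1}$. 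Your route to that convolution fact via the Tweedie identity $\nabla^2\log p_t(x)=-\sigma_t^{-2}I_d+\sigma_t^{-4}\mathrm{Cov}(Y_0\mid X_t=x)$ together with the Brascamp--Lieb bound on the conditional covariance is sound, the algebraic collapse to $m(t)$ checks out, and the convex-combination argument for $m(t)\geqslant\min(1,m_0)$ is exactly right. Two cosmetic points: the lemma's phrasing ``$\nabla\log p_t(x)$ is $m(t)$-strongly log-concave'' is a typo for ``$p_t$ is $m(t)$-strongly log-concave,'' which is what you prove; and the Tweedie/convolution step requires $t>0$, the case $t=0$ being the hypothesis on $p_0$ itself.
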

Combining these two lemmas, we conclude that the Hessian matrix of $\log p_t$ satisfies the following condition
\begin{align*}
    -L(t)I_d \preccurlyeq \nabla^2\log p_t(\cdot)\preccurlyeq -m(t)I_d.
\end{align*}

We will frequently use Gr{\"o}nwall's inequality in the proof. 
Below, we present a specialized form tailored to the relevant processes.
\begin{lemma}
    \label{lem:Gron}
    Suppose the Assumption~\ref{asm:p0scLipx} holds, consider two stochastic processes $H_t$ and $G_t$ defined on the time interval $[t_1,t_2]$, if they satisfy the same SDE, especially motivated by the same Brownian motion, which means that
    \begin{align*}
        \rmd H_t=\Big(\dfrac{1}{2}H_t+\nabla\log p_{T-t}(H_t)\Big)\rmd t+\rmd W_t,\\
        \rmd G_t=\Big(\dfrac{1}{2}G_t+\nabla\log p_{T-t}(G_t)\Big)\rmd t+\rmd W_t.
    \end{align*}
    then for each $t\in[t_1,t_2]$, 
    \begin{align*}
        \l|H_t-G_t\r|_{\Ltwo}\leqslant e^{-\int_{t_1}^{t}(m(T-s)-\frac{1}{2})\rmd s}\l|H_{t_1}-G_{t_1}\r|_{\Ltwo}.
    \end{align*}
\end{lemma}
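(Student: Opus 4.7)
The plan is a standard synchronous coupling argument. Since $H_t$ and $G_t$ are driven by the \emph{same} Brownian motion $W_t$, subtracting the two SDEs shows that the difference $D_t := H_t - G_t$ satisfies a pathwise ODE in which the stochastic integral exactly cancels:
\begin{align*}
\frac{d D_t}{dt} = \frac{1}{2} D_t + \nabla\log p_{T-t}(H_t) - \nabla\log p_{T-t}(G_t).
\end{align*}
This cancellation is the crux of the argument, as it reduces the stochastic comparison to a deterministic differential inequality for the squared distance.

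Next I would apply the chain rule to $\l|D_t\r|^2$ to obtain
\begin{align*}
\frac{d}{dt}\l|D_t\r|^2 = \l|D_t\r|^2 + 2\inprod{D_t}{\nabla\log p_{T-t}(H_t) - \nabla\log p_{T-t}(G_t)}.
\end{align*}
The key estimate is on the inner-product term, which I would control using the strong log-concavity of $p_{T-t}$ provided by Lemma~\ref{lem:Gaomt}: because $-\nabla\log p_{T-t}$ is $m(T-t)$-strongly monotone,
\begin{align*}
\inprod{D_t}{\nabla\log p_{T-t}(H_t) - \nabla\log p_{T-t}(G_t)} \leqslant -m(T-t)\l|D_t\r|^2.
\end{align*}
Substituting this into the preceding identity yields the pathwise differential inequality
\begin{align*}
\frac{d}{dt}\l|D_t\r|^2 \leqslant -\bigl(2m(T-t)-1\bigr)\l|D_t\r|^2.
\end{align*}

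To conclude, I would take expectations (the inequality is pathwise and the coefficient $2m(T-t)-1$ is deterministic, so the exchange is routine) and apply the scalar Gr\"onwall inequality on $[t_1,t]$ to obtain
\begin{align*}
\Earg{\l|D_t\r|^2} \leqslant \Earg{\l|D_{t_1}\r|^2}\exp\!\Bigl(-\int_{t_1}^t \bigl(2m(T-s) - 1\bigr)\rmd s\Bigr).
\end{align*}
Taking square roots halves the exponent and reproduces the claimed bound $\l|H_t-G_t\r|_{\Ltwo} \leqslant e^{-\int_{t_1}^t (m(T-s)-1/2)\rmd s}\l|H_{t_1}-G_{t_1}\r|_{\Ltwo}$.

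The argument is essentially textbook, so I do not anticipate a substantive obstacle. The only points requiring care are (i) verifying that the two processes are coupled through a common Brownian motion so that the synchronous-coupling cancellation is valid, and (ii) bookkeeping the constants, ensuring that the $+\l|D_t\r|^2$ contribution from the linear drift $\tfrac12 H_t$ combines with the strong-monotonicity term to yield the precise offset $m(T-s)-\tfrac12$ in the final exponent, rather than a looser bound based on the global constant $m_{\min}$.
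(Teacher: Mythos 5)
Your proposal is correct and follows essentially the same route as the paper: synchronous coupling cancels the Brownian increments, the chain rule applied to $\l|H_t-G_t\r|^2$ together with the $m(T-t)$-strong monotonicity of $-\nabla\log p_{T-t}$ (Lemma~\ref{lem:Gaomt}) yields the differential inequality $\frac{\rmd}{\rmd t}\l|H_t-G_t\r|^2\leqslant(1-2m(T-t))\l|H_t-G_t\r|^2$, and a Gr\"onwall/integrating-factor step followed by expectation and a square root gives the claimed contraction. The only cosmetic difference is that the paper writes out the integrating factor explicitly rather than invoking the scalar Gr\"onwall inequality by name.
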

Applying Lemma~\ref{lem:Gron} to different processes and time intervals, we derive the following inequalities essential for our proof.
\begin{align}
    \l|Y_{nh+t}-\tilde{Y}_t\r|_{\Ltwo}\leqslant e^{-\int_{nh}^{nh+t}(m(T-s)-\frac{1}{2})\rmd s}\l|Y_{nh}-\tilde{Y}_0\r|_{\Ltwo},\quad\forall t\in[0,h].\label{eq:Gron1}\\
    \l|Y_t-X_t^{\leftarrow}\r|_{\Ltwo}\leqslant e^{-\int_0^{t}(m(T-s)-\frac{1}{2})\rmd s}\l|Y_0-X_0^{\leftarrow}\r|_{\Ltwo},\quad\forall t\in[0,T].\label{eq:Gron2}
\end{align}
which follow from the fact that $\{Y_t,0\leqslant t\leqslant T\}$, $\{\tilde{Y}_t,0\leqslant t\leqslant h\}$, $\{X_t^{\leftarrow},0\leqslant t\leqslant T\}$ all satisfy the same SDE as in Lemma~\ref{lem:Gron}, by applying a time-shifting operator to $\tilde{Y}_t$.\\

\subsection{Proof of Theorem~\ref{thm:EM}}
In this part, we provide the proof of Theorem~\ref{thm:EM}. To achieve this, we first prove the one-step discretization error in the following proposition.

\begin{proposition}
    \label{prop:EM}
    Suppose that Assumption~\ref{asm:p0scLipx},~\ref{asm:scLipt} and~\ref{asm:scoreerr} are satisfied. 
    Then, the following two claims hold.
    \begin{enumerate}[label=\textbf{(\arabic*)}, leftmargin=2em]
        \item \label{item:EMtilde} Firstly, it holds that
        \begin{align*}
            \l|\tilde{Y}_h-\vartheta_{n+1}^{\sf EM}\r|_{\Ltwo}\leqslant& h^2(C_1(n)^2+M_1)\l|Y_{nh}-\vartheta_n^{\sf EM}\r|_{\Ltwo}\\
            &+h^2\left[C_1(n)\left(C_1(n)C_2(n)+\dfrac{1}{2}C_4+C_3(n)\right)+M_1\left(1+C_2(n)+C_4\right)\right]\\
            &+h^{3/2}\sqrt{d}C_1(n)\\
            &+h\varepsilon_{sc},
        \end{align*}
        where
        \begin{align*}
            C_1(n)&=\dfrac{1}{2}+\dfrac{1}{h}\int_{nh}^{(n+1)h}L(T-t)\rmd t,\\
            C_2(n)&=e^{-\int_0^{nh}(m(T-t)-\frac{1}{2})\rmd t}\l|Y_0-X_T\r|_{\Ltwo},\\
            C_3(n)&=\dfrac{1}{h}\int_{nh}^{(n+1)h}(dL(T-t))^{1/2}\rmd t,\\
            C_4&=\sup_{0\leqslant t\leqslant T}\l|X_t\r|_{\Ltwo}.
        \end{align*}
        \item \label{item:EMYt}
        As a result, 
        \begin{align*}
            \l|Y_{(n+1)h}-\vartheta_{n+1}^{\sf EM}\r|_{\Ltwo}\leqslant r_n^{\sf EM}\l|Y_{nh}-\vartheta_n^{\sf EM}\r|_{\Ltwo}+h^2C_n^{\sf EM}+h^{3/2}\sqrt{d}C_1(n)+h\varepsilon_{sc},
        \end{align*}
        where
        \begin{align*}
            r_n^{\sf EM}&=e^{-\int_{nh}^{(n+1)h}(m(T-t)-\frac{1}{2})\rmd t}+h^2(C_1(n)^2+M_1),\\
            C_n^{\sf EM}&=C_1(n)\left(C_1(n)C_2(n)+\dfrac{1}{2}C_4+C_3(n)\right)+M_1\left(1+C_2(n)+C_4\right).
        \end{align*}
    \end{enumerate}
\end{proposition}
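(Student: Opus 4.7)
The plan is to prove item~\textbf{(1)} by a Brownian-coupling decomposition and then derive item~\textbf{(2)} by a triangle inequality combined with~\eqref{eq:Gron1}. After coupling the Gaussian increment $\sqrt{h}\,\xi_n$ in the EM update with the Brownian increment driving $\tilde Y$, the noise terms cancel, and the difference can be written as
\[
\tilde Y_{h}-\vartheta_{n+1}^{\sf EM}
=\int_{0}^{h}\tfrac{1}{2}\bigl(\tilde Y_t-\vartheta_n^{\sf EM}\bigr)\rmd t
+\int_{0}^{h}\bigl[\nabla\log p_{T-t}(\tilde Y_t)-s_*(T-nh,\vartheta_n^{\sf EM})\bigr]\rmd t.
\]
I split the second integrand as a telescoping sum of three pieces: a spatial increment $\nabla\log p_{T-t}(\tilde Y_t)-\nabla\log p_{T-t}(\vartheta_n^{\sf EM})$, controlled by the $L(T-t)$-Lipschitz property from Lemma~\ref{lem:GaoLt}; a temporal increment $\nabla\log p_{T-t}(\vartheta_n^{\sf EM})-\nabla\log p_{T-nh}(\vartheta_n^{\sf EM})$, controlled by Assumption~\ref{asm:scLipt}, which after integration produces an $M_1 h^2(1+\|\vartheta_n^{\sf EM}\|_{\Ltwo})$ contribution; and the score-matching residual, giving $h\varepsilon_{sc}$ via Assumption~\ref{asm:scoreerr}. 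Together with the drift term $\tfrac12(\tilde Y_t-\vartheta_n^{\sf EM})$, taking $\Ltwo$-norms reduces the problem to controlling $\int_0^h\bigl(\tfrac12+L(T-t)\bigr)\|\tilde Y_t-\vartheta_n^{\sf EM}\|_{\Ltwo}\rmd t$.

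The crux is then a uniform estimate on $\|\tilde Y_t-\vartheta_n^{\sf EM}\|_{\Ltwo}$ for $t\in[0,h]$ that is linear in $\|Y_{nh}-\vartheta_n^{\sf EM}\|_{\Ltwo}$ plus absolute constants. From the integral representation of $\tilde Y_t-\vartheta_n^{\sf EM}$, the Brownian piece contributes $\sqrt{dt}\leqslant\sqrt{dh}$, and the drift contributes the integral of $\tfrac12\|\tilde Y_s\|_{\Ltwo}+\|\nabla\log p_{T-s}(\tilde Y_s)\|_{\Ltwo}$. To land the desired linear form, I route both summands through $Y_{nh+s}$ and $X^{\leftarrow}_{nh+s}$: the estimate~\eqref{eq:Gron1} gives $\|\tilde Y_s-Y_{nh+s}\|_{\Ltwo}\leqslant\|Y_{nh}-\vartheta_n^{\sf EM}\|_{\Ltwo}$, \eqref{eq:Gron2} gives $\|Y_{nh+s}-X^{\leftarrow}_{nh+s}\|_{\Ltwo}\leqslant C_2(n)$, and $X^{\leftarrow}_{nh+s}\eqd X_{T-nh-s}$ together with the forward-process moment bound yields $\|X^{\leftarrow}_{nh+s}\|_{\Ltwo}\leqslant C_4$. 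For the score magnitude I use that $X^{\leftarrow}_{nh+s}\sim p_{T-nh-s}$, so by the integration-by-parts identity $\E_p[\|\nabla\log p\|^2]=\E_p[-\Delta\log p]$ and Lemma~\ref{lem:GaoLt},
\[
\|\nabla\log p_{T-nh-s}(X^{\leftarrow}_{nh+s})\|_{\Ltwo}\leqslant\sqrt{dL(T-nh-s)},
\]
which an $L(T-nh-s)$-Lipschitz step transfers to $\tilde Y_s$. Collecting these ingredients produces $\|\tilde Y_t-\vartheta_n^{\sf EM}\|_{\Ltwo}\leqslant hC_1(n)\|Y_{nh}-\vartheta_n^{\sf EM}\|_{\Ltwo}+hC_1(n)C_2(n)+\tfrac{h}{2}C_4+hC_3(n)+\sqrt{dh}$ uniformly on $[0,h]$.

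Substituting this uniform bound into the one-step estimate, and converting $\|\vartheta_n^{\sf EM}\|_{\Ltwo}\leqslant\|Y_{nh}-\vartheta_n^{\sf EM}\|_{\Ltwo}+C_2(n)+C_4$ in the $M_1h^2(1+\|\vartheta_n^{\sf EM}\|_{\Ltwo})$ contribution, produces item~\textbf{(1)} exactly as stated. For item~\textbf{(2)}, I apply the triangle inequality $\|Y_{(n+1)h}-\vartheta_{n+1}^{\sf EM}\|_{\Ltwo}\leqslant\|Y_{(n+1)h}-\tilde Y_h\|_{\Ltwo}+\|\tilde Y_h-\vartheta_{n+1}^{\sf EM}\|_{\Ltwo}$ and bound the first summand by~\eqref{eq:Gron1}, which contributes the contractive factor $e^{-\int_{nh}^{(n+1)h}(m(T-s)-1/2)\rmd s}$ multiplying $\|Y_{nh}-\vartheta_n^{\sf EM}\|_{\Ltwo}$; combining it with the $h^2(C_1(n)^2+M_1)$ coefficient from item~\textbf{(1)} assembles $r_n^{\sf EM}$, and grouping the remaining constant terms assembles $C_n^{\sf EM}$. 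The main obstacle is the choice of route for controlling $\|\tilde Y_s\|_{\Ltwo}$: the naive bound $\|\vartheta_n^{\sf EM}\|_{\Ltwo}+\|\tilde Y_s-\vartheta_n^{\sf EM}\|_{\Ltwo}$ is essentially self-referential and would only yield constants involving $\|\vartheta_n^{\sf EM}\|_{\Ltwo}$ rather than the data-intrinsic quantities $C_2(n),C_3(n),C_4$, so the detour through $Y$ and $X^{\leftarrow}$ combined with Stein's identity is essential for producing a recursion that closes cleanly in $\|Y_{nh}-\vartheta_n^{\sf EM}\|_{\Ltwo}$.
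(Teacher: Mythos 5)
Your proposal is correct and follows essentially the same route as the paper: the same three-term decomposition of $\tilde Y_h-\vartheta_{n+1}^{\sf EM}$ into spatial, temporal, and score-matching pieces, the same detour through $Y_{nh+s}$ and $X^{\leftarrow}_{nh+s}$ via the Gr\"onwall estimates \eqref{eq:Gron1}--\eqref{eq:Gron2} and the identity of Lemma~\ref{lem:Enabla} to obtain the uniform bound of Lemma~\ref{lem:supYtY0}, and the same triangle-inequality step for Claim~(2). No substantive differences.
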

\begin{proof}
We prove the two claims sequentially.\\
\textbf{Proof of Claim~\ref{item:EMtilde}}. Rewrite display~\eqref{eq:tildeY} in the integral form,
\begin{align*}
    \tilde{Y}_h=\tilde{Y}_0+\int_{0}^{h}\left(\dfrac{1}{2}\tilde{Y}_t+\nabla\log p_{T-nh-t}(\tilde{Y}_t)\right)\rmd t+\int_{nh}^{(n+1)h}\rmd W_t \,.
\end{align*}
For Euler-Maruyama method, we can write $\vartheta_{n+1}^{\sf EM}$ in integral form as follows
\begin{align*}
    \vartheta_{n+1}^{\sf EM}=\vartheta_n^{\sf EM}+\int_{nh}^{(n+1)h}\left(\dfrac{1}{2}\vartheta_n^{\sf EM}+s_*(T-nh,\vartheta_n^{\sf EM})\right)\rmd t+\int_{nh}^{(n+1)h}\rmd W_t.
\end{align*}
Note that $\tilde Y_0=\vartheta_n^{\sf EM}$, then, it holds that
\begin{equation}
    \begin{aligned}
        \l|\tilde{Y}_h-\vartheta_{n+1}^{\sf EM}\r|_{\Ltwo}=&\left\lVert\dfrac{1}{2}\int_0^h(\tilde{Y}_t-\vartheta_n^{\sf EM})\rmd t+\int_0^h(\nabla\log p_{T-nh-t}(\tilde{Y}_t)-s_*(T-nh,\vartheta_n^{\sf EM}))\rmd t\right\rVert_{\Ltwo}\\
        \leqslant&\underbrace{\left\lVert\dfrac{1}{2}\int_0^h(\tilde{Y}_t-\tilde{Y}_0)\rmd t+\int_0^h(\nabla\log p_{T-nh-t}(\tilde{Y}_t)-\nabla\log p_{T-nh-t}(\tilde{Y}_0)\rmd t\right\rVert_{\Ltwo}}_{\text{\large I}}\\
        &+\underbrace{\left\lVert \int_0^h(\nabla\log p_{T-nh-t}(\vartheta_n^{\sf EM})-\nabla\log p_{T-nh}(\vartheta_n^{\sf EM}))\rmd t\right\rVert_{\Ltwo}}_{\text{\large II}}\\
        &+\underbrace{\left\lVert\int_0^h(\nabla\log p_{T-nh}(\vartheta_n^{\sf EM})-s_*(T-nh,\vartheta_n^{\sf EM}))\rmd t\right\rVert_{\Ltwo}}_{\text{\large III}}.
    \end{aligned}
    \label{eq:decomp}
\end{equation}
Here, we decompose the term $ \l|\tilde{Y}_h-\vartheta_{n+1}^{\sf EM}\r|_{\Ltwo}$ into a sum of three terms and then control each term individually.\\
For the term \text{I} of inequality~\eqref{eq:decomp}, by Assumption~\ref{asm:p0scLipx} and  Lipschitzness of $\nabla\log p_t$, we obtain 
\begin{align*}
    \rm{I}&=\left\lVert\dfrac{1}{2}\int_0^h(\tilde{Y}_t-\tilde{Y}_0)\rmd t+\int_0^h(\nabla\log p_{T-nh-t}(\tilde{Y}_t)-\nabla\log p_{T-nh-t}(\tilde{Y}_0)\rmd t\right\rVert_{\Ltwo}\\
    &\leqslant \dfrac{1}{2}\int_0^h\l|\tilde{Y}_t-\tilde{Y}_0\r|_{\Ltwo}\rmd t+\int_0^hL(T-nh-t)\l|\tilde{Y}_t-\tilde{Y}_0\r|_{\Ltwo}\rmd t\\
    &\leqslant \left(\dfrac{1}{2}h+\int_{nh}^{(n+1)h}L(T-t)\rmd t\right)\sup_{0\leqslant t\leqslant h}\l|\tilde{Y}_t-\tilde{Y}_0\r|_{\Ltwo}.
\end{align*}
We then proceed to derive the upper bound for the term $\sup_{0\leqslant t\leqslant h}\l|\tilde{Y}_t-\tilde{Y}_0\r|_{\Ltwo}$.
\begin{lemma}
    \label{lem:supYtY0}
    When $p_0$ satisfies Assumption~\ref{asm:p0scLipx}, it holds that
    \begin{align*}
        \sup_{0\leqslant t\leqslant h}\l|\tilde{Y}_t-\tilde{Y}_0\r|_{\Ltwo}\leqslant &\left(\dfrac{1}{2}h+\int_{nh}^{(n+1)h}L(T-t)\rmd t\right)\l|Y_{nh}-\vartheta_n^{\sf EM}\r|_{\Ltwo}\\
        +&\left(\dfrac{1}{2}h+\int_{nh}^{(n+1)h}L(T-t)\rmd t\right)e^{-\int_0^{nh}(m(T-t)-\frac{1}{2})\rmd t}\l|Y_0-X_T\r|_{\Ltwo}\\
        +&\dfrac{1}{2}h\sup_{0\leqslant t\leqslant T}\l|X_t\r|_{\Ltwo}+\int_{nh}^{(n+1)h}(dL(T-t))^{1/2}\rmd t+\sqrt{dh}.
    \end{align*}
\end{lemma}
Notice that we have no initial limit on the $\tilde{Y}_t$ in Lemma~\ref{lem:supYtY0}, which means that we can use this lemma to any discretization scheme.\\
For the term \text{II} of \eqref{eq:decomp}, we first rely on Assumption~\ref{asm:scLipt} to derive
\begin{align*}
   \rm{II} &=\left\lVert \int_0^h(\nabla\log p_{T-nh-t}(\vartheta_n^{\sf EM})-\nabla\log p_{T-nh}(\vartheta_n^{\sf EM}))\rmd t\right\rVert_{\Ltwo}\\
    &\leqslant \int_0^h\l|\nabla\log p_{T-nh-t}(\vartheta_n^{\sf EM})-\nabla\log p_{T-nh}(\vartheta_n^{\sf EM})\r|_{\Ltwo}\rmd t\\
    &\leqslant h^2M_1(1+\l|\vartheta_n^{\sf EM}\r|_{\Ltwo}).
\end{align*}
Using the triangle inequality and \eqref{eq:Gron2}, we obtain
\begin{equation}
    \begin{aligned}
        \l|\vartheta_n^{\sf EM}\r|_{\Ltwo}\leqslant&\l|Y_{nh}-\vartheta_n^{\sf EM}\r|_{\Ltwo}+\l|Y_{nh}-X_{nh}^{\leftarrow}\r|_{\Ltwo}+\l|X_{nh}^{\leftarrow}\r|_{\Ltwo}\\
        \leqslant&\l|Y_{nh}-\vartheta_n^{\sf EM}\r|_{\Ltwo}+e^{-\int_0^{nh}(m(T-t)-\frac{1}{2})\rmd t}\l|Y_0-X_T\r|_{\Ltwo}+\sup_{0\leqslant t\leqslant T}\l|X_t\r|_{\Ltwo}
    \end{aligned}
    \label{eq:hatyn}
\end{equation}
For the term \text{III} of \eqref{eq:decomp}, it follows from Assumption~\ref{asm:scoreerr} that 
\begin{align*}
    \rm{III}=&\left\lVert\int_0^h(\nabla\log p_{T-nh}(\vartheta_n^{\sf EM})-s_*(T-nh,\vartheta_n^{\sf EM}))\rmd t\right\rVert_{\Ltwo}\\
    &\leqslant\int_0^h\l|\nabla\log p_{T-nh}(\vartheta_n^{\sf EM})-s_*(T-nh,\vartheta_n^{\sf EM})\r|_{\Ltwo}\rmd t\\
   &\leqslant h\varepsilon_{sc}.
\end{align*}
Combining these terms above, we obtain that
\begin{equation}
    \begin{aligned}
        \l|\tilde{Y}_h-\vartheta_{n+1}^{\sf EM}\r|_{\Ltwo}\leqslant& h^2(C_1(n)^2+M_1)\l|Y_{nh}-\vartheta_n^{\sf EM}\r|_{\Ltwo}\\
        &+h^2\left[C_1(n)\left(C_1(n)C_2(n)+\dfrac{1}{2}C_4+C_3(n)\right)+M_1\left(1+C_2(n)+C_4\right)\right]\\
        &+h^{3/2}\sqrt{d}C_1(n)\\
        &+h\varepsilon_{sc},
    \end{aligned}
    \label{eq:EM2term}
\end{equation}
where
\begin{align*}
    C_1(n)&=\dfrac{1}{2}+\dfrac{1}{h}\int_{nh}^{(n+1)h}L(T-t)\rmd t,\\
    C_2(n)&=e^{-\int_0^{nh}(m(T-t)-\frac{1}{2})\rmd t}\l|Y_0-X_T\r|_{\Ltwo},\\
    C_3(n)&=\dfrac{1}{h}\int_{nh}^{(n+1)h}(dL(T-t))^{1/2}\rmd t,\\
    C_4&=\sup_{0\leqslant t\leqslant T}\l|X_t\r|_{\Ltwo}.
\end{align*}
This completes the proof of Claim~\ref{item:EMtilde}.

\noindent \textbf{Proof of Claim~\ref{item:EMYt}}. By the triangle inequality, we have
\begin{align}
    \label{eq:EM1}
    \l|Y_{(n+1)h}-\vartheta_{n+1}^{\sf EM}\r|_{\Ltwo}\leqslant \l|Y_{(n+1)h}-\tilde{Y}_{h}\r|_{\Ltwo}+\l|\tilde{Y}_h-\vartheta_{n+1}^{\sf EM}\r|_{\Ltwo}.
\end{align}
Applying \eqref{eq:Gron1} to the first term of \eqref{eq:EM1}, we obtain that
\begin{equation}
    \l|Y_{(n+1)h}-\tilde{Y}_h\r|^2\leqslant e^{-\int_{nh}^{(n+1)h}(2m(T-t)-1)\rmd t}\l|Y_{nh}-\tilde{Y}_0\r|^2.
    \label{eq:err1}
\end{equation}
Notice that $\tilde{Y}_0=\vartheta_n^{\sf EM}$, it then follows that
\begin{align*}
    \l|Y_{(n+1)h}-\tilde{Y}_h\r|_{\Ltwo}\leqslant e^{-\int_{nh}^{(n+1)h}(m(T-t)-\frac{1}{2})\rmd t}\l|Y_{nh}-\vartheta_n^{\sf EM}\r|_{\Ltwo}.
\end{align*}
Claim~\ref{item:EMYt} follows directly from our previous results and Claim~\ref{item:EMtilde}.
Since this step is independent of the discretization method, it applies to all the schemes discussed in this section. In the following analysis, we omit this step and proceed directly with the proof of the first claim.
\end{proof}

We now proceed to derive the upper bound of the Wasserstein distance between the sample distribution generated after $N$ iterations and the target distribution $p_0$, based on the one-step discretization error bound given by Proposition~\ref{prop:EM}.

First, note that
\begin{align*}
    \wass_2(\law(\vartheta_N^{\sf EM}),p_0)\leqslant \l|\vartheta_N^{\sf EM}-X_0\r|_{\Ltwo}\leqslant\l|Y_{Nh}-\vartheta_N^{\sf EM}\r|_{\Ltwo}+\l|Y_{Nh}-X_0\r|_{\Ltwo}.
\end{align*}
Invoking Proposition~10 of~\cite{gao2023wasserstein}, we have
\begin{align}
    \l|Y_{Nh}-X_0\r|_{\Ltwo}\leq e^{-\int_0^Tm(t)\rmd t}\l|X_0\r|_{\Ltwo}.
    \label{eq:Wassterm1}
\end{align}
According to Proposition~\ref{prop:EM}, by induction, we obtain that
\begin{equation}
    \begin{aligned}
        \l|Y_{Nh}-\vartheta_N^{\sf EM}\r|_{\Ltwo}\leqslant& r_{N-1}^{\sf EM}\l|Y_{(N-1)h}-\vartheta_{N-1}^{\sf EM}\r|_{\Ltwo}+\left(h^2C_{N-1}^{\sf EM}+h^{3/2}\sqrt{d}C_1(N-1)+h\varepsilon_{sc}\right)\\
        \leqslant&\left(\prod_{j=0}^{N-1}r_j^{\sf EM}\right)\l|Y_0-\vartheta_0^{\sf EM}\r|_{\Ltwo}+\sum_{k=0}^{N-1}\left(\prod_{j=k+1}^{N-1}r_j^{\sf EM}\right)\left(h^2C_k^{\sf EM}+h^{3/2}\sqrt{d}C_1(k)+h\varepsilon_{sc}\right)\\
        =&\sum_{k=0}^{N-1}\left(\prod_{j=k+1}^{N-1}r_j^{\sf EM}\right)\left(h^2C_k^{\sf EM}+h^{3/2}\sqrt{d}C_1(k)+h\varepsilon_{sc}\right),
    \end{aligned}
    \label{eq:Wassinduction}
\end{equation}
where we define $\prod_{j=N}^{N-1}r_j^{\sf EM}=1$.
Notice that 
\begin{align*}
    \prod_{j=k+1}^{N-1}r_j^{\sf EM}=&\prod_{j=k+1}^{N-1}(e^{-\int_{jh}^{(j+1)h}(m(T-t)-\frac{1}{2})\rmd t}+h^2(C_1(k)^2+M_1))\\
    \lesssim& \prod_{j=k+1}^{N-1}e^{-h(m_{\min}-\frac{1}{2})}=e^{h(m_{\min}-\frac{1}{2})(N-k-1)}.
\end{align*}
Therefore, we have
\begin{equation}
    \begin{aligned}
        \l|Y_{Nh}-\vartheta_N^{\sf EM}\r|_{\Ltwo}\lesssim& \sum_{k=0}^{N-1}e^{-h(m_{\min}-\frac{1}{2})(N-k-1)}\left(h^2C_k^{\sf EM}+h^{3/2}\sqrt{d}C_1(k)+h\varepsilon_{sc}\right)\\
        \leqslant& \dfrac{1}{1-e^{h(m_{\min}-\frac{1}{2})}}\left(h^2\max_{0\leqslant k\leqslant N-1}C_k^{\sf EM}+h^{3/2}\sqrt{d}\max_{0\leqslant k\leqslant N-1}C_1(k)+h\varepsilon_{sc}\right)\\
        \lesssim&\dfrac{1}{m_{\min}-1/2}\left(\sqrt{dh}\max_{0\leqslant k\leqslant N-1}C_1(k)+\varepsilon_{sc}\right).
    \end{aligned}
    \label{eq:Wassterm2}
\end{equation}
Recall the definition of $C_1(k)$ and the upper bound of $L(t)$, it follows that
\begin{align*}
    \max_{0\leqslant k\leqslant N-1}C_1(k)\leqslant \dfrac{1}{2}+L_{\max},
\end{align*}
and thus we obtain that
\begin{align*}
    \l|Y_{Nh}-\vartheta_N^{\sf EM}\r|_{\Ltwo}\lesssim \sqrt{dh}\cdot\dfrac{L_{\max}+1/2}{m_{\min}-1/2}+\varepsilon_{sc}\cdot\dfrac{1}{m_{\min}-1/2}\,.
\end{align*}
Plugging this back into the previous display then we have
\begin{align*}
    \wass_2(\law(\vartheta_N^{\sf EM}),p_0)\lesssim e^{-\int_0^Tm(t)\rmd t}\l|X_0\r|_{\Ltwo}+\sqrt{dh}\cdot\dfrac{L_{\max}+1/2}{m_{\min}-1/2}+\varepsilon_{sc}\cdot \dfrac{1}{m_{\min}-1/2}\,,
\end{align*}
which completes the proof of Theorem~\ref{thm:EM}.

\subsection{Proof of Theorem~\ref{thm:EI}}
This part aims to prove the Wasserstein convergence result for the Exponential Integrator (EI) scheme. 
We will prove this theorem using the same method as in Theorem~\ref{thm:EM}. Following this approach, we first establish the one-step discretization error in the proposition below.
\begin{proposition}
    \label{prop:EI}
    Suppose that Assumption~\ref{asm:p0scLipx},~\ref{asm:scoreerr} and~\ref{asm:scLipt} hold, then one-step discretization error for Exponential Integrator scheme is obtained from the following two bounds.
    \begin{enumerate}[label=\textbf{(\arabic*)},leftmargin=2em]
        \item \label{item:EItilde} It holds that
        \begin{align*}
            \l|\tilde{Y}_h-\vartheta_{n+1}^{\sf EI}\r|_{\Ltwo}\leqslant&h^2\left(C_5(n)C_1(n)+M_1\dfrac{2(e^{h/2}-1)}{h}\right)\l|Y_{nh}-\vartheta_n^{\sf EM}\r|_{\Ltwo}\\
            &+h^2\left[C_5(n)\left(C_1(n)C_2(n)+\dfrac{1}{2}C_4+C_3(n)\right)+\dfrac{2(e^{h/2}-1)}{h}M_1(1+C_2(n)+C_4)\right]\\
            &+h^{3/2}\sqrt{d}C_5(n)\\
            &+h\cdot\dfrac{2(e^{h/2}-1)}{h}\varepsilon_{sc},
        \end{align*}
        where
        \begin{align*}
            C_5(n)=\dfrac{1}{h}\int_{nh}^{(n+1)h}e^{\frac{1}{2}((n+1)h-t)}L(T-t)\rmd t\approx C_1(n)-\dfrac{1}{2}\,.
        \end{align*}
        \item \label{item:EIYt} Therefore, we have the bound for one-step discretization error
        \begin{align*}
            \l|Y_{(n+1)h}-\vartheta_{n+1}^{\sf EI}\r|_{\Ltwo}\leqslant r_n^{\sf EI}\l|Y_{nh}-\vartheta_n^{\sf EI}\r|_{\Ltwo}+h^2C_n^{\sf EI}+h^{3/2}\sqrt{d}C_5(n)+h\cdot\dfrac{2(e^{h/2}-1)}{h}\varepsilon_{sc},
        \end{align*}
        where
        \begin{align*}
            r_n^{\sf EI}=&e^{-\int_{nh}^{(n+1)h}(m(T-t)-\frac{1}{2})\rmd t}+h^2\left(C_5(n)C_1(n)+M_1\dfrac{2(e^{h/2}-1)}{h}\right),\\
            C_n^{\sf EI}=&C_5(n)\left(C_1(n)C_2(n)+\dfrac{1}{2}C_4+C_3(n)\right)+\dfrac{2(e^{h/2}-1)}{h}M_1(1+C_2(n)+C_4).
        \end{align*}
    \end{enumerate}
Here, the constants $C_i,i=1,2,3,4$ are as defined in Proposition~\ref{prop:EM}.
\end{proposition}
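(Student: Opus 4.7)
The plan is to mirror the three-term decomposition used in Proposition~\ref{prop:EM}, but exploit the fact that the exponential integrator solves the linear part of the drift exactly. Apply the variation-of-constants formula (or equivalently, \text{It\^o}'s formula to $e^{-t/2}\tilde Y_t$) to~\eqref{eq:tildeY} on the interval $[0,h]$. This yields
\begin{align*}
\tilde Y_h = e^{h/2}\tilde Y_0 + \int_0^h e^{(h-t)/2}\nabla\log p_{T-nh-t}(\tilde Y_t)\rmd t + \int_0^h e^{(h-t)/2}\rmd W_{nh+t}\,.
\end{align*}
On the other hand, the EI update can be written in the matching integral form
\begin{align*}
\vartheta_{n+1}^{\sf EI} = e^{h/2}\vartheta_n^{\sf EI} + \int_0^h e^{(h-t)/2}s_*(T-nh,\vartheta_n^{\sf EI})\rmd t + \int_0^h e^{(h-t)/2}\rmd W_{nh+t}\,,
\end{align*}
using $\int_0^h e^{(h-t)/2}\rmd t = 2(e^{h/2}-1)$ and matching variance $\int_0^h e^{h-t}\rmd t = e^h-1$ for the Gaussian noise. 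Since $\tilde Y_0=\vartheta_n^{\sf EI}$, the linear terms cancel and
\begin{align*}
\tilde Y_h-\vartheta_{n+1}^{\sf EI} = \int_0^h e^{(h-t)/2}\bigl[\nabla\log p_{T-nh-t}(\tilde Y_t) - s_*(T-nh,\vartheta_n^{\sf EI})\bigr]\rmd t\,.
\end{align*}

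Next, split this single integrand into three pieces, analogous to~\eqref{eq:decomp}: a spatial increment $\nabla\log p_{T-nh-t}(\tilde Y_t)-\nabla\log p_{T-nh-t}(\tilde Y_0)$, a temporal increment $\nabla\log p_{T-nh-t}(\vartheta_n^{\sf EI})-\nabla\log p_{T-nh}(\vartheta_n^{\sf EI})$, and the score-matching residual $\nabla\log p_{T-nh}(\vartheta_n^{\sf EI})-s_*(T-nh,\vartheta_n^{\sf EI})$. For the first piece, Assumption~\ref{asm:p0scLipx} together with Lemma~\ref{lem:GaoLt} gives a Lipschitz constant $L(T-nh-t)$, so multiplying by the weight $e^{(h-t)/2}$ and integrating produces exactly the quantity $h C_5(n)$, and the factor $\sup_{0\leqslant t\leqslant h}\|\tilde Y_t-\tilde Y_0\|_{\Ltwo}$ is then controlled by Lemma~\ref{lem:supYtY0} (which is independent of the scheme and hence applies here). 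For the second piece, Assumption~\ref{asm:scLipt} supplies the factor $M_1 h(1+\|\vartheta_n^{\sf EI}\|_{\Ltwo})$, and the weight integrates to $2(e^{h/2}-1)$, yielding $2(e^{h/2}-1)M_1 h(1+\|\vartheta_n^{\sf EI}\|_{\Ltwo})$. For the third piece, Assumption~\ref{asm:scoreerr} combined with the same weight gives $2(e^{h/2}-1)\varepsilon_{sc}$. Finally, control $\|\vartheta_n^{\sf EI}\|_{\Ltwo}$ by the exact analogue of the triangle-inequality bound~\eqref{eq:hatyn}, which introduces the summands $\|Y_{nh}-\vartheta_n^{\sf EI}\|_{\Ltwo}$, $C_2(n)$, and $C_4$. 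Collecting terms and using $h C_5(n)\cdot h C_1(n)$ from combining the first-piece bound with Lemma~\ref{lem:supYtY0} yields precisely the bound in Claim~\ref{item:EItilde}.

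Claim~\ref{item:EIYt} then follows immediately from the triangle inequality $\|Y_{(n+1)h}-\vartheta_{n+1}^{\sf EI}\|_{\Ltwo}\leqslant\|Y_{(n+1)h}-\tilde Y_h\|_{\Ltwo}+\|\tilde Y_h-\vartheta_{n+1}^{\sf EI}\|_{\Ltwo}$, controlling the first term via the Gr\"onwall bound~\eqref{eq:Gron1} (noting $\tilde Y_0=\vartheta_n^{\sf EI}$), and substituting Claim~\ref{item:EItilde}. I do not expect any genuine obstacle here: the structural advantage of EI is that the linear drift is integrated exactly, so the only term that has to be bounded against discretization is the score, which is why the EM-style $\tfrac{1}{2}$ contribution (present in $C_1(n)$) disappears and is replaced by the strictly smaller weighted Lipschitz constant $C_5(n)=C_1(n)-\tfrac{1}{2}+o(1)$. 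The most delicate book-keeping is confirming that the weight $e^{(h-t)/2}$ interacts cleanly with both the space-Lipschitz bound (yielding $h C_5(n)$) and the time-increment bound (yielding $2(e^{h/2}-1)$, equivalently the factor $2(e^{h/2}-1)/h$ multiplied by $h$), so that the final coefficients match the stated expression.
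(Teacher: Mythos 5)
Your proposal is correct and follows essentially the same route as the paper's proof: the variation-of-constants (It\^o on $e^{-t/2}\tilde Y_t$) representation, the cancellation of the linear term, the three-way decomposition of the integrand mirroring \eqref{eq:decomp}, control of the spatial increment via Lemma~\ref{lem:supYtY0} and of $\|\vartheta_n^{\sf EI}\|_{\Ltwo}$ via the analogue of \eqref{eq:hatyn}, and the Gr\"onwall step for Claim~\ref{item:EIYt}. No gaps.
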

\begin{proof}
We prove two claims in succession.\\
\textbf{Proof of Claim~\ref{item:EItilde}.} Consider the process defined in \eqref{eq:tildeY}, which satisfies the SDE  
\begin{align*}
    \rmd\tilde{Y}_t=\left[\dfrac{1}{2}\tilde{Y}_t+\nabla\log p_{T-nh-t}(\tilde{Y}_t)\right]\rmd t+\rmd W_t,
\end{align*}
Instead of integrating both sides of the SDE, we use \text{It\^o}'s formula to $e^{-\frac{t}{2}}\tilde Y_t$, then we have
\begin{align*}
    \rmd (e^{-\frac{t}{2}}\tilde Y_t)=-\dfrac{1}{2}e^{-\frac{t}{2}}\tilde Y_t+e^{-\frac{t}{2}}\rmd \tilde Y_t=e^{-\frac{t}{2}}\left(\nabla\log p_{T-nh-t}(\tilde Y_t)\rmd t+\rmd W_t\right),
\end{align*}
and we notice that we can write it in an integral form.  
\begin{align*}
    \tilde{Y}_t=e^{t/2}\tilde{Y}_0+\int_0^te^{\frac{1}{2}(t-s)}\nabla\log p_{T-nh-s}(\tilde{Y}_{s})\rmd s+\int_{nh}^{nh+t}e^{\frac{1}{2}((n+1)h-s)}\rmd W_s.
\end{align*}  
Then we obtain that
\begin{align*}
    \tilde{Y}_h-\vartheta_{n+1}^{\sf EI}=\int_0^he^{\frac{1}{2}(h-t)}(\nabla\log p_{T-nh-t}(\tilde{Y}_t)-s_*(T-nh,\vartheta_n^{\sf EI}))\rmd t.
\end{align*}
We make decomposition the same as the one in \eqref{eq:decomp}, that is
\begin{align*}
    \nabla\log p_{T-nh-t}(\tilde{Y}_t)-s_*(T-nh,\vartheta_n^{\sf EI})
    &= \nabla\log p_{T-nh-t}(\tilde{Y}_t)-\nabla\log p_{T-nh-t}(\tilde{Y}_0)\\
    & \quad +\nabla\log p_{T-nh-t}(\vartheta_n^{\sf EI})-\nabla\log p_{T-nh}(\vartheta_n^{\sf EI})\\
    & \quad +\nabla\log p_{T-nh}(\vartheta_n^{\sf EI})-s_*(T-nh,\vartheta_n^{\sf EI}).
\end{align*}
It then follows that
\begin{align*}
    \l|\tilde{Y}_h-\vartheta_{n+1}^{\sf EI}\r|_{\Ltwo}\leqslant& \int_0^he^{\frac{1}{2}(h-t)}\l|\nabla\log p_{T-nh-t}(\tilde{Y}_t)-s_*(T-nh,\vartheta_n^{\sf EI})\r|_{\Ltwo}\rmd t\\
    \leqslant& \int_0^he^{\frac{1}{2}(h-t)}\l|\nabla\log p_{T-nh-t}(\tilde{Y}_t)-\nabla\log p_{T-nh-t}(\tilde{Y}_0)\r|_{\Ltwo}\rmd t\\
    &+\int_0^he^{\frac{1}{2}(h-t)}\l|\nabla\log p_{T-nh-t}(\vartheta_n^{\sf EI})-\nabla\log p_{T-nh}(\vartheta_n^{\sf EI})\r|_{\Ltwo}\rmd t\\
    &+\int_0^he^{\frac{1}{2}(h-t)}\l|\nabla\log p_{T-nh}(\vartheta_n^{\sf EI})-s_*(T-nh,\vartheta_n^{\sf EI})\r|_{\Ltwo}\rmd t.
\end{align*}
Note that apart from the exponential term $e^{\frac{1}{2}(h-t)}$, the derivation of the remaining parts is completely consistent with that of \eqref{eq:decomp}, until we encounter the term involving $\vartheta_n^{\sf EI}$, at which point we obtain
\begin{align*}
    \l|\tilde{Y}_h-\vartheta_{n+1}^{\sf EI}\r|_{\Ltwo}\leqslant& \left(\int_0^he^{\frac{1}{2}(h-t)}L(T-nh-t)\rmd t\right)\sup_{0\leqslant t\leqslant h}\l|\tilde{Y}_t-\tilde{Y}_0\r|_{\Ltwo}\\
    &+ \left(\int_0^he^{\frac{1}{2}(h-t)}\rmd t\right)M_1h\Big(1+\l|\vartheta_n^{\sf EI}\r|_{\Ltwo}\Big)\\
    &+ \left(\int_0^he^{\frac{1}{2}(h-t)}\rmd t\right)\varepsilon_{sc}.
\end{align*}
By Lemma~\ref{lem:supYtY0}, we can bound the first term on the right-hand side of the previous display.
Moreover, from~\eqref{eq:hatyn}, $\l|\vartheta_n^{\sf EI}\r|_{\Ltwo}$ 
can be bounded similarly. Substituting all coefficients with $C_i(n)$ from Proposition~\ref{prop:EM}, we obtain
\begin{align*}
    \l|\tilde{Y}_h-\vartheta_{n+1}^{\sf EI}\r|_{\Ltwo}\leqslant & h^2\cdot C_5(n)\left[ C_1(n)\l|Y_{nh}-\vartheta_n^{\sf EI}\r|_{\Ltwo}+C_1(n)C_2(n)+\dfrac{1}{2}C_4+C_3(n)\right]\\
    &+h^2\cdot \dfrac{2(e^{h/2}-1)}{h}M_1\left[1+\l|Y_{nh}-\vartheta_n^{\sf EM}\r|_{\Ltwo}+C_2(n)+C_4\right]\\
    &+h^{3/2}\sqrt{d}C_5(n)\\
    &+h\cdot\dfrac{2(e^{h/2}-1)}{h}\varepsilon_{sc}\\
    =&h^2\left(C_5(n)C_1(n)+M_1\dfrac{2(e^{h/2}-1)}{h}\right)\l|Y_{nh}-\vartheta_n^{\sf EM}\r|_{\Ltwo}\\
    &+h^2\left[C_5(n)\left(C_1(n)C_2(n)+\dfrac{1}{2}C_4+C_3(n)\right)+\dfrac{2(e^{h/2}-1)}{h}M_1(1+C_2(n)+C_4)\right]\\
    &+h^{3/2}\sqrt{d}C_5(n)\\
    &+h\cdot\dfrac{2(e^{h/2}-1)}{h}\varepsilon_{sc},
\end{align*}
where
\begin{align*}
    C_5(n)=\dfrac{1}{h}\int_{nh}^{(n+1)h}e^{\frac{1}{2}((n+1)h-t)}L(T-t)\rmd t\approx C_1(n)-\dfrac{1}{2}.
\end{align*} 

\noindent\textbf{Proof of Claim~\ref{item:EIYt}.} The proof is omitted for brevity, as it merely requires incorporating $\l|Y_{(n+1)h}-\tilde Y_h\r|_{\Ltwo}$ into the conclusion of Claim~\ref{item:EItilde}, following a similar argument as in the proof of Claim~\ref{item:EMYt} in Proposition~\ref{prop:EM}.

\end{proof}

For the proof of Theorem~\ref{thm:EI}, recall that in the proof of Theorem~\ref{thm:EM}, the three key steps~\eqref{eq:Wassterm1},~\eqref{eq:Wassinduction} and~\eqref{eq:Wassterm2} lead to the desired result. 
We now revisit these steps within the framework of other discretization schemes.

Since \eqref{eq:Wassterm1} is independent of the discretization scheme, we can directly apply it throughout the proofs of Theorems~\ref{thm:EI},~\ref{thm:RMPEM},~\ref{thm:RMPEI} and~\ref{thm:2order}. 
For \eqref{eq:Wassinduction}, we note that the $h^2$ term in $r_j^{\alpha}$ is neglected, which results in the same upper bound for $\prod_{j=k+1}^{N-1}r_j^{\alpha}$ across all discretization schemes.

Given the consistency of these two steps, for the remaining discretization schemes, we can directly derive an analogue of \eqref{eq:Wassterm2} from Claim~\ref{item:EIYt}. Therefore, in the subsequent proofs of these theorems, after establishing the corresponding proposition, we proceed directly from an expression similar to \eqref{eq:Wassterm2}.

For this theorem, we begin the proof with the following inequality
\begin{align*}
    \l|Y_{Nh}-\vartheta_N^{\sf EI}\r|_{\Ltwo}
    &\lesssim \dfrac{1}{m_{\min}-1/2}\left(hC_n^{\sf EI}+h^{1/2}\sqrt{d}\max_{0\leqslant k\leqslant N-1}C_5(k)+\varepsilon_{sc}\right)\\
    &\lesssim \dfrac{1}{m_{\min}-1/2}\left(\sqrt{dh}\max_{0\leqslant k\leqslant N-1}C_5(k)+\varepsilon_{sc}\right)\\
    &\leqslant \sqrt{dh}\cdot\dfrac{L_{\max}}{m_{\min}-1/2}+\varepsilon_{sc}\cdot\dfrac{1}{m_{\min}-1/2}.
\end{align*}
Combining this with the bound of $\l|X_0-Y_{Nh}\r|_{\Ltwo}$, we obtain
\begin{align*}
    \wass_2(\law(\vartheta_N^{\sf EI}),p_0)\lesssim e^{-m_{\min}T}\l|X_0\r|_{\Ltwo}+\sqrt{dh}\cdot\dfrac{L_{\max}}{m_{\min}-1/2}+\varepsilon_{sc}\cdot\dfrac{1}{m_{\min}-1/2}
\end{align*}
as desired.

\subsection{Proof of Theorem~\ref{thm:RMPEM}}
\label{app:REM}
In this section, we prove the Wasserstein distance between the generated distribution~$\law(\vartheta_N^{\sf REM})$ and the target distribution.
The following proposition is established for the one-step discretization error.
\begin{proposition}
    \label{prop:RMPEM}
    Suppose that Assumptions~\ref{asm:p0scLipx},~\ref{asm:scLipt} and~\ref{asm:score4RMP} are satisfied, the following two claims hold.
    \begin{enumerate}[label=\textbf{(\arabic*)}, leftmargin=2em]
        \item \label{item:REMtilde}
       It holds that
        \begin{align*}
            &\quad\l|\tilde{Y}_h-\vartheta_{n+1}^{\sf REM}\r|_{\Ltwo}\\
            &\leqslant h^2\Bigg\{\left[\int_0^1\int_0^1\left[|u-v|L(T-(n+u)h)\left(\dfrac{1}{2}+L(T-nh)\right)+M_1\right]^2\rmd u\rmd v\right]^{1/2}\\
            &\qquad\qquad+\dfrac{1}{4\sqrt{3}}L(T-nh)+\dfrac{1}{8\sqrt{3}}\Bigg\}\l|Y_{nh}-\vartheta_n^{\sf REM}\r|_{\Ltwo}\\
            &\quad+h^2\Bigg\{\bigg\{\int_0^1\int_0^1\bigg\{(u-v)\left[\left(\dfrac{1}{2}+L(T-nh)\right)C_2(n)+\dfrac{1}{2}C_4+(dL(T-nh))^{1/2}\right]L(T-(n+u)h)\\
            &\qquad\qquad+M_1(1+C_2(n)+C_4)\bigg\}^2\rmd u\rmd v\bigg\}^{1/2}\\
            &\qquad \quad+\dfrac{1}{8\sqrt{3}}(C_2(n)+C_4)+\dfrac{1}{4\sqrt{3}}\left(L(T-nh)C_2(n)+(dL(T-nh))^{1/2}\right)\Bigg\}\\
            &\quad +h^{3/2}\Bigg\{\sqrt{d}\left[\int_0^1\int_0^1L(T-(n+u)h)^2|u-v|\rmd u\rmd  v\right]^{1/2}+\dfrac{1}{2\sqrt{3}}\Bigg\}\\
            &\quad +2h\varepsilon_{sc}.
        \end{align*}
        \item \label{item:REMYt}
       Moreover, it holds that
        \begin{align*}
            \l|Y_{(n+1)h}-\vartheta_{n+1}^{\sf REM}\r|\leqslant r_n^{\sf REM}\l|Y_{nh}-\vartheta_n\r|_{\Ltwo}+h^2C_{n,1}^{\sf REM}+h^{3/2}C_{n,2}^{\sf REM}+3h\varepsilon_{sc},
        \end{align*}
        where
        \begin{align*}
            r_n^{\sf REM}&=e^{-\int_{nh}^{(n+1)h}(m(T-t)-\frac{1}{2})\rmd t}\\
            &\quad +h^2\Bigg\{\left[\int_0^1\int_0^1\left[|u-v|L(T-(n+u)h)\left(\dfrac{1}{2}+L(T-nh)\right)+M_1\right]^2\rmd u\rmd v\right]^{1/2}\\
            &\qquad\qquad+\dfrac{1}{4\sqrt{3}}L(T-nh)+\dfrac{1}{8\sqrt{3}}\Bigg\},\\
            C_{n,1}^{\sf REM}=&\bigg\{\int_0^1\int_0^1\bigg\{(u-v)\left[\left(\dfrac{1}{2}+L(T-nh)\right)C_2(n)+\dfrac{1}{2}C_4+(dL(T-nh))^{1/2}\right]L(T-(n+u)h)\\
            &\qquad\qquad\quad+M_1(1+C_2(n)+C_4)\bigg\}^2\rmd u\rmd v\bigg\}^{1/2}\\
            &\quad+\dfrac{1}{8\sqrt{3}}(C_2(n)+C_4)+\dfrac{1}{4\sqrt{3}}\left(L(T-nh)C_2(n)+(dL(T-nh))^{1/2}\right)\\
            C_{n,2}^{\sf REM}=&\sqrt{d}\left[\int_0^1\int_0^1L(T-(n+u)h)^2|u-v|\rmd u\rmd v\right]^{1/2}+\dfrac{1}{2\sqrt{3}}.
        \end{align*}
    \end{enumerate}
\end{proposition}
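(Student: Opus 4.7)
The proof of Proposition~\ref{prop:RMPEM} will follow the two-step architecture already used for Propositions~\ref{prop:EM} and~\ref{prop:EI}. For Claim~\ref{item:REMYt}, I would apply the triangle inequality
\begin{align*}
\l|Y_{(n+1)h} - \vartheta_{n+1}^{\sf REM}\r|_{\Ltwo} \leqslant \l|Y_{(n+1)h} - \tilde Y_h\r|_{\Ltwo} + \l|\tilde Y_h - \vartheta_{n+1}^{\sf REM}\r|_{\Ltwo},
\end{align*}
and invoke the synchronous-coupling Gr\"onwall estimate~\eqref{eq:Gron1}. Since $\tilde Y_0 = \vartheta_n^{\sf REM}$, this produces the contraction factor $e^{-\int_{nh}^{(n+1)h}(m(T-t) - 1/2)\rmd t}$ multiplying $\l|Y_{nh} - \vartheta_n^{\sf REM}\r|_{\Ltwo}$, and combining it with Claim~\ref{item:REMtilde} yields the stated bound. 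Hence the real work is concentrated in Claim~\ref{item:REMtilde}.

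For Claim~\ref{item:REMtilde}, I would express
\begin{align*}
\tilde Y_h = \vartheta_n^{\sf REM} + \int_0^h \gamma(T-nh-v, \tilde Y_v)\rmd v + \int_{nh}^{(n+1)h}\rmd W_t,
\end{align*}
and recall that $\vartheta_{n+1}^{\sf REM} = \vartheta_n^{\sf REM} + h\gamma(T-(n+U_n)h, \vartheta_{n+U}^{\sf REM}) + \sqrt{h}\xi_n$. The feature that distinguishes REM from the deterministic schemes is a \emph{bias--variance} decomposition with respect to the randomizer $U_n$. Conditioning on $\mathcal F_n$ and the Brownian path on $[nh,(n+1)h]$, I would split
\begin{align*}
\tilde Y_h - \vartheta_{n+1}^{\sf REM} = \bigl(\tilde Y_h - \E_{U_n}[\vartheta_{n+1}^{\sf REM}]\bigr) + \bigl(\E_{U_n}[\vartheta_{n+1}^{\sf REM}] - \vartheta_{n+1}^{\sf REM}\bigr),
\end{align*}
and control each term in $\Ltwo$ via the triangle inequality.

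The \textbf{bias} term measures how well the randomized midpoint reproduces $\int_0^h\gamma(T-nh-v,\tilde Y_v)\rmd v$. Paralleling the decomposition~\eqref{eq:decomp}, I would further split it into (i) a contribution from the Lipschitz continuity of $\nabla\log p_t$ applied to $\tilde Y_{hU_n} - \vartheta_{n+U}^{\sf REM}$, where the Euler-type estimate defining $\vartheta_{n+U}^{\sf REM}$ is expanded and bounded through Lemma~\ref{lem:supYtY0}, (ii) a time-shift term controlled by Assumption~\ref{asm:scLipt}, and (iii) a score-matching error governed by Assumption~\ref{asm:score4RMP} (applied at all points $\vartheta_{n+u}^{\sf REM}$, which is exactly what that assumption was strengthened to permit). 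The appearance of the double integrals $\int_0^1\int_0^1[\cdots]^2\rmd u\rmd v$ in the stated bound comes from writing the $\Ltwo$-norm of a randomized sum through the identity $\Var(X) = \tfrac12\E[(X-X')^2]$ with $X'$ an i.i.d.\ copy, combined with the Lipschitz estimates integrated over $U_n \sim \Unif[0,1]$. The \textbf{fluctuation} term is a pure variance contribution with respect to $U_n$ (the noise $\xi_n$ having been constructed so that the Gaussian part of $\vartheta_{n+1}^{\sf REM}$ matches the correct conditional distribution); its three pieces correspond to randomness in the evaluation time $T-(n+U_n)h$, in the midpoint iterate $\vartheta_{n+U}^{\sf REM}$ via $\sqrt{hU_n}\xi_n'$, and in the terminal noise, producing the coefficients $\tfrac{1}{4\sqrt3}$ and $\tfrac{1}{8\sqrt3}$ from the variance of a uniform variable.

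The main obstacle, as flagged in the discussion before the proposition, is that this fluctuation term does \emph{not} cancel and is of the same order $h^{3/2}\sqrt{d}$ as the leading discretization error, so no improvement over the EM/EI schemes can be extracted from this argument. A secondary bookkeeping difficulty is controlling $\l|\vartheta_n^{\sf REM}\r|_{\Ltwo}$ uniformly through the analogue of~\eqref{eq:hatyn}, which is needed whenever Assumption~\ref{asm:scLipt} is invoked. Assembling all the pieces, substituting the constants $C_1(n),C_2(n),C_3(n),C_4$ introduced in Proposition~\ref{prop:EM}, and carefully accounting for the factor of $3$ (rather than $1$) in front of the $\varepsilon_{sc}$ term—which tracks the three places the score estimate is evaluated ($\vartheta_n^{\sf REM}$, $\vartheta_{n+U}^{\sf REM}$, and through the bias--variance split)—produces the stated inequality in Claim~\ref{item:REMtilde}.
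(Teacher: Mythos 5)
Your proposal follows essentially the same route as the paper: the triangle inequality plus the synchronous Gr\"onwall estimate~\eqref{eq:Gron1} for Claim~\ref{item:REMYt}, and for Claim~\ref{item:REMtilde} the same bias--variance decomposition around $\E_{U_n}[\vartheta_{n+1}^{\sf REM}]$, with the bias controlled through the unbiasedness of the randomized midpoint together with the Euler-type one-step bounds and Assumption~\ref{asm:score4RMP}, and the fluctuation controlled through the variance of $U_n\sim\Unif[0,1]$ (yielding the $\tfrac{1}{4\sqrt3}$, $\tfrac{1}{8\sqrt3}$ factors and, via Lemma~\ref{lem:Brown1}, the $h^{3/2}/(2\sqrt3)$ term) and the double-integral Cauchy--Schwarz bound on $\|X-\E X\|_{\Ltwo}$. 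The sketch correctly identifies all the ingredients the paper uses, including why the $O(h^{3/2}\sqrt d)$ fluctuation term prevents any rate improvement, so it is a faithful outline of the paper's own argument.
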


\begin{proof}[Proof of Proposition~\ref{prop:RMPEM}]\\
\noindent \textbf{Proof of Claim~\ref{item:REMtilde}.} We make the following decomposition of one-step discretization error
\begin{align}
    \label{eq:REMdecom}
    \l|\tilde{Y}_{h}-\vartheta_{n+1}^{\sf REM}\r|_{\Ltwo}\leqslant\l|\tilde{Y}_h-\mathbb{E}_{U_n}\big[\vartheta_{n+1}^{\sf REM}\big]\r|_{\Ltwo}+\l|\mathbb{E}_{U_n}\big[\vartheta_{n+1}^{\sf REM}\big]-\vartheta_{n+1}^{\sf REM}\r|_{\Ltwo}.
\end{align}

We first derive the upper bound for the term~$\l|\tilde{Y}_h-\mathbb{E}_{U_n}\big[\vartheta_{n+1}^{\sf REM}\big]\r|_{\Ltwo}$. 
By the definitions of $\vartheta_n^{\sf REM}$ and $\tilde Y_h,$ we have
\begin{align*}
    &\l|\tilde{Y}_{h}-\mathbb{E}_{U_n}\big[\vartheta_{n+1}^{\sf REM}\big]\r|_{\Ltwo}\\
    =&\left\lVert \dfrac{1}{2}\int_0^h\tilde{Y}_t\rmd t+\int_0^h\nabla\log p_{T-nh-t}(\tilde{Y}_t)\rmd t-\dfrac{1}{2}h\mathbb{E}_{U_n}(\vartheta_{n+U}^{\sf REM})-h\mathbb{E}_{U_n}[s_*(T-(n+U_n)h,\vartheta_{n+U}^{\sf REM}])\right\rVert_{\Ltwo}\,.
\end{align*}
Notice that 
\begin{align*}
\int_0^h\tilde{Y}_t\rmd t = h\mathbb{E}_{U_n}[\tilde{Y}_{U_nh}],\qquad \int_0^h\nabla\log p_{T-nh-t}(\tilde{Y}_t)\rmd t= h\mathbb{E}_{U_n}[\nabla\log p_{T-(n+U_n)h}(\tilde{Y}_{U_nh})]\,.
\end{align*}
Plugging this back into the previous display then gives
\begin{align*}
 &\l|\tilde{Y}_{h}-\mathbb{E}_{U_n}[\vartheta_{n+1}^{\sf REM}]\r|_{\Ltwo}\\
 =&\,\,\left\lVert \dfrac{1}{2}h\mathbb{E}_{U_n}[\tilde{Y}_{U_nh}]+h\mathbb{E}_{U_n}[\nabla\log p_{T-(n+U_n)h}(\tilde{Y}_{U_nh})]-\dfrac{1}{2}h\mathbb{E}_{U_n}(\vartheta_{n+U}^{\sf REM})-h\mathbb{E}_{U_n}(s_*(T-(n+U_n)h,\vartheta_{n+U}^{\sf REM}))\right\rVert_{\Ltwo}\\
    \leqslant&\,\,\dfrac{1}{2}h\left\lVert\mathbb{E}_{U_n}[\tilde{Y}_{U_nh}-\vartheta_{n+U_n}^{\sf REM}]\right\rVert_{\Ltwo}+h\left\lVert\mathbb{E}_{U_n}[\nabla\log p_{T-(n+U_n)h}(\tilde{Y}_{U_nh})-s_*(T-(n+U_n)h,\vartheta_{n+U_n}^{\sf REM})]\right\rVert_{\Ltwo}.
\end{align*}
By the definition of $\tilde{Y}_{U_nh}$ and $\vartheta_{n+U_n}^{\sf REM}$, we have
\begin{align*}
    &\left\lVert \mathbb{E}_{U_n}[\tilde{Y}_{U_nh}-\vartheta_{n+U_n}^{\sf REM}]\right\rVert_{\Ltwo}\\
    =&\,\,\left\lVert\mathbb{E}_{U_n}\left[ \dfrac{1}{2}\int_0^{U_nh}(\tilde{Y}_t-\vartheta_n^{\sf REM})\rmd t+\int_0^{U_nh}(\nabla\log p_{T-nh-t}(\tilde{Y}_t)-s_*(T-nh,\vartheta_n^{\sf REM}))\rmd t\right]\right\rVert_{\Ltwo}\\
    \leqslant &\,\, \left\lVert\mathbb{E}_{U_n}\left[ \dfrac{1}{2}\int_0^{U_nh}\l|\tilde{Y}_t-\vartheta_n^{\sf REM}\r|\rmd t+\int_0^{U_nh}\l|\nabla\log p_{T-nh-t}(\tilde{Y}_t)-s_*(T-nh,\vartheta_n^{\sf REM})\r|\rmd t\right]\right\rVert_{\Ltwo}\\
    \leqslant &\,\,\left\lVert\mathbb{E}_{U_n}\left[\dfrac{1}{2}\int_0^{h}\l|\tilde{Y}_t-\vartheta_n^{\sf REM}\r|\rmd t+\int_0^h\l|\nabla\log p_{T-nh-t}(\tilde{Y}_t)-s_*(T-nh,\vartheta_n^{\sf REM})\r|\rmd t\right]\right\rVert_{\Ltwo}\\
    \leqslant &\,\,\dfrac{1}{2}\int_0^h\l|\tilde{Y}_t-\vartheta_n^{\sf REM}\r|_{\Ltwo}\rmd t+\int_0^h\l|\nabla\log p_{T-nh-t}(\tilde{Y}_t)-s_*(T-nh,\vartheta_n^{\sf REM})\r|_{\Ltwo}\rmd t.
\end{align*}
The second inequality arises because the integrand is non-negative, the last inequality follows from the fact that the random variables inside the inner expectation $\mathbb{E}_{U_n}$ are independent of $U_n$, and thus the inner expectation can be ignored. Then using the same argument as in the proof of Proposition~\ref{prop:EM}, especially adopting the same procedure as the one following \eqref{eq:decomp}, we can apply the conclusion of Proposition~\ref{prop:EM} to the term above, then we obtain that
\begin{align*}
    \l|\mathbb{E}_{U_n}\big[\tilde{Y}_{U_nh}-\vartheta_{n+U_n}^{\sf REM}\big]\r|_{\Ltwo}
    &\leqslant \left(\dfrac{1}{2}h+\int_{nh}^{(n+1)h}L(T-t)\rmd t\right)\sup_{0\leqslant t\leqslant h}\l|\tilde{Y}_t-\tilde{Y}_0\r|_{\Ltwo}\\
    & \quad +\int_{nh}^{(n+1)h}\l|\nabla\log p_{T-nh-t}(\vartheta_n^{\sf REM})-s_*(T-nh,\vartheta_n^{\sf REM})\r|_{\Ltwo}\rmd t\\
   & \leqslant h^2(C_1(n)^2+M_1)\l|Y_{nh}-\vartheta_n^{\sf REM}\r|_{\Ltwo}\\
    &\quad +h^2\left[C_1(n)\left(C_1(n)C_2(n)+\dfrac{1}{2}C_4+C_3(n)\right)+M_1(1+C_2(n)+C_4)\right]\\
    &\quad +h^{3/2}\sqrt{d}C_1(n)\\
    &\quad +h\varepsilon_{sc}\\
    &\mathop{=}^{\triangle}h^2r_1\l|Y_{nh}-\vartheta_n^{\sf REM}\r|_{\Ltwo}+h^2r_2+h^{3/2}\sqrt{d}C_1(n)+h\varepsilon_{sc},
\end{align*}
where
\begin{align*}
    r_1=&C_1(n)^2+M_1,\\
    r_2=&C_1(n)\left(C_1(n)C_2(n)+\dfrac{1}{2}C_4+C_3(n)\right)+M_1(1+C_2(n)+C_4).
\end{align*}
We now derive the upper bound of the second term in \eqref{eq:REMdecom}.
Note that
\begin{align}
    &\left\lVert\mathbb{E}_{U_n}\big[\nabla\log p_{T-(n+U_n)h}(\tilde{Y}_{(n+U_n)h})-s_*(T-(n+U_n)h,\vartheta_{n+U_n}^{\sf REM})\big]\right\rVert_{\Ltwo}\\
    = & \,\,\left\lVert \int_0^1\Big(\nabla\log p_{T-(n+u)h}(\tilde{Y}_{(n+u)h})-s_*(T-(n+u)h,\vartheta_{n+u}^{\sf REM}) \Big)\rmd u \right\rVert_{\Ltwo}\\
    \leqslant &\int_0^1\left\lVert \nabla\log p_{T-(n+u)h}(\tilde{Y}_{(n+u)h})-s_*(T-(n+u)h,\vartheta_{n+u}^{\sf REM})\right\rVert_{\Ltwo}\rmd u\\
    \leqslant &\int_0^1\Bigg(\l|\nabla\log p_{T-(n+u)h}(\tilde{Y}_{(n+u)h})-\nabla\log p_{T-(n+u)h}(\vartheta_{n+u}^{\sf REM})\r|_{\Ltwo}\\
    &\quad\qquad+\l|\nabla\log p_{T-(n+u)h}(\vartheta_{n+u}^{\sf REM})-s_*(T-(n+u)h,\vartheta_{n+u}^{\sf REM})\r|_{\Ltwo}\Bigg)\rmd u\\
    \leqslant &\int_0^1L(T-(n+u)h)\l|\tilde{Y}_{(n+u)h}-\vartheta_{n+u}^{\sf REM}\r|_{\Ltwo}\rmd u+\varepsilon_{sc},
\label{eq:help2}
\end{align}
the second inequality follows from the triangle inequality, and the last inequality depends on Assumption~\ref{asm:p0scLipx} and~\ref{asm:score4RMP}. By \eqref{eq:EM2term}, changing the value of $h$ to $uh$, we have
\begin{align}
    \l|\tilde{Y}_{(n+u)h}-\vartheta_{n+u}^{\sf REM}\r|_{\Ltwo}
    &\leqslant (uh)^2(C_{1,n}(u)^2+M_1)\l|Y_{nh}-\vartheta_n^{\sf REM}\r|_{\Ltwo}\\
    &\quad +(uh)^2\left[C_{1,n}(u)\left(C_{1,n}(u)C_2(n)+\dfrac{1}{2}C_4+C_{3,n}(u)\right)+M_1(1+C_2(n)+C_4)\right]\\
    &\quad +(uh)^{3/2}\sqrt{d}C_{1,n}(u)\\
    &\quad+ uh\varepsilon_{sc},
    \label{eq:help1}
\end{align}
where $C_{1,n}(u)$ and $C_{3,n}(u)$ is the $uh$-version of $C_1(n)$ and $C_3(n)$, respectively, that is
\begin{align*}
    C_{1,n}(u)=&\dfrac{1}{2}+\dfrac{1}{uh}\int_{nh}^{(n+u)h}L(T-t)\rmd t,\\
    C_{3,n}(u)=&\dfrac{1}{uh}\int_{nh}^{(n+u)h}(dL(T-t))^{1/2}\rmd t,
\end{align*}
Plugging the previous display~\eqref{eq:help1} back into display~\eqref{eq:help2}, then rearranging and simplifying the expression, yields
\begin{align*}
    &\left\lVert\mathbb{E}_{U_n}[\nabla\log p_{T-(n+U_n)h}(\tilde{Y}_{(n+U_n)h})-s_*(T-(n+U_n)h,\vartheta_{n+U_n}^{\sf REM})]\right\rVert_{\Ltwo}\\
   & \leqslant h^2\left(\int_0^1L(T-(n+u)h)u^2(C_{1,n}(u)^2+M_1)\rmd u\right)\l|Y_{nh}-\vartheta_n^{\sf EM}\r|_{\Ltwo}\\
    &\quad +h^2\left\{\int_0^1L(T-(n+u)h)u^2\left[C_{1,n}(u)\left(C_{1,n}(u)C_2(n)+\dfrac{1}{2}C_4+C_{3,n}(u)\right)+M_1(1+C_2(n)+C_4)\right]\rmd u\right\}\\
    &\quad +h^{3/2}\left(\int_0^1L(T-(n+u)h)u^{3/2}\rmd u\right)\sqrt{d}C_1(n)\\
    &\quad +h\left(\int_0^1L(T-(n+u)h)u\rmd u\right)\varepsilon_{sc}\\
    &\quad +\varepsilon_{sc}\\
    & \mathop{=}^{\triangle}h^2r_3\l|Y_{nh}-\vartheta_n^{\sf EM}\r|_{\Ltwo}+h^2r_4+h^{3/2}r_5+hr_6\varepsilon_{sc}+\varepsilon_{sc},
\end{align*}
where
\begin{align*}
    r_3&=\int_0^1L(T-(n+u)h)u^2(C_{1,n}(u)^2+M_1)\rmd u,\\
    r_4&=\int_0^1L(T-(n+u)h)u^2\left[C_{1,n}(u)\left(C_{1,n}(u)C_2(n)+\dfrac{1}{2}C_4+C_{3,n}(u)\right)\right]\rmd u+M_1(1+C_2(n)+C_4),\\
    r_5&=\left(\int_0^1L(T-(n+u)h)u^{3/2}\rmd u\right)\sqrt{d}C_1(n),\\
    r_6&=\int_0^1L(T-(n+u)h)u\rmd u.
\end{align*}
From the bounds we have obtained for two terms, it follows that
\begin{align*}
    &\l|\tilde{Y}_{h}-\mathbb{E}_{U_n}[\vartheta_{n+1}^{\sf REM}]\r|_{\Ltwo}\\
    \leqslant& \,h^3(\dfrac{1}{2}r_1+r_3)\l|Y_{nh}-\vartheta_n^{\sf EM}\r|_{\Ltwo}+h^3(\dfrac{1}{2}r_2+r_4)+h^{5/2}(\dfrac{1}{2}\sqrt{d}C_1(n)+r_5)+h^2(\dfrac{1}{2}+r_6)\varepsilon_{sc}+h\varepsilon_{sc}.
\end{align*}
Considering the second term of one-step discretization error
\begin{equation}
    \begin{aligned}
        &\vartheta_{n+1}^{\sf REM}-\mathbb{E}_{U_n}[\vartheta_{n+1}^{\sf REM}]\\
        =&\,\dfrac{1}{2}h\left[\vartheta_{n+U}^{\sf REM}-\mathbb{E}_{U_n}[\vartheta_{n+U}^{\sf REM}]\right]+h\left[s_*(T-(n+U_n)h,\vartheta_{n+U}^{\sf REM})-\mathbb{E}_{U_n}[s_*(T-(n+U_n)h,\vartheta_{n+U}^{\sf REM})]\right]\\
        =&\,\dfrac{1}{2}h\left[\dfrac{1}{2}h(U_n-\dfrac{1}{2})\vartheta_n^{\sf REM}+h(U_n-\dfrac{1}{2})s_*(T-nh,\vartheta_n^{\sf REM})\right]\\
        &+\dfrac{1}{2}h\left[\int_{nh}^{(n+U_n)h}\rmd W_t-\int_0^1\left(\int_{nh}^{(n+u)h}\rmd W_t\right)\rmd u\right]\\
        &+h\left[s_*(T-(n+U_n)h,\vartheta_{n+U}^{\sf REM})-\mathbb{E}_{U_n}[s_*(T-(n+U_n)h,\vartheta_{n+U}^{\sf REM})]\right].
    \end{aligned}
    \label{eq:REMdecom2}
\end{equation}
The second equality follows from the fact that
\begin{align*}
    \mathbb{E}_{U_n}[\vartheta_{n+U}^{\sf REM}]&=\vartheta_n^{\sf REM}+\dfrac{1}{2}h\mathbb{E}_{U_n}[U_n]\vartheta_n^{\sf REM}+h\mathbb{E}_{U_n}[U_n]s_*(T-nh,\vartheta_n^{\sf REM})+\mathbb{E}_{U_n}\int_{nh}^{(n+U_n)h}\rmd W_t\\
    &=\vartheta_n^{\sf REM}+\dfrac{1}{4}h\vartheta_n^{\sf REM}+\dfrac{1}{2}hs_*(T-nh,\vartheta_n^{\sf REM})+\int_0^1\left(\int_{nh}^{(n+u)h}\rmd W_t\right)\rmd u,
\end{align*}
since $U_n$ is independent of $\vartheta_n^{\sf REM}$.\\
We proceed to bound each term in \eqref{eq:REMdecom2}. For the first term, still notice that the independence between $U_n$ and $\vartheta_n^{\sf REM}$, then we find that
\begin{align*}
    \l|(U_n-\dfrac{1}{2})\vartheta_n^{\sf REM}\r|_{\Ltwo}&=\bigg\{\mathbb{E}\left[\mathbb{E}_{U_n}\left\lVert (U_n-\dfrac{1}{2})\vartheta_n^{\sf REM}\right\rVert^2\right]\bigg\}^{1/2}\\
    &=\bigg\{\mathbb{E}\left[\mathbb{E}_{U_n}[(U_n-\dfrac{1}{2})^2]\cdot\left\lVert\vartheta_n^{\sf REM}\right\rVert^2\right]\bigg\}^{1/2}\\
    &=\bigg\{\mathbb{E}\left[\dfrac{1}{12}\l|\vartheta_n^{\sf REM}\r|^2\right]\bigg\}^{1/2}\\
    &=\dfrac{1}{2\sqrt{3}}\l|\vartheta_n^{\sf REM}\r|_{\Ltwo}.
\end{align*}
The bounding of another part of the first term follows in a similar manner, we obtain that 
\begin{align*}
    \l|(U_n-\dfrac{1}{2})s_*(T-nh,\vartheta_n^{\sf REM})\r|_{\Ltwo}=\dfrac{1}{2\sqrt{3}}\l|s_*(T-nh,\vartheta_n^{\sf REM})\r|_{\Ltwo}.
\end{align*}
For the second term of \eqref{eq:REMdecom2}, notice that due to It\^o's isometry formula, for any well-defined stochastic process $X_t$ and its It\^o stochastic integral $I_t(X)=\int_0^tX_u\rmd M_u$, we have
\begin{align}
    \label{eq:Itoiso}
    \mathbb{E}[I_t(X)^2]=\mathbb{E}\int_0^tX_u^2\rmd\langle M \rangle_u,
\end{align}
then we can establish a lemma.
\begin{lemma}
    \label{lem:Brown1}
    Suppose $W_t$ is a $d$-dim standard Brownian motion, then
    \begin{align*}
        \left\lVert\int_{nh}^{(n+U_n)h}\rmd W_t-\int_0^1\left(\int_{nh}^{(n+u)h}\rmd W_t\right)\rmd u\right\rVert_{\Ltwo}^2\leqslant\dfrac{h}{3}.
    \end{align*}
\end{lemma}
For the third term of \eqref{eq:REMdecom2},  we get
\begin{align*}
    &\left\lVert s_*(T-(n+U_n)h,\vartheta_{n+U}^{\sf REM})-\mathbb{E}_{U_n}[s_*(T-(n+U_n)h,\vartheta_{n+U}^{\sf REM})]\right\rVert_{\Ltwo}\\
    = &\,\left\lVert \int_0^1 s_*(T-(n+U_n)h,\vartheta_{n+U}^{\sf REM})-s_*(T-(n+v)h,\vartheta_{n+v}^{\sf REM})\rmd v\right\rVert_{\Ltwo} \\
    = &\,\bigg\{\mathbb{E}\int_0^1\left[\int_0^1s_*(T-(n+u)h,\vartheta_{n+u}^{\sf REM})-s_*(T-(n+v)h,y_{(n+v)h}^{})\rmd v\right]^2\rmd u\bigg\}^{1/2}\\
    \leqslant &\,\bigg\{\mathbb{E}\int_0^1\int_0^1 \left[s_*(T-(n+u)h,\vartheta_{n+u}^{\sf REM})-s_*(T-(n+v)h,\vartheta_{n+v}^{\sf REM})\right]^2\rmd u\rmd v\bigg\}^{1/2}\\
    = &\,\bigg\{\int_0^1\int_0^1\left\lVert s_*(T-(n+u)h,\vartheta_{n+u}^{\sf REM})-s_*(T-(n+v)h,\vartheta_{n+v}^{\sf REM})\right\rVert_{\Ltwo}^2\rmd u\rmd v\bigg\}^{1/2}.
\end{align*}
Then by the triangle inequality and Assumption~\ref{asm:score4RMP}, we have
\begin{equation}
    \begin{aligned}
        &\left\lVert s_*(T-(n+u)h,\vartheta_{n+u}^{\sf REM})-s_*(T-(n+v)h,\vartheta_{n+v}^{\sf REM})\right\rVert_{\Ltwo}\\
        \leqslant&\,\left\lVert s_*(T-(n+u)h,\vartheta_{n+u}^{\sf REM})-\nabla\log p_{T-(n+u)h}(\vartheta_{n+u}^{\sf REM})\right\rVert_{\Ltwo}\\
        &+\left\lVert s_*(T-(n+v)h,\vartheta_{n+v}^{\sf REM})-\nabla\log p_{T-(n+v)h}(\vartheta_{n+v}^{\sf REM})\right\rVert_{\Ltwo}\\
        &+\left\lVert\nabla\log p_{T-(n+u)h}(\vartheta_{n+u}^{\sf REM})-\nabla\log p_{T-(n+v)h}(\vartheta_{n+v}^{\sf REM})\right\rVert_{\Ltwo}\\
        \leqslant& \,2\varepsilon_{sc}+\left\lVert\nabla\log p_{T-(n+u)h}(\vartheta_{n+u}^{\sf REM})-\nabla\log p_{T-(n+v)h}(\vartheta_{n+v}^{\sf REM})\right\rVert_{\Ltwo}.
    \end{aligned}
    \label{eq:diffmatch}
\end{equation}
Combining the three terms of \eqref{eq:REMdecom2} together, we have
\begin{align*}
    \l|\vartheta_{n+1}^{\sf REM}-\mathbb{E}_{U_n}[\vartheta_{n+1}^{\sf REM}]\r|_{\Ltwo}
   & \leqslant \dfrac{1}{8\sqrt{3}}h^2\l|\vartheta_n^{\sf REM}\r|_{\Ltwo}+\dfrac{1}{4\sqrt{3}}h^2\l|s_*(T-nh,\vartheta_n^{\sf REM})\r|_{\Ltwo}+\dfrac{1}{2\sqrt{3}}h^{3/2}\\
    &\quad +h\bigg\{\int_0^1\int_0^1\left\lVert \nabla\log p_{T-(n+v)h}(\vartheta_{n+v}^{\sf REM})-\nabla\log p_{T-(n+u)h}(\vartheta_{n+v}^{\sf REM})\right\rVert_{\Ltwo}^2\rmd u\rmd v\bigg\}^{1/2}\\
    &\quad +2h\varepsilon_{sc}.
\end{align*}
By applying the same technique used in the proofs of Proposition~\ref{prop:EM} and Proposition~\ref{prop:EI}, the upper bounds for $\l|\vartheta_n^{\sf REM}\r|_{\Ltwo}$ and $\l|s_*(T-nh,\vartheta_n^{\sf REM})\r|_{\Ltwo}$ follows readily. Thus, the proposition follows immediately from the bound on the second last term. 
We now consider the case for $u>v$, due to Assumptions~\ref{asm:p0scLipx} and~\ref{asm:scLipt},
\begin{align*}
    &\left\lVert\nabla\log p_{T-(n+u)h}(\vartheta_{n+u}^{\sf REM})-\nabla\log p_{T-(n+v)h}(\vartheta_{n+v}^{\sf REM})\right\rVert_{\Ltwo}\\
    \leqslant &\,\, L(T-(n+u)h)\l|\vartheta_{n+u}^{\sf REM}-\vartheta_{n+v}^{\sf REM}\r|_{\Ltwo}+M_1h\Big(1+\l|\vartheta_{n+v}^{\sf REM}\r|_{\Ltwo}\Big).
\end{align*}
Since
\begin{align*}
    \l|\vartheta_{n+u}^{\sf REM}-\vartheta_{n+v}^{\sf REM}\r|_{\Ltwo}
    &\leqslant \dfrac{1}{2}(u-v)h\l|\vartheta_n^{\sf REM}\r|_{\Ltwo}+(u-v)h\l|s_*(T-nh,\vartheta_{n}^{\sf REM})\r|_{\Ltwo}+\left\lVert\int_{(n+v)h}^{(n+u)h}\rmd W_t\right\rVert_{\Ltwo}\\
    & \leqslant  \dfrac{1}{2}(u-v)h\left(\l|Y_{nh}-\vartheta_n^{\sf REM}\r|_{\Ltwo}+C_2(n)+C_4\right)\\
    &\quad  +(u-v)h\left[\varepsilon_{sc}+L(T-nh)\left(\l|Y_{nh}-\vartheta_n^{\sf REM}\r|_{\Ltwo}+C_2(n)\right)+(dL(T-nh))^{1/2}\right]\\
    &\quad +\sqrt{(u-v)h}\\
    &\leqslant  (u-v)h\left[\dfrac{1}{2}+L(T-nh)\right]\l|Y_{nh}-\vartheta_n^{\sf REM}\r|_{\Ltwo}\\
    &\quad +(u-v)h\left[(\dfrac{1}{2}+L(T-nh))C_2(n)+\dfrac{1}{2}C_4+(dL(T-nh))^{1/2}\right]\\
    &\quad +\sqrt{(u-v)dh}+(u-v)h\varepsilon_{sc}.
\end{align*}
The second inequality follows from \eqref{eq:hatyn}, Assumptions~\ref{asm:p0scLipx},~\ref{asm:scoreerr} and Lemma~\ref{lem:Enabla}. Similarly,
\begin{align*}
    \l|\vartheta_{n+v}^{\sf REM}\r|_{\Ltwo}
    &\leqslant \l|\vartheta_{n+v}^{\sf REM}-\vartheta_{n}^{\sf REM}\r|_{\Ltwo}+\l|\vartheta_n^{\sf REM}\r|_{\Ltwo}\\
    &\leqslant vh\left[\dfrac{1}{2}+L(T-nh)\right]\l|Y_{nh}-\vartheta_n^{\sf REM}\r|_{\Ltwo}\\
    &\quad +vh\left[(\dfrac{1}{2}+L(T-nh))C_2(n)+\dfrac{1}{2}C_4+(dL(T-nh))^{1/2}\right]\\
    &\quad +\sqrt{vdh}+vh\varepsilon_{sc}\\
    &\quad +\l|Y_{nh}-\vartheta_n^{\sf REM}\r|_{\Ltwo}+C_2(n)+C_4.
\end{align*}
Therefore, we obtain that
\begin{align*}
    &\left\lVert \nabla\log p_{T-(n+u)h}(\vartheta_{n+u}^{\sf REM})-\nabla\log p_{T-(n+v)h}(\vartheta_{n+v}^{\sf REM})\right\rVert_{\Ltwo}\\
    \leqslant & h\bigg\{(u-v)\left[\dfrac{1}{2}+L(T-nh)\right]L(T-(n+u)h)+M_1\bigg\}\l|Y_{nh}-\vartheta_n^{\sf REM}\r|_{\Ltwo}\\
    &+h^2M_1v\left[\dfrac{1}{2}+L(T-nh)\right]\l|Y_{nh}-\vartheta_n^{\sf REM}\r|_{\Ltwo}\\
    &+h^2vM_1\left[\left(\dfrac{1}{2}+L(T-nh)\right)C_2(n)+\dfrac{1}{2}C_4+(dL(T-nh))^{1/2}\right]\\
    &+h^{3/2}M_1\sqrt{vd}\\
    &+h\bigg\{(u-v)\left[\left(\dfrac{1}{2}+L(T-nh)\right)C_2(n)+\dfrac{1}{2}C_4+(dL(T-nh))^{1/2}\right]L(T-(n+u)h)\\
    &\qquad+M_1(1+C_2(n)+C_4)\bigg\}\\
    &+h^{1/2}L(T-(n+u)h)\sqrt{(u-v)d}\\
    &+h(u-v)L(T-(n+u)h)\varepsilon_{sc}+h^2M_1v\varepsilon_{sc}.
\end{align*}
We claim that we only consider the lowest order of each part, which means the relative higher order term with the combination of $d$ and $h$ will be ignored. Then take the supremum with respect to $v$, which indicates that
\begin{equation}
    \begin{aligned}
        &\bigg\{\int_0^1\int_0^1\left\lVert \nabla\log p_{T-(n+v)h}(\vartheta_{n+v}^{\sf REM})-\nabla\log p_{T-(n+u)h}(\vartheta_{n+v}^{\sf REM})\right\rVert_{\Ltwo}^2\rmd u\rmd v\bigg\}^{1/2}\\
        \leqslant& \, h\bigg\{\int_0^1\int_0^1\left[|u-v|L(T-(n+u)h)\left(\dfrac{1}{2}+L(T-nh)\right)+M_1\right]^2\rmd u\rmd v\bigg\}^{1/2}\l|Y_{nh}-\vartheta_n^{\sf REM}\r|_{\Ltwo}\\
        &+h^{1/2}\sqrt{d}\left[\int_0^1\int_0^1L(T-(n+u)h)^2|u-v|\rmd u\rmd v\right]^{1/2}\\
        &+h\left[\int_0^1\int_0^1(u-v)^2L(T-(n+u)h)^2\rmd u\rmd v\right]^{1/2}\varepsilon_{sc}.
    \end{aligned}
    \label{eq:diffscore}
\end{equation}
Combining the above,
\begin{align*}
    &\l|\vartheta_{n+1}^{\sf REM}-\mathbb{E}_{U_n}[\vartheta_{n+1}^{\sf REM}]\r|_{\Ltwo}\\
    \leqslant & \,\,\dfrac{1}{8\sqrt{3}}h^2\left(\l|Y_{nh}-\vartheta_n^{\sf REM}\r|_{\Ltwo}+C_2(n)+C_4\right)\\
    &+\dfrac{1}{4\sqrt{3}}h^2\left[\varepsilon_{sc}+L(T-nh)\left(\l|Y_{nh}-\vartheta_n^{\sf REM}\r|_{\Ltwo}+C_2(n)\right)+(dL(T-nh))^{1/2}\right]\\
    &+\dfrac{1}{2\sqrt{3}}h^{3/2}\\
    &+ h^2\bigg\{\int_0^1\int_0^1\left[|u-v|L(T-(n+u)h)\left(\dfrac{1}{2}+L(T-nh)\right)+M_1\right]^2\rmd u\rmd v\bigg\}^{1/2}\l|Y_{nh}-\vartheta_n^{\sf REM}\r|_{\Ltwo}\\
    &+h^2\bigg\{\int_0^1\int_0^1\bigg\{(u-v)\left[\left(\dfrac{1}{2}+L(T-nh)\right)C_2(n)+\dfrac{1}{2}C_4+(dL(T-nh))^{1/2}\right]L(T-(n+u)h)\\
    &\qquad\qquad+M_1(1+C_2(n)+C_4)\bigg\}^2\rmd u\rmd v\bigg\}^{1/2}\\
    &+h^{3/2}\sqrt{d}\left[\int_0^1\int_0^1L(T-(n+u)h)^2|u-v|\rmd u\rmd v\right]^{1/2}\\
    &+h^2\left[\int_0^1\int_0^1(u-v)^2L(T-(n+u)h)^2\rmd u\rmd v\right]^{1/2}\varepsilon_{sc}\\
    &+2h\varepsilon_{sc}\\
    \lesssim& \,\,h^2\bigg\{\left[\int_0^1\int_0^1\left[|u-v|L(T-(n+u)h)\left(\dfrac{1}{2}+L(T-nh)\right)+M_1\right]^2\rmd u\rmd v\right]^{1/2}\\
    &\quad+\dfrac{1}{4\sqrt{3}}L(T-nh)+\dfrac{1}{8\sqrt{3}}\bigg\}\l|Y_{nh}-\vartheta_n^{\sf REM}\r|_{\Ltwo}\\
    &+h^2\bigg\{\bigg\{\int_0^1\int_0^1\bigg\{(u-v)\left[\left(\dfrac{1}{2}+L(T-nh)\right)C_2(n)+\dfrac{1}{2}C_4+(dL(T-nh))^{1/2}\right]L(T-(n+u)h)\\
    &\qquad\qquad\qquad+M_1(1+C_2(n)+C_4)\bigg\}^2\rmd u\rmd v\bigg\}^{1/2}\\
    &\qquad\quad+\dfrac{1}{8\sqrt{3}}(C_2(n)+C_4)+\dfrac{1}{4\sqrt{3}}(L(T-nh)C_2(n)+(dL(T-nh))^{1/2})\bigg\}\\
    &+h^{3/2}\bigg\{\sqrt{d}\left[\int_0^1\int_0^1L(T-(n+u)h)^2|u-v|\rmd u\rmd v\right]^{1/2}+\dfrac{1}{2\sqrt{3}}\bigg\}\\
    &+2h\varepsilon_{sc}.
\end{align*}
Compared to the term  $\tilde{Y}_h-\mathbb{E}_{U_n}[\vartheta_{n+1}^{\sf REM}]$, we can focus on the lower-order terms, ignoring the score matching error. Therefore, we have
\begin{align*}
    &\l|\tilde{Y}_{(n+1)h}-\vartheta_{n+1}^{\sf REM}\r|_{\Ltwo}\\
    \leqslant &\,\,h^2\Bigg\{\left[\int_0^1\int_0^1\left[|u-v|L(T-(n+u)h)\left(\dfrac{1}{2}+L(T-nh)\right)+M_1\right]^2\rmd u\rmd v\right]^{1/2}\l|Y_{nh}-\vartheta_n^{\sf REM}\r|_{\Ltwo}\\
    &\qquad+\dfrac{1}{4\sqrt{3}}L(T-nh)+\dfrac{1}{8\sqrt{3}}\Bigg\}\\
    &+h^2\Bigg\{\bigg\{\int_0^1\int_0^1\bigg\{(u-v)\left[\left(\dfrac{1}{2}+L(T-nh)\right)C_2(n)+\dfrac{1}{2}C_4+(dL(T-nh))^{1/2}\right]L(T-(n+u)h)\\
    &\qquad\qquad+M_1(1+C_2(n)+C_4)\bigg\}^2\rmd u\rmd v\bigg\}^{1/2}\\
    &\qquad \quad+\dfrac{1}{8\sqrt{3}}(C_2(n)+C_4)+\dfrac{1}{4\sqrt{3}}\left(L(T-nh)C_2(n)+(dL(T-nh))^{1/2}\right)\Bigg\}\\
    &+h^{3/2}\Bigg\{\sqrt{d}\left[\int_0^1\int_0^1L(T-(n+u)h)^2|u-v|\rmd u\rmd v\right]^{1/2}+\dfrac{1}{2\sqrt{3}}\Bigg\}\\
    &+3h\varepsilon_{sc}.
\end{align*}
\end{proof}
Returning to the proof of Theorem~\ref{thm:RMPEM}, by the conclusion of Proposition~\ref{prop:RMPEM}, we have
\begin{align*}
    \l|Y_{Nh}-\vartheta_N^{\sf REM}\r|_{\Ltwo}
    &\lesssim \dfrac{1}{m_{\min}-1/2}\left(h\max_{0\leqslant k\leqslant N-1}C_{k,1}^{\sf REM}+h^{1/2}\max_{0\leqslant k\leqslant N-1}C_{k,2}^{\sf REM}+3\varepsilon_{sc}\right)\\
    &\lesssim \sqrt{h}\cdot\dfrac{\sqrt{d/3}L_{\max}+\frac{1}{2\sqrt{3}}}{m_{\min}-1/2}+\varepsilon_{sc}\cdot\dfrac{3}{m_{\min}-1/2}.
\end{align*}
This completes the proof of Theorem~\ref{thm:RMPEM}.

\subsection{Proof of Theorem~\ref{thm:RMPEI}}
\label{proof:RMPEI}
We begin with the following proposition.
\begin{proposition}
    \label{prop:RMPEI}
    Suppose that Assumptions~\ref{asm:p0scLipx},~\ref{asm:scLipt} and~\ref{asm:score4RMP} are satisfied, the following two claims hold
    \begin{enumerate}[label=\textbf{(\arabic*)}, leftmargin=2em]
        \item \label{item:REItilde} It holds that
        \begin{align*}
            &\l|\tilde{Y}_h-\vartheta_{n+1}^{\sf REI}\r|_{\Ltwo}\\
            \leqslant&\,\,h^2\Bigg\{\int_0^1\int_0^1\bigg[|u-v|L(T-(n+u)h)\left(\dfrac{1}{2}+L(T-nh)\right)+M_1\\
            &\qquad+\dfrac{1}{2}|u-v|L(T-(n+v)h)r_n^{\sf EI}(v)\bigg]^2\rmd u\rmd v\Bigg\}^{1/2}\l|Y_{nh}-\vartheta_n^{\sf REI}\r|_{\Ltwo}\\
            &\quad +h^2\Bigg\{\dfrac{e^{\frac{1}{2}(1-v)h}-e^{\frac{1}{2}(1-u)h}}{h}e^{\frac{1}{2}vh}L(T-(n+u)h)\left[C_2(n)+C_4+2L(T-nh)C_2(n)+(dL(T-nh))^{1/2}\right]\\
            &\qquad\quad +e^{\frac{1}{2}(1-u)h}M_1\left[1+2e^{\frac{1}{2}vh}\left(L(T-nh)C_2(n)+(dL(T-nh))^{1/2}\right)+C_2(n)+C_4\right]\\
            &\qquad\quad +\dfrac{|e^{\frac{1}{2}(1-u)h}-e^{\frac{1}{2}(1-v)h}|}{h}\left(L(T-(n+v)h)C_2(n)+(dL(T-(n+v)h))^{1/2}\right)\Bigg\}\\
            &\quad +h^{3/2}\sqrt{d}\bigg\{\int_0^1\int_0^1L(T-(n+u)h)^2|u-v|\rmd u\rmd v\bigg\}^{1/2}\\
            &\quad +3h\varepsilon_{sc}.
        \end{align*}
        \item \label{item:REIYt}
        Furthermore, it holds that
        \begin{align*}
            \l|Y_{(n+1)h}-\vartheta_{n+1}^{\sf REI}\r|_{\Ltwo}\leqslant r_n^{\sf REI}\l|Y_{nh}-\vartheta_n^{\sf REI}\r|_{\Ltwo}+h^2C_{n,1}^{\sf REI}+h^{3/2}C_{n,2}^{\sf REI}+3h\varepsilon_{sc}\,,
        \end{align*}
        where
        \begin{align*}
            r_n^{\sf REI}&=e^{-\int_{nh}^{(n+1)h}(m(T-t)-\frac{1}{2})\rmd t}\\
            &\quad +h^2\bigg\{\int_0^1\int_0^1\bigg[|u-v|L(T-(n+u)h)(\dfrac{1}{2}+L(T-nh))\\
            &\qquad\qquad\qquad\qquad+M_1+\dfrac{1}{2}|u-v|L(T-(n+v)h)r_n^{\sf EI}(v)\bigg]^2\rmd u\rmd v\bigg\}^{1/2},\\
            C_{n,1}^{\sf REI}&=\dfrac{e^{\frac{1}{2}(1-v)h}-e^{\frac{1}{2}(1-u)h}}{h}e^{\frac{1}{2}vh}L(T-(n+u)h)\left[C_2(n)+C_4+2L(T-nh)C_2(n)+(dL(T-nh))^{1/2}\right]\\
            &\quad +e^{\frac{1}{2}(1-u)h}M_1\left[1+2e^{\frac{1}{2}vh}\left(L(T-nh)C_2(n)+(dL(T-nh))^{1/2}\right)+C_2(n)+C_4\right]\\
            &\quad +\dfrac{|e^{\frac{1}{2}(1-u)h}-e^{\frac{1}{2}(1-v)h}|}{h}\left(L(T-(n+v)h)C_2(n)+(dL(T-(n+v)h))^{1/2}\right),\\
            C_{n,2}^{\sf REI}&=\sqrt{d}\bigg\{\int_0^1\int_0^1L(T-(n+u)h)^2|u-v|\rmd u\rmd v\bigg\}^{1/2}.
        \end{align*}
    \end{enumerate}
\end{proposition}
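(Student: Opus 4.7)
The plan is to follow the template of Proposition~\ref{prop:RMPEM}, now adapted to the exponential-integrator representation. First, I would apply Itô's formula to $e^{-t/2}\tilde{Y}_t$ to obtain the semi-linear integral form
$$\tilde{Y}_h = e^{h/2}\tilde{Y}_0 + \int_0^h e^{(h-t)/2}\nabla\log p_{T-nh-t}(\tilde{Y}_t)\rmd t + \int_{nh}^{(n+1)h}e^{((n+1)h-s)/2}\rmd W_s\,,$$
exactly as in the proof of Proposition~\ref{prop:EI}, and the analogous representation for $\vartheta_{n+U}^{\sf REI}$ over the interval $[nh,(n+U_n)h]$. A short covariance calculation using Itô's isometry shows that $\rho_n$ is precisely the correlation coefficient between $\int_{nh}^{(n+U_n)h}e^{((n+U_n)h-s)/2}\rmd W_s$ and $\int_{nh}^{(n+1)h}e^{((n+1)h-s)/2}\rmd W_s$, so under the coupling prescribed in Step 1 of the REI scheme one can identify $\sqrt{e^{hU_n}-1}\xi'_n$ and $\sqrt{e^h-1}\xi_n$ pathwise with those two stochastic integrals. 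As a consequence the $\sqrt{e^h-1}\xi_n$ term in $\vartheta_{n+1}^{\sf REI}$ cancels pathwise against the Brownian integral in $\tilde{Y}_h$, eliminating it from both the bias and the variance computations below.

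For Claim~\ref{item:REItilde}, I would decompose
$$\l|\tilde{Y}_h-\vartheta_{n+1}^{\sf REI}\r|_{\Ltwo}\leqslant \l|\tilde{Y}_h-\mathbb{E}_{U_n}[\vartheta_{n+1}^{\sf REI}]\r|_{\Ltwo}+\l|\mathbb{E}_{U_n}[\vartheta_{n+1}^{\sf REI}]-\vartheta_{n+1}^{\sf REI}\r|_{\Ltwo}\,,$$
mirroring~\eqref{eq:REMdecom}. For the bias term, the identity $\mathbb{E}_{U_n}[he^{(1-U_n)h/2}f(U_n)] = \int_0^h e^{(h-t)/2}f(t/h)\rmd t$ rewrites the difference as $\int_0^h e^{(h-t)/2}\big[\nabla\log p_{T-nh-t}(\tilde{Y}_t) - s_*(T-nh,\vartheta_{n+t/h}^{\sf REI})\big]\rmd t$, after which I split the integrand into three summands (Lipschitz gap, temporal regularity via Assumption~\ref{asm:scLipt}, and the score-matching residual from Assumption~\ref{asm:score4RMP1}) and bound each piece as in Proposition~\ref{prop:RMPEM}. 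The key ingredient is a step-$uh$ version of Proposition~\ref{prop:EI} for $\l|\tilde{Y}_{uh}-\vartheta_{n+u}^{\sf REI}\r|_{\Ltwo}$, combined with \eqref{eq:hatyn} and Lemma~\ref{lem:Enabla} to control $\l|\vartheta_{n+u}^{\sf REI}\r|_{\Ltwo}$.

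For the variance term, the pathwise cancellation of the Brownian piece reduces the computation to the deviation of $he^{(1-U_n)h/2}s_*(T-(n+U_n)h,\vartheta_{n+U}^{\sf REI})$ from its $U_n$-average; a Jensen-type double-integral argument as in \eqref{eq:diffscore} converts this to pairwise differences in $u,v\in[0,1]$, each bounded using Lipschitzness of $\nabla\log p$, Assumption~\ref{asm:scLipt}, and a direct comparison $\l|\vartheta_{n+u}^{\sf REI}-\vartheta_{n+v}^{\sf REI}\r|_{\Ltwo}$ driven by the shared noise $(\xi'_n,W)$ and the differing exponential factors. Claim~\ref{item:REIYt} then follows by adding the triangle-inequality bridge $\l|Y_{(n+1)h}-\vartheta_{n+1}^{\sf REI}\r|_{\Ltwo}\leqslant\l|Y_{(n+1)h}-\tilde{Y}_h\r|_{\Ltwo}+\l|\tilde{Y}_h-\vartheta_{n+1}^{\sf REI}\r|_{\Ltwo}$ and applying the Grönwall estimate~\eqref{eq:Gron1} with $\tilde{Y}_0=\vartheta_n^{\sf REI}$, exactly as in the proof of Claim~\ref{item:EMYt}.

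The main obstacle is the pairwise comparison $\l|\vartheta_{n+u}^{\sf REI}-\vartheta_{n+v}^{\sf REI}\r|_{\Ltwo}$: unlike the REM case where the pathwise noise contribution is simply $\sqrt{(u-v)h}$, here the exponential weights produce a Brownian-piece difference of order $\sqrt{e^{uh}-e^{vh}}\asymp\sqrt{|u-v|h}$ that must be tracked together with the $e^{(1-u)h/2}$ weights outside the integrals to confirm that no hidden $h^{-1/2}$ blow-up appears. A secondary challenge is verifying that the step-$uh$ EI one-step bound holds uniformly in $u\in[0,1]$, which requires checking that $C_{1,n}(u)$, $C_{5,n}(u)$ and $r_n^{\sf EI}(v)$ remain bounded by their full-step counterparts so they can be pulled out of the double integrals when collecting final constants.
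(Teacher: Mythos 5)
Your proposal follows essentially the same route as the paper's proof: the same bias/variance decomposition around $\mathbb{E}_{U_n}[\vartheta_{n+1}^{\sf REI}]$, the same use of a step-$uh$ version of Proposition~\ref{prop:EI} for the bias term, and the same Jensen double-integral plus Itô-isometry treatment of the pairwise differences $\l|\vartheta_{n+u}^{\sf REI}-\vartheta_{n+v}^{\sf REI}\r|_{\Ltwo}$ for the fluctuation term, finishing Claim~(2) with the triangle inequality and the Grönwall contraction. The only notable difference is that you make explicit the coupling argument (via the correlation coefficient $\rho_n$) that justifies the pathwise cancellation of the Brownian contributions, a step the paper's proof leaves implicit.
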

\begin{proof}[Proof of Proposition~\ref{prop:RMPEI}]
This proposition can be proven following the same approach as in the proof of Proposition~\ref{prop:RMPEM}, with the only difference being the inclusion of the exponential coefficient term. 
However, this term does not significantly affect the overall proof.\\
Similarly, we make a decomposition as
\begin{align}
    \label{eq:REIdecom}
    \l|\tilde{Y}_h-\vartheta_{n+1}^{\sf REI}\r|_{\Ltwo}\leqslant&\l|\tilde{Y}_h-\mathbb{E}_{U_n}[\vartheta_{n+1}^{\sf REI}]\r|_{\Ltwo}+\l|\mathbb{E}_{U_n}[\vartheta_{n+1}^{\sf REI}]\r|_{\Ltwo}.
\end{align}
Note that
\begin{align*}
    \tilde{Y}_h-\mathbb{E}_{U_n}[\vartheta_{n+1}^{\sf REI}]&=\int_0^he^{\frac{1}{2}(h-t)}\nabla\log p_{T-nh-t}(\tilde{Y}_t)\rmd t-h\mathbb{E}_{U_n}\left[e^{\frac{1}{2}(1-U_n)h}s_*(T-nh-U_nh,\vartheta_{n+U_n}^{\sf REI})\right]\\
    &=h\int_0^1e^{\frac{1}{2}(1-u)h}\left(\nabla\log p_{T-nh-uh}(\tilde{Y}_{uh})-s_*(T-nh-uh,\vartheta_{n+u}^{\sf REI})\right)\rmd u\\
    &=h\int_0^1e^{\frac{1}{2}(1-u)h}\left(\nabla\log p_{T-nh-uh}(\tilde{Y}_{uh})-\nabla\log p_{T-nh-uh}(\vartheta_{n+u}^{\sf REI})\right)\rmd u\\
    &\quad +h\int_0^1e^{\frac{1}{2}(1-u)h}\left(\nabla\log p_{T-nh-uh}(\vartheta_{n+u}^{\sf REI})-s_*(T-nh-uh,\vartheta_{n+u}^{\sf REI})\right)\rmd u.
\end{align*}
Then, we obtain 
\begin{align*}
    &\l|\tilde{Y}_h-\mathbb{E}_{U_n}[\vartheta_{n+1}^{\sf REI}]\r|_{\Ltwo}\\
    \leqslant &\,\, h\int_0^1\left\lVert e^{\frac{1}{2}(1-u)h}(\nabla\log p_{T-nh-uh}(\tilde{Y}_{uh})-s_*(T-nh-uh,\vartheta_{n+u}^{\sf REI}))\right\rVert_{\Ltwo}\rmd u\\
    \leqslant & \,\,h\int_0^1e^{\frac{1}{2}(1-u)h}L(T-nh-uh)\l|\tilde{Y}_{uh}-\vartheta_{n+u}^{\sf REI}\r|_{\Ltwo}\rmd u+h\int_0^1e^{\frac{1}{2}(1-u)h}\rmd u\,\varepsilon_{sc}\\
    \leqslant & \,\,h^3\int_0^1e^{\frac{1}{2}(1-u)h}L(T-nh-uh)u^2\left( C_{5,n}(u)C_{1,n}(u)+M_1\dfrac{2(e^{uh/2}-1)}{uh}\right)\rmd u\,\l|Y_{nh}-\vartheta_n^{\sf REI}\r|_{\Ltwo}\\
    & +h^3\int_0^1e^{\frac{1}{2}(1-u)h}L(T-nh-uh)u^2\bigg[C_{5,n}(u)\left(C_{1,n}(u)C_2(n)+\dfrac{1}{2}C_4+C_{3,n}(u)\right)\\
    & \qquad\qquad\qquad\qquad\qquad\qquad\qquad\qquad+\dfrac{2(e^{uh}-1)}{uh}M_1(1+C_2(n)+C_4)\bigg]\rmd u\\
    &+h^{5/2}\int_0^1e^{\frac{1}{2}(1-u)h}L(T-nh-uh)u^{3/2}C_{5,n}(u)\rmd u\,\sqrt{d}\\
    & +h^2\int_0^1e^{\frac{1}{2}(1-u)h}L(T-nh-uh)\dfrac{2(e^{uh/2}-1)}{h}\rmd u\,\varepsilon_{sc}+h\dfrac{2(e^{h/2}-1)}{h}\varepsilon_{sc},
\end{align*}
where
\begin{align*}
    C_{5,n}(u)&=\dfrac{1}{uh}\int_{nh}^{(n+u)h}e^{\frac{1}{2}((n+u)h-t)}L(T-t)\rmd t.
\end{align*}
In the third inequality, we can directly bound $\l|\tilde{Y}_{uh}-\vartheta_{n+u}^{\sf REI}\r|_{\Ltwo}$, as it is a special case of Proposition~\ref{prop:EI}, where the step size is replaced by $uh$.\\
For the second term of \eqref{eq:REIdecom}, we have
\begin{align*}
    &\vartheta_{n+1}^{\sf REI}-\mathbb{E}_{U_n}[\vartheta_{n+1}^{\sf REI}]\\
    =& \,\, he^{\frac{1}{2}(1-U_n)h}s_*(T-nh-U_nh,\vartheta_{n+U}^{\sf REI})-h\mathbb{E}_{U_n}\left[e^{\frac{1}{2}(1-U_n)h}s_*(T-nh-U_nh,\vartheta_{n+U}^{\sf REI})\right]\\
    =&\,\,h\int_0^1\left[e^{\frac{1}{2}(1-U_n)h}s_*(T-nh-U_nh,\vartheta_{n+U_n}^{\sf REI})-e^{\frac{1}{2}(1-v)h}s_*(T-nh-vh,\vartheta_{n+v}^{\sf REI})\right]\rmd v\,.
\end{align*}
Similar to display~\eqref{eq:diffmatch}, we then obtain
\begin{align*}
    &\left\lVert\vartheta_{n+1}^{\sf REI}-\mathbb{E}_{U_n}[\vartheta_{n+1}^{\sf REI}]\right\rVert_{\Ltwo}\\
    \leqslant &\,\,\bigg\{\mathbb{E}\int_0^1\left[h\int_0^1e^{\frac{1}{2}(1-u)h}s_*(T-(n+u)h,\vartheta_{n+u}^{\sf REI})-e^{\frac{1}{2}(1-v)h}s_*(T-(n+v)h,\vartheta_{n+v}^{\sf REI})\rmd v\right]^2\rmd u\bigg\}^{1/2}\\
    \leqslant & \,\, h\bigg\{\int_0^1\int_0^1\left\lVert e^{\frac{1}{2}(1-u)h}s_*(T-(n+u)h,\vartheta_{n+u}^{\sf REI})-e^{\frac{1}{2}(1-v)h}s_*(T-(n+v)h,\vartheta_{n+v}^{\sf REI})\right\rVert_{\Ltwo}^2\rmd u\rmd v\bigg\}^{1/2}\\
    \leqslant & \,\, h\bigg\{\int_0^1\int_0^1\left\lVert e^{\frac{1}{2}(1-u)h}\nabla\log p_{T-(n+u)h}(\vartheta_{n+u}^{\sf REI})-e^{\frac{1}{2}(1-v)h}\nabla\log p_{T-(n+v)h}(\vartheta_{n+v}^{\sf REI})\right\rVert_{\Ltwo}^2\rmd u\rmd v\bigg\}^{1/2}\\
    &+2h\left(\int_0^1e^{(1-u)h}\rmd u\right)^{1/2}\varepsilon_{sc},
\end{align*}
Using the same strategy as in display~\eqref{eq:diffscore}, we arrive at
\begin{align*}
    &\left\lVert e^{\frac{1}{2}(1-u)h}\nabla\log p_{T-(n+u)h}(\vartheta_{n+u}^{\sf REI})-e^{\frac{1}{2}(1-v)h}\nabla\log p_{T-(n+v)h}(\vartheta_{n+v}^{\sf REI})\right\rVert_{\Ltwo}\\
    \leqslant &\,\, e^{\frac{1}{2}(1-u)h}\left\lVert \nabla\log p_{T-(n+u)h}(\vartheta_{n+u}^{\sf REI})-\nabla\log p_{T-(n+u)h}(\vartheta_{n+v}^{\sf REI})\right\rVert_{\Ltwo}\\
    &+e^{\frac{1}{2}(1-u)h}\left\lVert \nabla\log p_{T-(n+u)h}(\vartheta_{n+v}^{\sf REI})-\nabla\log p_{T-(n+v)h}(\vartheta_{n+v}^{\sf REI})\right\rVert_{\Ltwo}\\
    &+\left|e^{\frac{1}{2}(1-u)h}-e^{\frac{1}{2}(1-v)h}\right|\left\lVert \nabla\log p_{T-(n+v)h}(\vartheta_{n+v}^{\sf REI})\right\rVert_{\Ltwo}\\
    \leqslant & \,\,e^{\frac{1}{2}(1-u)h}L(T-(n+u)h)\l|\vartheta_{n+u}^{\sf REI}-\vartheta_{n+v}^{\sf REI}\r|_{\Ltwo}\\
    &+e^{\frac{1}{2}(1-u)h}M_1h(1+\l|\vartheta_{n+v}^{\sf REI}\r|_{\Ltwo})\\
    &+\left|e^{\frac{1}{2}(1-u)h}-e^{\frac{1}{2}(1-v)h}\right|\left[L(T-(n+v)h)\left(\l|Y_{(n+v)h}-\vartheta_{n+v}^{\sf REI}\r|_{\Ltwo}+C_2(n)\right)+(dL(T-(n+v)h))^{1/2}\right].
\end{align*}
The second inequality follows from Assumptions~\ref{asm:p0scLipx} and~\ref{asm:scLipt}. 
We bound the term $\left\lVert \nabla\log p_{T-(n+v)h}(\vartheta_{n+v}^{\sf REI})\right\rVert_{\Ltwo}$ by decomposing it as follows
\begin{align*}
    \left\lVert\nabla\log p_{T-(n+v)h}(\vartheta_{n+v}^{\sf REI})\right\rVert_{\Ltwo}
    &\leqslant \left\lVert\nabla\log p_{T-(n+v)h}(\vartheta_{n+v}^{\sf REI})-\nabla\log p_{T-(n+v)h}(Y_{(n+v)h})\right\rVert_{\Ltwo}\\
    &\quad+\l|\nabla\log p_{T-(n+v)h}(Y_{(n+v)h})-\nabla\log p_{T-(n+v)h}(X_{((n+v)h)}^{\leftarrow})\r|_{\Ltwo}\\
    &\quad +\l|\nabla\log p_{T-(n+v)h}(X_{T-(n+v)h})\r|_{\Ltwo}.
\end{align*}
Without loss of generality, we consider the case where $u>v$; the other case follows similarly.
\begin{equation}
    \begin{aligned}
        &\l|\vartheta_{n+u}^{\sf REI}-\vartheta_{n+v}^{\sf REI}\r|_{\Ltwo}\\
        =&\,\,(e^{\frac{1}{2}uh}-e^{\frac{1}{2}vh})\l|\vartheta_n^{\sf REI}\r|_{\Ltwo}+\int_{vh}^{uh}e^{\frac{1}{2}t}\rmd t\l|s_*(T-nh,\vartheta_n^{\sf REI})\r|_{\Ltwo}\\
        &+\left\lVert\int_{nh}^{(n+u)h}e^{\frac{1}{2}((n+u)h-t)}\rmd W_t-\int_{nh}^{(n+v)h}e^{\frac{1}{2}((n+v)h-t)}\rmd W_t\right\rVert_{\Ltwo}\\
        \leqslant& \,\,\big(e^{\frac{1}{2}(u-v)h}-1\big)e^{\frac{1}{2}vh}\left(\l|Y_{nh}-\vartheta_n^{\sf REI}\r|_{\Ltwo}+C_2(n)+C_4\right)\\
        &+2\big(e^{\frac{1}{2}(u-v)h}-1\big)e^{\frac{1}{2}vh}\left[\varepsilon_{sc}+L(T-nh)\left(\l|Y_{nh}-\vartheta_n^{\sf REI}\r|_{\Ltwo}+C_2(n)\right)+(dL(T-nh))^{1/2}\right]\\
        &+\left[(e^{uh}-1)+(e^{vh}-1)-2(e^{\frac{u+v}{2}h}-e^{\frac{u-v}{2}h})\right]^{1/2}\sqrt{d}.
    \end{aligned}
    \label{eq:yEIdiff}
\end{equation}
Here, we apply the formula in \eqref{eq:Itoiso} to bound the last term.
\begin{align*}
    &\left\lVert\int_{nh}^{(n+u)h}\textbf{1}_{\{t\leqslant(n+v)h\}}(e^{\frac{1}{2}((n+u)h-t)}-e^{\frac{1}{2}((n+v)h-t)})+\textbf{1}_{\{t>(n+v)h\}}e^{\frac{1}{2}((n+u)h-t)}\rmd W_t\right\rVert_{\Ltwo}\\
    =&\,\,\sqrt{d}\left[\int_{nh}^{(n+v)h}(e^{\frac{1}{2}((n+u)h-t)}-e^{\frac{1}{2}((n+v)h-t)})^2\rmd t+\int_{(n+v)h}^{(n+u)h}e^{(n+u)h-t}\rmd t\right]^{1/2}\\
    =&\,\,\sqrt{d}\left[(e^{uh}-1)+(e^{vh}-1)-2(e^{\frac{u+v}{2}h}-e^{\frac{u-v}{2}h})\right]^{1/2},
\end{align*}
We then bound the term $\l|\vartheta_{n+v}^{\sf REI}\r|_{\Ltwo}$ following display~\eqref{eq:yEIdiff} above. 
To this end, let $u=0$, we then have
\begin{align*}
    \l|\vartheta_{n+v}^{\sf REI}\r|_{\Ltwo}
    &\leqslant (e^{\frac{1}{2}vh}-1)(\l|Y_{nh}-\vartheta_n^{\sf REI}\r|_{\Ltwo}+C_2(n)+C_4)\\
    &\quad+2(e^{\frac{1}{2}vh}-1)\left[\varepsilon_{sc}+L(T-nh)\left(\l|Y_{nh}-\vartheta_n^{\sf REI}\r|_{\Ltwo}+C_2(n)\right)+(dL(T-nh))^{1/2}\right]\\
    &\quad +\sqrt{d}(e^{vh}-1)^{1/2}\\
    &\quad +\l|Y_{nh}-\vartheta_n^{\sf REI}\r|_{\Ltwo}+C_2(n)+C_4.
\end{align*}
Additionally, we can bound $\l|Y_{(n+v)h}-\vartheta_{n+v}^{\sf REI}\r|_{\Ltwo}$, as it is a special case of the one-step discretization error under the Exponential Integrator scheme, where the step size is replaced by $vh$.
Specifically, we have
\begin{align*}
    &\l|Y_{(n+v)h}-\vartheta_{n+v}^{\sf REI}\r|_{\Ltwo}\\
    \leqslant &\,\,r_n^{\sf EI}(v)\l|Y_{nh}-\vartheta_n^{\sf REI}\r|_{\Ltwo}+h^2C_n^{\sf EI}(v)+h^{3/2}u^{3/2}\sqrt{d}C_{5,n}(v)+vh\dfrac{2(e^{\frac{1}{2}vh}-1)}{vh}\varepsilon_{sc},
\end{align*}
where
\begin{align*}
    r_n^{\sf EI}(v)&=e^{-\int_{nh}^{(n+v)h}(m(T-t)-\frac{1}{2})\rmd t}+v^2h^2\left(C_{5,n}(v)C_{1,n}(v)+M_1\dfrac{2(e^{\frac{1}{2}vh}-1)}{vh}\right),\\
    C_n^{\sf EI}(v)&=C_{5,n}(v)\left(C_{1,n}(v)C_2(n)+\dfrac{1}{2}C_4+C_{3,n}(v)\right)+\dfrac{2(e^{\frac{1}{2}vh}-1)}{vh}M_1(1+C_2(n)+C_4).
\end{align*}
Then, we obtain
\begin{align*}
    &\left\lVert e^{\frac{1}{2}(1-u)h}\nabla\log p_{T-(n+u)h}(\vartheta_{n+u}^{\sf REI})-e^{\frac{1}{2}(1-v)h}\nabla\log p_{T-(n+v)h}(\vartheta_{n+v}^{\sf REI})\right\rVert_{\Ltwo}\\
    \leqslant & \,\,\Bigg\{2(e^{\frac{1}{2}(1-v)h}-e^{\frac{1}{2}(1-u)h})e^{\frac{1}{2}vh}L(T-(n+u)h)\left[\dfrac{1}{2}+L(T-nh)\right]\\
    &\quad +e^{\frac{1}{2}(1-u)h}M_1h\left[(e^{\frac{1}{2}vh}-1)+2(e^{\frac{1}{2}vh}-1)L(T-nh)+1\right]\\
    &\quad +\left|e^{\frac{1}{2}(1-u)h}-e^{\frac{1}{2}(1-v)h}\right|L(T-(n+v)h)r_n^{\sf EI}(v)\Bigg\}\l|Y_{nh}-\vartheta_n^{\sf REI}\r|_{\Ltwo}\\
    &+h^3L(T-(n+v)h)\dfrac{|e^{\frac{1}{2}(1-u)h}-e^{\frac{1}{2}(1-v)h}|}{h}C_n^{\sf EI}(v)\\
    &+h^{5/2}L(T-(n+v)h)\dfrac{|e^{\frac{1}{2}(1-u)h}-e^{\frac{1}{2}(1-v)h}|}{h}u^{3/2}\sqrt{d}C_{5,n}(v)\\
    &+h^2M_1e^{\frac{1}{2}(1-u)h}\dfrac{e^{\frac{1}{2}vh}-1}{h}(C_2(n)+C_4)\\
    &+h^{3/2}M_1e^{\frac{1}{2}(1-u)h}\sqrt{vd}\left(\dfrac{e^{vh}-1}{vh}\right)^{1/2}\\
    &+h\bigg\{\dfrac{e^{\frac{1}{2}(1-v)h}-e^{\frac{1}{2}(1-u)h}}{h}e^{\frac{1}{2}vh}L(T-(n+u)h)\left[C_2(n)+C_4+2L(T-nh)C_2(n)+(dL(T-nh))^{1/2}\right]\\
    &\quad\quad +e^{\frac{1}{2}(1-u)h}M_1\left[1+2e^{\frac{1}{2}vh}\left(L(T-nh)C_2(n)+(dL(T-nh))^{1/2}\right)+C_2(n)+C_4\right]\\
    &\quad\quad +\dfrac{|e^{\frac{1}{2}(1-u)h}-e^{\frac{1}{2}(1-v)h}|}{h}\left(L(T-(n+v)h)C_2(n)+(dL(T-(n+v)h))^{1/2}\right)\bigg\}\\
    &+h^{1/2}L(T-(n+u)h)e^{\frac{1}{2}(1-u)h}\left[\dfrac{(e^{uh}-1)+(e^{vh}-1)-2(e^{\frac{u+v}{2}h}-e^{\frac{u-v}{2}h})}{h}\right]^{1/2}\sqrt{d}\\
    &+h^2\varepsilon_{sc}L(T-(n+v)h)\dfrac{|e^{\frac{1}{2}(1-u)h}-e^{\frac{1}{2}(1-v)h}|}{h}\dfrac{2(e^{\frac{1}{2}vh}-1)}{h}\\
    &+h\varepsilon_{sc}\cdot 2e^{\frac{1}{2}(1-u)h}\left[L(T-(n+u)h)\dfrac{e^{\frac{1}{2}uh}-e^{\frac{1}{2}vh}}{h}+M_1e^{\frac{1}{2}vh}\right].
\end{align*}
Ignoring the higher-order terms, we take the supremum with respect to $v$ and substitute it back into the original expression, yielding
\begin{align*}
    &\l|\vartheta_{n+1}^{\sf REI}-\mathbb{E}_{U_n}[\vartheta_{n+1}^{\sf REI}]\r|_{\Ltwo}\\
    \leqslant& \,\,h\bigg\{\int_0^1\int_0^1\left\lVert e^{\frac{1}{2}(1-u)h}s_*(T-(n+u)h,\vartheta_{n+u}^{\sf REI})-e^{\frac{1}{2}(1-v)h}s_*(T-(n+v)h,\vartheta_{n+v}^{\sf REI})\right\rVert_{\Ltwo}^2\rmd u\rmd v\bigg\}^{1/2}\\
    \leqslant& \,\,h^2\bigg\{\int_0^1\int_0^1\big[|u-v|L(T-(n+u)h)\left(\dfrac{1}{2}+L(T-nh)\right)+M_1\\
    &\qquad\qquad +\dfrac{1}{2}|u-v|L(T-(n+v)h)r_n^{\sf EI}(v)\big]^2\rmd u\rmd v\bigg\}^{1/2}\l|Y_{nh}-\vartheta_n^{\sf REI}\r|_{\Ltwo}\\
    &\quad +h^{3/2}\bigg\{\int_0^1\int_0^1dL(T-(n+u)h)^2|u-v|\rmd u\rmd v\bigg\}^{1/2} +2h\varepsilon_{sc},.
\end{align*}
This completes the proof.

\end{proof}
Now, we have
\begin{align*}
    \l|Y_{Nh}-\vartheta_N^{\sf REI}\r|_{\Ltwo}
    &\lesssim \,\,\dfrac{1}{m_{\min}-1/2}\left(h\max_{0\leqslant k\leqslant N-1}C_{n,1}^{\sf REI}+\sqrt{h}\max_{0\leqslant k\leqslant N-1}C_{n,2}^{\sf REI}+3\varepsilon_{sc}\right)\\
    &\lesssim\,\,\sqrt{dh}\dfrac{L_{\max}}{\sqrt{3}(m_{\min}-1/2)}+\varepsilon_{sc}\dfrac{3}{m_{\min}-1/2}\,.
\end{align*}
The desired result follows readily.

\section{The proof of the upper bound of error of the second-order acceleration scheme}
This section is dedicated to proving the Wasserstein convergence result for second-order acceleration. To this end, we first establish the following proposition.
\begin{proposition}
    \label{prop:2order}
    Suppose that Assumptions~\ref{asm:p0scLipx},~\ref{asm:scoreerr},~\ref{asm:scerr4SO},~\ref{asm:scLipx4SO},~\ref{asm:scLipt4SO} are satisfied, the following results hold. 
    \begin{enumerate}[label=\textbf{(\arabic*)}, leftmargin=2em]
        \item \label{item:SOtilde}
        First, we have an upper bound for $\l|\tilde Y_h-\vartheta_{n+1}^{\sf SO}\r|_{\Ltwo}$ as follows,
        \begin{align*}
            \l|\tilde{Y}_h-\vartheta_{n+1}^{\sf SO}\r|_{\Ltwo}\leqslant&A_{n,1}e^{(L(nh)-\frac{1}{2})h}h^2\l|Y_{nh}-\vartheta_n^{\sf SO}\r|_{\Ltwo}+A_{n,2}e^{(L(nh)-\frac{1}{2})h}h^2\\
            &+\left(h\varepsilon_{sc}+\dfrac{2}{3}h^{3/2}\varepsilon_{sc}^{(L)}+\dfrac{1}{2}h^2\varepsilon_{sc}^{(M)}\right)e^{(L(T-nh)-\frac{1}{2})h},
        \end{align*}
        where
        \begin{align*}
            A_{n,1}=\sup_{nh\leqslant t\leqslant (n+1)h}&\dfrac{1}{t^2}\int_0^t\Big(\int_0^s\big[(1+L(T-nh-u))L(T-nh-u)\\
            &\qquad \qquad\qquad+(1+L(T-nh))L(T-nh)\big]\rmd u\Big)\rmd s,\\
            A_{n,2}=\sup_{nh\leqslant t\leqslant (n+1)h}&\dfrac{1}{t^2}\bigg[\int_0^t\Big(\int_0^s\big[(1+L(T-nh-u))L(T-nh-u)\\
            &\qquad \qquad\qquad+(1+L(T-nh))L(T-nh)\big]\rmd u\Big)\rmd s\cdot C_2(n)\\
            &\qquad+\sqrt{d}\int_0^t\Bigg(\int_0^s\Big[\Big(\dfrac{1}{2}+L(T-nh-u)\Big)L(T-nh-u)^{1/2}\\
            &\qquad\qquad \qquad\qquad+\Big(\dfrac{1}{2}+L(T-nh)\Big)L(T-nh)^{1/2}\Big]\rmd u\Bigg)\rmd s\\
            &\qquad+\int_0^t\left(\int_0^s\dfrac{1}{2}(L(T-nh-u)+L(T-nh))\rmd u\right)\rmd s\cdot C_4\bigg]\\
            &+\dfrac{\sqrt{2}}{4}L_F+\dfrac{3}{2}dL_F.
        \end{align*}
        \item \label{item:SOYt}
        Furthermore, it holds that
        \begin{align*}
            \l|Y_{(n+1)h}-\vartheta_{n+1}^{\sf SO}\r|_{\Ltwo}
            &\leqslant r_n^{\sf SO}\l|Y_{nh}-\vartheta_n^{\sf SO}\r|_{\Ltwo}+h^2C_n^{\sf SO}\\
            &\quad +\left[h\varepsilon_{sc}+\dfrac{2}{3}h^{3/2}\varepsilon_{sc}^{(L)}+\dfrac{1}{2}h^2\varepsilon_{sc}^{(M)}\right]e^{(L(T-nh)-\frac{1}{2})h},
        \end{align*}
        where
        \begin{align*}
            r_n^{\sf SO}=&e^{-\int_{nh}^{(n+1)h}(m(T-t)-\frac{1}{2})\rmd t}+h^2A_{n,1}e^{(L(T-nh)-\frac{1}{2})h}\\
            C_n^{\sf SO}=&A_{n,2}e^{(L(T-nh)-\frac{1}{2})h}\,.
        \end{align*}
    \end{enumerate}
\end{proposition}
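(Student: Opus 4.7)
The plan is to prove Claim~\ref{item:SOtilde} first and then deduce Claim~\ref{item:SOYt} from it via the triangle inequality $\|Y_{(n+1)h}-\vartheta_{n+1}^{\sf SO}\|_{\Ltwo}\leq \|Y_{(n+1)h}-\tilde Y_h\|_{\Ltwo}+\|\tilde Y_h-\vartheta_{n+1}^{\sf SO}\|_{\Ltwo}$ combined with the Gronwall estimate~\eqref{eq:Gron1}, exactly as in the passage from Claim~\ref{item:EMtilde} to Claim~\ref{item:EMYt} in Proposition~\ref{prop:EM}. For Claim~\ref{item:SOtilde}, I introduce an auxiliary process $\xi_t$ on $[nh,(n+1)h]$ solving the \emph{exact} local linearization, that is, equation~\eqref{eq:SOxt} using the true $L_n$, $M_n$, $\nabla\log p_{T-nh}$ rather than their estimates, and started from $\xi_{nh}=\vartheta_n^{\sf SO}$. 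The key split is
\[
\|\tilde Y_h-\vartheta_{n+1}^{\sf SO}\|_{\Ltwo}\leq\|\tilde Y_h-\xi_{(n+1)h}\|_{\Ltwo}+\|\xi_{(n+1)h}-\vartheta_{n+1}^{\sf SO}\|_{\Ltwo},
\]
where the first piece is the pure linearization error, responsible for the $h^2 e^{(L(T-nh)-1/2)h}$ contribution, while the second piece is driven entirely by the Hessian and score estimation errors.

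For the linearization error, I apply It\^o's formula to $s\mapsto\gamma(T-s,\tilde Y_s)$ on $[nh,s]$, obtaining the decomposition
\[
\gamma(T-s,\tilde Y_s)-\gamma(T-nh,\vartheta_n^{\sf SO})=L_n(\tilde Y_s-\vartheta_n^{\sf SO})+M_n(s-nh)+R(s),
\]
where the remainder $R(s)$ consists of two kinds of terms: a drift remainder of the form $\nabla^2\log p_{T-u}(\tilde Y_u)-\nabla^2\log p_{T-nh}(\vartheta_n^{\sf SO})$, controlled by Assumptions~\ref{asm:scLipx4SO}--\ref{asm:scLipt4SO} combined with the path-regularity estimate on $\|\tilde Y_s-\vartheta_n^{\sf SO}\|_{\Ltwo}$ from Lemma~\ref{lem:supYtY0} and the bound on $\l|\vartheta_n^{\sf SO}\r|_{\Ltwo}$ in~\eqref{eq:hatyn}; and a martingale contribution $\int \nabla\gamma(T-u,\tilde Y_u)\,dW_u$ whose $\Ltwo$-norm is controlled via It\^o isometry and the Frobenius Lipschitz constant $L_F$. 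Subtracting the SDE for $\xi_t$ from that of $\tilde Y_t$ and applying variation of constants gives $\tilde Y_h-\xi_{(n+1)h}=\int_{nh}^{(n+1)h}e^{L_n((n+1)h-s)}R(s)\,ds$; since $L_n=\tfrac12 I_d+\nabla^2\log p_{T-nh}$ has $\|L_n\|_{\mathrm{op}}\leq L(T-nh)-\tfrac12$ by Lemma~\ref{lem:GaoLt}, the operator bound $\|e^{L_n t}\|_{\mathrm{op}}\leq e^{(L(T-nh)-1/2)t}$ produces the overall rate $h^2 e^{(L(T-nh)-1/2)h}(A_{n,1}\|Y_{nh}-\vartheta_n^{\sf SO}\|_{\Ltwo}+A_{n,2})$, with $A_{n,1}, A_{n,2}$ matching the statement.

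For the estimation error, I rewrite both $\xi_{(n+1)h}$ and $\vartheta_{n+1}^{\sf SO}$ in the common closed form $\vartheta_n^{\sf SO}+\Phi_h(L)\gamma+\Psi_h(L)M+\int e^{L(\cdot)}dW$ with $\Phi_h(L)=L^{-1}(e^{Lh}-I_d)$ and $\Psi_h(L)=L^{-2}(e^{Lh}-Lh-I_d)$, and take the difference. The three estimation contributions separate cleanly: the score error $\varepsilon_{sc}$ acts through $\Phi_h(L_n)$, whose operator norm is $O(h\,e^{(L(T-nh)-1/2)h})$, entering at rate $h$; the Hessian error $\varepsilon_{sc}^{(L)}$ multiplies the displacement $\xi_t-\vartheta_n^{\sf SO}$ whose $\Ltwo$-norm is of order $\sqrt{h}$, yielding $h^{3/2}\varepsilon_{sc}^{(L)}$; and the $M_n$-error multiplies $(t-nh)$, yielding $h^2\varepsilon_{sc}^{(M)}$. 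The explicit prefactors $\tfrac{2}{3}$ and $\tfrac12$ arise from elementary Taylor expansions of $\Phi_h$ and $\Psi_h$ around $h=0$, and the common multiplier $e^{(L(T-nh)-1/2)h}$ again comes from bounding the matrix exponentials.

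The main technical obstacle will be the clean treatment of the It\^o-Taylor remainder $R(s)$. On one hand, its drift part must be matched against the exact $M_n$, which requires Assumption~\ref{asm:scLipx4SO} so that the trace term $\tfrac12\sum_j\partial_{x_j}^2\nabla\log p_{T-u}$ can be evaluated at $(T-nh,\vartheta_n^{\sf SO})$ up to an $O(\sqrt{d}\,L_F\,\|\tilde Y_u-\vartheta_n^{\sf SO}\|)$ residual; on the other, its martingale part requires an It\^o-isometry computation that ultimately produces the $\tfrac{\sqrt{2}}{4}L_F+\tfrac32 dL_F$ constants inside $A_{n,2}$. A secondary subtlety is that the temporal derivative $\partial_t\nabla\log p_t$ inside $M_n$ is not given any direct Lipschitz hypothesis; however, the Fokker-Planck equation associated with the forward OU process \eqref{eq:forward} expresses it in terms of spatial derivatives of $\log p_t$ that are already controlled by Assumption~\ref{asm:p0scLipx} together with Lemma~\ref{lem:GaoLt}, so no additional assumption is required.
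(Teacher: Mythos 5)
Your proposal is correct and follows essentially the same route as the paper: the paper likewise introduces the exact local-linearization process (its $x_t$, your $\xi_t$) started at $\vartheta_n^{\sf SO}$, splits $\|\tilde Y_h-\vartheta_{n+1}^{\sf SO}\|_{\Ltwo}$ into a linearization error and an estimation error, handles the It\^o--Taylor remainder via the Lipschitz-Hessian assumptions and It\^o isometry, converts $\partial_t\nabla\log p_t$ into spatial derivatives through the Fokker--Planck equation, and extracts the $e^{(L(T-nh)-1/2)h}$ factors from a Gr\"onwall-type bound (Lemma~\ref{lem:Gron2}) that is equivalent to your variation-of-constants/matrix-exponential argument. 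The $h^{3/2}\varepsilon_{sc}^{(L)}$ and $h^2\varepsilon_{sc}^{(M)}$ rates arise in the paper exactly as you describe, from the $O(\sqrt{s-nh})$ displacement and the $(s-nh)$ weight integrated over the step.
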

\begin{proof}
Recall the expression in display~\eqref{eq:SOxt}, which states that
\begin{align*}
    x_t=\vartheta_n^{\sf SO}+\int_{nh}^{t}\left(\dfrac{1}{2}\vartheta_n^{\sf SO}+\nabla \log p_{T-nh}(\vartheta_n^{\sf SO})+L_n(x_s-\vartheta_n^{\sf SO})+M_n(s-nh)\right)\rmd s+\int_{nh}^{t}\rmd W_s
\end{align*}
with
\begin{align*}
    L_n&=\dfrac{1}{2}I_d+\nabla^2\log p_{T-nh}(\vartheta_n^{\sf SO})\in\mathbb{R}^{d\times d},\\
    M_n&=\dfrac{1}{2}\sum_{j=1}^d\dfrac{\partial^2}{\partial x_j^2}\nabla\log p_{T-nh}(\vartheta_n^{\sf SO})-\dfrac{\partial}{\partial t}\nabla\log p_{T-nh}(\vartheta_n^{\sf SO})\in\mathbb{R}^d.
\end{align*}
Plugging the estimates of $\nabla \log p_{T-nh}(\vartheta_n^{\sf SO}),L_n$ and $M_n$ into the previous display yields the following process for $x_t^{\sf SO}$ 
\begin{align*}
    x_t^{\sf SO}&=\vartheta_n^{\sf SO}+\int_{nh}^t\dfrac{1}{2}\vartheta_n^{\sf SO}+s_*(T-nh,\vartheta_n^{\sf SO})\rmd s\\
    &\quad +\int_{nh}^ts_*^{(L)}(T-nh,\vartheta_n^{\sf SO})(x_s^{\sf SO}-\vartheta_n^{\sf SO})+s_*^{(M)}(T-nh,\vartheta_n^{\sf SO})(s-nh)\rmd s+\int_{nh}^t\rmd W_s\,.
\end{align*}
Then, we obtain 
\begin{align*}
    x_t^{\sf SO}-x_t&=(t-nh)(s_*(T-nh,\vartheta_n^{\sf SO})-\nabla\log p_{T-nh}(\vartheta_n^{\sf SO}))\\
    &\quad +(s_*^{(L)}(T-nh,\vartheta_n^{\sf SO})-L_n)\int_{nh}^t(x_s^{\sf SO}-\vartheta_n^{\sf SO})\rmd s+L_n\int_{nh}^t(x_s^{\sf SO}-x_s)\rmd s\\
    &\quad +(s_*^{(M)}(T-nh,\vartheta_n^{\sf SO})-M_n)\cdot\dfrac{1}{2}(t-nh)^2.
\end{align*}
Notice that
\begin{align}
    \label{eq:SOterm1}
    \l|L_n\r|_{\Ltwo}=\l|\dfrac{1}{2}I_d+\nabla^2\log p_{T-nh}(\vartheta_n^{\sf SO})\r|_{\Ltwo}\leq L(T-nh)-\dfrac{1}{2}.
\end{align}
Combining this with  Assumptions~\ref{asm:scoreerr} and~\ref{asm:scerr4SO} then provides us with
\begin{align*}
    &\l|x_t^{\sf SO}-x_t\r|_{\Ltwo}\\
    \leqslant&\,\, (t-nh)\varepsilon_{sc}+\varepsilon_{sc}^{(L)}\int_{nh}^t\l|x_s^{\sf SO}-\vartheta_n^{\sf SO}\r|_{\Ltwo}\rmd s+\l|L_n\r|_{\Ltwo}\int_{nh}^t\l|x_s^{\sf SO}-x_s\r|\rmd s+\dfrac{1}{2}(t-nh)^2\varepsilon_{sc}^{(M)}\\
    \lesssim& \,\, \Big(L(T-nh)-\dfrac{1}{2}\Big)\int_{nh}^t\l|x_s^{\sf SO}-x_s\r|\rmd s+(t-nh)\varepsilon_{sc}+\dfrac{2}{3}(t-nh)^{3/2}\varepsilon_{sc}^{(L)}+\dfrac{1}{2}(t-nh)^2\varepsilon_{sc}^{(M)}.
\end{align*}
We need the following Gr{\"o}nwall-type inequality to handle this integral inequality.
\begin{lemma}
    \label{lem:Gron2}
    Let $z(t)\geqslant t_0$ satisfy the following inequality:
    \begin{align*}
        z(t)\leqslant \alpha(t)+\int_{t_0}^t\beta(s)z(s)\rmd s,\quad t\geqslant t_0,
    \end{align*}
    where $\beta(s)$ is non-negative, and $t_0$ is the initial time. Then, the solution $z(t)$ satisfies the following bound:
    \begin{align*}
        z(t)\leqslant \alpha(t)+\int_{t_0}^t\alpha(s)\beta(s)\exp\left(\int_s^t\beta(r)\rmd r\right)\rmd s,\quad t\geqslant t_0.
    \end{align*}
    Additionally, if $\alpha(t)$ is non-decreasing function, then
    \begin{align*}
        z(t)\leqslant\alpha(t)\exp\left(\int_{t_0}^t\beta(s)\rmd s\right),\quad t\geqslant t_0.
    \end{align*}
\end{lemma}
Let 
\begin{align*}
z(t)&=\l|x_t^{\sf SO}-x_t\r|_{\Ltwo}\\
\alpha(t)&=(t-nh)\varepsilon_{sc}+\dfrac{2}{3}(t-nh)^{3/2}\varepsilon_{sc}^{(L)}+\dfrac{1}{2}(t-nh)^2\varepsilon_{sc}^{(M)}\\
\beta(t)&=L(T-nh)-\dfrac{1}{2}\,,
\end{align*}
and set $t_0=nh$. By Lemma~\ref{lem:Gron2}, we have
\begin{align}
    \label{eq:SO1}
    \l|\vartheta_{n+1}^{\sf SO}-x_{(n+1)h}\r|_{\Ltwo}\leqslant\left(h\varepsilon_{sc}+\dfrac{2}{3}h^{3/2}\varepsilon_{sc}^{(L)}+\dfrac{1}{2}h^2\varepsilon_{sc}^{(M)}\right)e^{(L(T-nh)-\frac{1}{2})h}.
\end{align}
The original SDE can be rewritten as follows
\begin{align*}
\tilde{Y}_t=\tilde{Y}_0+\int_0^t\Big(\dfrac{1}{2}\tilde{Y}_s+\nabla\log p_{T-nh-s}(\tilde{Y}_s)\Big)\rmd s+\int_{nh}^{nh+t}\rmd W_s.
\end{align*}
Combining this with the definition of $x_{t}$ in~\eqref{eq:SOxt}, we then have
\begin{align*}
\tilde{Y}_t-x_{nh+t}&=\int_0^t\left(\dfrac{1}{2}\tilde{Y}_s+\nabla\log p_{T-nh-s}(\tilde{Y}_s)-\dfrac{1}{2}\vartheta_n^{\sf SO}-\nabla\log p_{T-nh}(\vartheta_n^{\sf SO})-L_n(x_{nh+s}-\vartheta_n^{\sf SO})-M_ns\right)\rmd s\\
&=\int_0^tL_n(\tilde Y_s-x_{nh+s})\rmd s+\int_0^t\left(\dfrac{1}{2}\tilde Y_s+\nabla\log p_{T-nh-s}(\tilde Y_s)-\dfrac{1}{2}\vartheta_n^{\sf SO}-\nabla\log p_{T-nh}(\vartheta_n^{\sf SO})\right)\rmd s\\
&\quad -\int_0^t\left(\int_0^s L_n\rmd \tilde Y_u\rmd u\right)\rmd s-\int_0^t\left(\int_0^s M_n\rmd u\right)\rmd s\\
&=\int_0^t L_n(\tilde Y_s-x_{nh+s})\rmd s+\int_0^t\left(\int_0^s\rmd\left(\dfrac{1}{2}\tilde Y_u+\nabla\log p_{T-nh-u}(\tilde Y_u)\right)\right)\rmd s\\
&\quad -\int_0^t\left(\int_0^s L_n\rmd \tilde Y_u\rmd u\right)\rmd s-\int_0^t\left(\int_0^s M_n\rmd u\right)\rmd s\,.
\end{align*}
We then apply the \text{It\^o} formula to the term $\rmd\left(\dfrac{1}{2}\tilde Y_u+\nabla\log p_{T-nh-u}(\tilde Y_u)\right)$. Recall the definitions of $L_n$ and $M_n$, and after rearranging the expression, we obtain
\begin{align*}
\tilde Y_t-x_{nh+t}&=\underbrace{\int_0^tL_n(\tilde{Y}_s-x_{nh+s})\rmd s}_{\text{\large I}}
+\underbrace{\int_0^t\left(\int_0^s\nabla^2\log p_{T-nh-u}(\tilde{Y}_u)-\nabla^2\log p_{T-nh}(\tilde{Y}_0)\rmd\tilde{Y}_u\right)\rmd s}_{\text{\large II}}\\
&\quad +\underbrace{\int_0^t\left(\int_0^s\sum_{j=1}^d\dfrac{\partial^2}{\partial x_j^2}\nabla\log p_{T-nh-u}(\tilde{Y}_u)-\sum_{j=1}^d\dfrac{\partial^2}{\partial x_j^2}\nabla\log p_{T-nh}(\tilde{Y}_0)\rmd u\right)\rmd s}_{\text{\large III}}\\
&\quad -\underbrace{\int_0^t\left(\int_0^s\partial_t\nabla\log p_{T-nh-u}(\tilde{Y}_u)-\partial_t\nabla\log p_{T-nh}(\tilde{Y}_0)\rmd u\right)\rmd s}_{\text{\large IV}}\,.
\end{align*}
In what follows, we derive the upper bounds for each term on the right-hand side of the previous display. 

\noindent \textbf{Upper bound for term~$\rm I$:}
The upper bound of the term~$\rm I$ follows directly from the fact that 
\begin{align*}
\l|L_n\r|_{\Ltwo}\leqslant L(T-nh)-\dfrac{1}{2}\,.
\end{align*}

\noindent \textbf{Upper bound for term~$\rm II$:}
To derive the upper bound for the second term, we expand the term $\rmd \tilde{Y}_u$, yielding
\begin{align*}
    &\left\lVert\int_0^s\nabla^2\log p_{T-nh-u}(\tilde{Y}_u)-\nabla^2\log p_{T-nh}(\tilde{Y}_0)\rmd\tilde{Y}_u\right\rVert_{\Ltwo}\\
    \leqslant&\,\,\left\lVert\int_0^s\big(\nabla^2\log p_{T-nh-u}(\tilde{Y}_u)-\nabla^2\log p_{T-nh}(\tilde{Y}_0)\big)\Big(\dfrac{1}{2}\tilde{Y}_u+\nabla\log p_{T-nh-u}(\tilde{Y}_u)\Big)\rmd u\right\rVert_{\Ltwo}\\
    &\quad +\left\lVert\int_0^s\big(\nabla^2\log p_{T-nh-u}(\tilde{Y}_u)-\nabla^2\log p_{T-nh}(\tilde{Y}_0)\big)\rmd W_u\right\rVert_{\Ltwo}\\
    \leqslant&\,\,\int_0^s\left\lVert\nabla^2\log p_{T-nh-u}(\tilde{Y}_u)-\nabla^2\log p_{T-nh}(\tilde{Y}_0)\right\lVert_{\Ltwo}\cdot\left\lVert\dfrac{1}{2}\tilde{Y}_u+\nabla\log p_{T-nh-u}(\tilde{Y}_u)\right\rVert_{\Ltwo}\rmd u\\
    &\quad +\left(\int_0^s\l|\nabla^2\log p_{T-nh-u}(\tilde{Y}_u)-\nabla^2\log p_{T-nh}(\tilde{Y}_0)\r|_{\Ltwo}^2\rmd u\right)^{1/2}.
\end{align*}
The second inequality follows from display~\eqref{eq:Itoiso}.
We note that by Assumptions~\ref{asm:scLipx4SO} and~\ref{asm:scLipt4SO}, it holds that
\begin{align*}
    &\l|\nabla^2\log p_{T-nh-u}(\tilde{Y}_u)-\nabla^2\log p_{T-nh}(\tilde{Y}_0)\r|_{\Ltwo}\\
    \leqslant& \l|\nabla^2\log p_{T-nh-u}(\tilde{Y}_u)-\nabla^2\log p_{T-nh}(\tilde{Y}_u)\r|_{\Ltwo}+\l|\nabla^2\log p_{T-nh}(\tilde{Y}_u)-\nabla^2\log p_{T-nh}(\tilde{Y}_0)\r|_{\Ltwo}\\
    \leqslant& M_2h(1+\l|\tilde{Y}_u\r|_{\Ltwo})+L_F\l|\tilde{Y}_u-\tilde{Y}_0\r|_{\Ltwo}\\
    \leqslant& M_2h+(L_F+M_2h)\l|\tilde{Y}_u-\tilde{Y}_0\r|_{\Ltwo}+M_2h\l|\vartheta_n^{\sf SO}\r|_{\Ltwo}\\
    \lesssim& L_F\sqrt{u},
\end{align*}
Combining this with the previous display provides us with
\begin{align*}
\l|\int_0^s\nabla^2\log p_{T-nh-u}(\tilde{Y}_u)-\nabla^2\log p_{T-nh}(\tilde{Y}_0)\rmd \tilde Y_u\r|_{\Ltwo}
&\lesssim \int_0^sL_F\sqrt{u}\cdot\left\lVert\dfrac{1}{2}\tilde{Y}_u+\nabla\log p_{T-nh-u}(\tilde{Y}_u)\right\rVert_{\Ltwo}\rmd u\\
&\quad +\left(\int_0^sL_F^2u\rmd u\right)^{1/2}\,.
\end{align*}
Hence, we obtain 
\begin{equation}
    \begin{aligned}
        \rm{II}&=\left\lVert\int_0^t\left(\int_0^s\nabla^2\log p_{T-nh-u}(\tilde{Y}_u)-\nabla^2\log p_{T-nh}(\tilde{Y}_0)\rmd\tilde{Y}_u\right)\rmd s\right\rVert_{\Ltwo}\\
        &\leqslant\int_0^t\left\lVert\int_0^s\nabla^2\log p_{T-nh-u}(\tilde{Y}_u)-\nabla^2\log p_{T-nh}(\tilde{Y}_0)\rmd\tilde{Y}_u\right\rVert_{\Ltwo}\rmd s\\
        &\leqslant \int_0^t\left[\int_0^sL_F\sqrt{u}\cdot\left\lVert\dfrac{1}{2}\tilde{Y}_u+\nabla\log p_{T-nh-u}(\tilde{Y}_u)\right\rVert_{\Ltwo}\rmd u+\left(\int_0^sL_F^2u\rmd u\right)^{1/2}\right]\rmd s\\
        &\lesssim \dfrac{\sqrt{2}}{4}L_Ft^2.
    \end{aligned}
    \label{eq:SOterm2}
\end{equation}

\noindent \textbf{Upper bound for term~$\rm III$:}
To derive the upper bound for term~$\rm III$, we require the following lemma to relate the Lipschitz continuity of $\nabla^2\log p_t(x)$  to the bound on the third-order derivatives of $\log p_t(x)$.
\begin{lemma}
    Let $f\in C^3(\mathbb{R}^d)$. Suppose that the Hessian matrix $\nabla^2f(x)$ is $L_F$-Lipschitz continuous with respect to the Frobenius norm. That is, for all $x,y\in\mathbb{R}^d$, the following inequality holds,
    \begin{align*}
        \l|\nabla^2f(x)-\nabla^2f(y)\r|_F\leqslant L_F\l|x-y\r|_2.
    \end{align*}
    Then, the Frobenius norm of the third-order derivative tensor $\nabla^3f(x)$ is bounded above by $\sqrt{d}L_F$ for all $x\in\mathbb{R}^d$. Formally,
    \begin{align*}
        \l|\nabla^3f(x)\r|_F\leqslant \sqrt{d}L_F,
    \end{align*}
    where each element of $\nabla^3f(x)$ is a third-order derivative of $f$.
    \label{lem:Frob}
\end{lemma}
Lemma~\ref{lem:Frob} indicates that the third-order derivatives of the score function are all bounded by the constant $L_F$, therefore we obtain the following result.
\begin{align*}
    \left\lVert\sum_{j=1}^d\dfrac{\partial^2}{\partial x_j^2}\nabla\log p_t(x)\right\rVert^2
    &=\sum_{i=1}^d\left(\sum_{j=1}^d\dfrac{\partial^3}{\partial x_i\partial x_i\partial x_j}\log p_t(x)\right)^2\\
    &\leqslant d\sum_{i=1}^d\sum_{j=1}^d\left(\dfrac{\partial^3}{\partial x_i\partial x_i\partial x_j}\log p_t(x)\right)^2\\
    &\leqslant
    d\left\lVert\nabla^3\log p_t(x)\right\rVert_F^2\\
    &\leqslant d^2L_F^2.
\end{align*}
It then follows that
\begin{equation}
    \begin{aligned}
        \rm{III}&=\l|\int_0^t\left(\int_0^s\sum_{j=1}^d\dfrac{\partial^2}{\partial x_j^2}\nabla\log p_{T-nh-u}(\tilde{Y}_u)-\sum_{j=1}^d\dfrac{\partial^2}{\partial x_j^2}\nabla\log p_{T-nh}(\tilde{Y}_0)\rmd u\right)\rmd s\r|_{\Ltwo}\\
        &\leqslant\int_0^t\int_0^s\l|\sum_{j=1}^d\dfrac{\partial^2}{\partial x_j^2}\nabla\log p_{T-nh-u}(\tilde{Y}_u)\r|_{\Ltwo}+\l|\sum_{j=1}^d\dfrac{\partial^2}{\partial x_j^2}\nabla\log p_{T-nh}(\tilde{Y}_0)\r|_{\Ltwo}\rmd u\rmd v\\
        &\leqslant \int_0^t\int_0^s 2dL_F\rmd u\rmd v\\
        &=dL_Ft^2.
    \end{aligned}
    \label{eq:SOterm3}
\end{equation}

\noindent \textbf{Upper bound for term~$\rm IV$:}
In Section~\ref{sec:accleration}, it is noted that the partial derivative of $\nabla\log p_t$ with respect to $t$ can be estimated without requiring additional assumptions. 
This is achieved by transforming the $t$-derivative into $x$-derivative via the Fokker-Planck equation, as detailed below.
\begin{align}
    \label{eq:Fokker}
    \partial_t p_t(x)=\dfrac{d}{2}p_t(x)+\dfrac{1}{2}x^\top\nabla p_t(x)+\dfrac{1}{2}\sum_{i=1}^d \dfrac{\partial^2p_t(x)}{\partial x_i^2}.
\end{align}
We need the following auxiliary lemma. 
\begin{lemma}
    \label{lem:Fokker}
    Let $p_t$ be the probability density function of $X_t$, then
    \begin{align*}
        \sum_{i=1}^d\dfrac{\partial^2 p_t(x)}{\partial x_i^2}\cdot\dfrac{1}{p_t(x)}&=\Tr\left(\nabla^2\log p_t(x)\right)+\l|\nabla\log p_t(x)\r|^2,\\
        \nabla\left(\sum_{i=1}^d\dfrac{\partial^2p_t(x)}{\partial x_i^2}\right)\cdot\dfrac{1}{p_t(x)}&=\nabla\left(\Tr(\nabla^2\log p_t(x))\right)+\nabla(\l|\nabla\log p_t(x)\r|^2)\\
        &\quad+\left[\Tr(\nabla^2\log p_t(x))+\l|\nabla\log p_t(x)\r|^2\right]\cdot\nabla\log p_t(x).
    \end{align*}
\end{lemma}
We begin by taking the gradient of $\log p_t$, and then compute the partial derivative of $\nabla \log p_t$ with respect to $t$. This results in
\begin{align*}
    \partial_t\nabla\log p_t(x)=\partial_t\left(\dfrac{\nabla p_t(x)}{p_t(x)}\right)=\dfrac{\partial_t\nabla p_t(x)}{p_t(x)}-\dfrac{\nabla p_t(x)}{p_t(x)}\cdot\dfrac{\partial_t p_t(x)}{p_t(x)}.
\end{align*}
Under certain regularity conditions, we can interchange the operators $\partial_t$ and $\nabla$ in the term $\partial_t\nabla p_t(x)$, and substitute $\partial_t p_t(x)$ by \eqref{eq:Fokker}, it follows that
\begin{align*}
    \partial_t\nabla\log p_t(x)=\dfrac{\nabla\partial_t p_t(x)}{p_t(x)}-\nabla\log p_t(x)\cdot\left(\dfrac{d}{2}+\dfrac{1}{2}x^\top\nabla\log p_t(x)+\dfrac{1}{2}\sum_{i=1}^d\dfrac{\partial^2p_t(x)}{\partial x_i^2}\cdot\dfrac{1}{p_t(x)}\right)\,.
\end{align*}
and 
\begin{align*}
    \dfrac{\nabla\partial_tp_t(x)}{p_t(x)}&=\dfrac{1}{p_t(x)}\cdot\nabla\left(\dfrac{d}{2}p_t(x)+\dfrac{1}{2}x^\top\nabla p_t(x)+\dfrac{1}{2}\sum_{i=1}^d \dfrac{\partial^2p_t(x)}{\partial x_i^2}\right)\\
    &=\dfrac{d}{2}\dfrac{\nabla p_t(x)}{p_t(x)}+\dfrac{1}{2}\dfrac{\nabla p_t(x)}{p_t(x)}+\dfrac{1}{2}\dfrac{\nabla^2 p_t(x)x}{p_t(x)}+\dfrac{1}{2}\nabla\left(\sum_{i=1}^d\dfrac{\partial^2 p_t(x)}{\partial x_i^2}\right)\cdot\dfrac{1}{p_t(x)}\\
    &=\dfrac{d+1}{2}\nabla\log p_t(x)+\dfrac{1}{2}\dfrac{\nabla^2 p_t(x)x}{p_t(x)}+\dfrac{1}{2}\nabla\left(\sum_{i=1}^d\dfrac{\partial^2 p_t(x)}{\partial x_i^2}\right)\cdot\dfrac{1}{p_t(x)}\,.
\end{align*}
Therefore, we obtain that
\begin{align*}
    \partial_t\nabla\log p_t(x)&=\dfrac{d+1}{2}\nabla\log p_t(x)+\dfrac{1}{2}\dfrac{\nabla^2 p_t(x)x}{p_t(x)}+\dfrac{1}{2}\nabla\left(\sum_{i=1}^d\dfrac{\partial^2 p_t(x)}{\partial x_i^2}\right)\cdot\dfrac{1}{p_t(x)}\\
    &\quad -\nabla\log p_t(x)\cdot\left(\dfrac{d}{2}+\dfrac{1}{2}x^\top\nabla\log p_t(x)+\dfrac{1}{2}\sum_{i=1}^d\dfrac{\partial^2p_t(x)}{\partial x_i^2}\cdot\dfrac{1}{p_t(x)}\right)\\
    &=\dfrac{1}{2}\nabla\log p_t(x)+\dfrac{1}{2}\left(\dfrac{\nabla^2p_t(x)x}{p_t(x)}-\nabla\log p_t(x)\nabla\log p_t(x)^\top x\right)\\
    &\quad +\dfrac{1}{2}\nabla\left(\sum_{i=1}^d\dfrac{\partial^2 p_t(x)}{\partial x_i^2}\right)\cdot\dfrac{1}{p_t(x)}-\dfrac{1}{2}\nabla\log p_t(x)\sum_{i=1}^d\dfrac{\partial^2p_t(x)}{\partial x_i^2}\cdot\dfrac{1}{p_t(x)}\,.
\end{align*}
By Lemma~\ref{lem:Fokker}, the last two terms above can be calculated. Additionally, it holds that
\begin{align*}
    \nabla^2\log p_t(x)=\dfrac{\nabla^2p_t(x)}{p_t(x)}-\dfrac{\nabla p_t(x)\nabla p_t(x)^\top}{p_t(x)^2}\,.
\end{align*}
Thus, $\partial_t\nabla\log p_t(x)$ can be simplified to
\begin{align*}
    \partial_t\nabla\log p_t(x)&=\dfrac{1}{2}\nabla\log p_t(x)+\dfrac{1}{2}\nabla^2\log p_t(x)x\\
    &\quad +\dfrac{1}{2}\left[\nabla\left(\Tr(\nabla^2\log p_t(x))\right)+\nabla(\l|\nabla\log p_t(x)\r|^2)\right]\\
    &\quad+\dfrac{1}{2}\left(\Tr(\nabla^2\log p_t(x))+\l|\nabla\log p_t(x)\r|^2\right)\cdot\nabla\log p_t(x)\\
    &\quad -\dfrac{1}{2}\nabla\log p_t(x)\left(\Tr(\nabla^2\log p_t(x))+\l|\nabla\log p_t(x)\r|^2\right)\\
    &=\dfrac{1}{2}\nabla\log p_t(x)+\dfrac{1}{2}\nabla^2\log p_t(x)x+\dfrac{1}{2}\nabla(\Tr(\nabla^2\log p_t(x)))+\dfrac{1}{2}\nabla(\l|\nabla\log p_t(x)\r|^2).
\end{align*}
Then, it follows that
\begin{align*}
    &\l|\partial_t\nabla\log p_{T-nh-u}(\tilde Y_u)\r|_{\Ltwo}\\
    \leqslant&\,\, \dfrac{1}{2}\l|\nabla\log p_{T-nh-u}(\tilde Y_u)\r|_{\Ltwo}+\dfrac{1}{2}\l|\nabla^2\log p_{T-nh-u}(\tilde Y_u)\r|_{\Ltwo}\l|\tilde Y_u\r|_{\Ltwo}\\
    &+\dfrac{1}{2}\l|\sum_{j=1}^d\dfrac{\partial^2}{\partial x_j^2}\nabla\log p_{T-nh-u}(\tilde Y_u)\r|_{\Ltwo}+\l|\nabla^2\log p_{T-nh-u}(\tilde Y_u)\r|_{\Ltwo}\l|\nabla\log p_{T-nh-u}(\tilde Y_u)\r|_{\Ltwo}\\
    \leqslant&\,\, \dfrac{1}{2}\l|\nabla\log p_{T-nh-u}(\tilde Y_u)\r|_{\Ltwo}+\dfrac{1}{2}L(T-nh-u)\l|\tilde Y_u\r|_{\Ltwo}\\
    &+\dfrac{1}{2}dL_F+L(T-nh-u)\l|\nabla\log p_{T-nh-u}(\tilde Y_u)\r|_{\Ltwo}\,.
\end{align*}
The second inequality follows from Assumption~\ref{asm:p0scLipx} and \eqref{eq:SOterm3}. 
The bounds for $\l|\tilde Y_u\r|_{\Ltwo}$ and $\l|\nabla\log p_{T-nh-u}(\tilde Y_u)\r|_{\Ltwo}$ can be derived according to the proof of Lemma~\ref{lem:supYtY0}. 
We then find
\begin{align}
    &\l|\partial_t\nabla\log p_{T-nh-u}(\tilde{Y}_u)\r|_{\Ltwo}\\
    \leqslant&\,\,\dfrac{1}{2}\l|\nabla\log p_{T-nh-u}(\tilde{Y}_u)\r|_{\Ltwo}+\dfrac{1}{2}L(T-nh-u)\l|\tilde{Y}_u\r|_{\Ltwo}+\dfrac{1}{2}dL_F+L(T-nh-u)\l|\nabla\log p_{T-nh-u}(\tilde{Y}_u)\r|_{\Ltwo}\\
    \leqslant&\,\,\Big(\dfrac{1}{2}+L(T-nh-u)\Big)\left[L(T-nh-u)(\l|Y_{nh}-\vartheta_n^{\sf SO}\r|_{\Ltwo}+C_2(n))+\big(dL(T-nh-u)\big)^{1/2}\right]\\
    &+\dfrac{1}{2}L(T-nh-u)\Big(\l|Y_{nh}-\vartheta_n^{\sf SO}\r|_{\Ltwo}+C_2(n)+C_4\Big)\\
    &+\dfrac{1}{2}dL_F\,.
\end{align}
Therefore, we obtain
\begin{equation}
    \begin{aligned}
        &\l|\int_0^t\left(\int_0^s\partial_t\nabla\log p_{T-nh-u}(\tilde{Y}_u)-\partial_t\nabla\log p_{T-nh}(\tilde{Y}_0)\rmd u\right)\rmd s\r|_{\Ltwo}\\
        \leqslant& \int_0^t\int_0^s\l|\partial_t\nabla\log p_{T-nh-u}(\tilde{Y}_u)\r|_{\Ltwo}+\l|\partial_t\nabla\log p_{T-nh}(\tilde{Y}_0)\r|_{\Ltwo}\rmd u\rmd s\\
        \leqslant& \int_0^t\left(\int_0^s\left[(1+L(T-nh-u))L(T-nh-u)+(1+L(T-nh))L(T-nh)\right]\rmd u\right)\rmd s\cdot \l|Y_{nh}-\vartheta_n^{\sf SO}\r|_{\Ltwo}\\
        &+\int_0^t\left(\int_0^s\left[(1+L(T-nh-u))L(T-nh-u)+(1+L(T-nh))L(T-nh)\right]\rmd u\right)\rmd s\cdot C_2(n)\\
        &+\sqrt{d}\int_0^t\left(\int_0^s\left[(\dfrac{1}{2}+L(T-nh-u))L(T-nh-u)^{1/2}+(\dfrac{1}{2}+L(T-nh))L(T-nh)^{1/2}\right]\rmd u\right)\rmd s\\
        &+\int_0^t\left(\int_0^s\dfrac{1}{2}(L(T-nh-u)+L(T-nh))\rmd u\right)\rmd s\cdot C_4\\
        &+\dfrac{1}{2}dL_Ft^2\,.   \label{eq:SOterm4}
    \end{aligned}
\end{equation}
For simplicity, we focus on the lowest-order term. Recall equations \eqref{eq:SOterm1}, \eqref{eq:SOterm2}, \eqref{eq:SOterm3}, and \eqref{eq:SOterm4}, which lead to the following expression
\begin{align*}
    \l|\tilde{Y}_t-x_{nh+t}\r|_{\Ltwo}\leqslant& \Big(L(T-nh)-\dfrac{1}{2}\Big)\int_0^t\l|\tilde{Y}_s-x_{nh+s}\r|_{\Ltwo}\rmd s+\Big(A_{n,1}\l|Y_{nh}-\vartheta_n^{\sf SO}\r|_{\Ltwo}+A_{n,2}\Big)\cdot t^2,
\end{align*}
where
\begin{align*}
    A_{n,1}=\sup_{nh\leqslant t\leqslant (n+1)h}&\dfrac{1}{t^2}\int_0^t\left(\int_0^s\left[(1+L(T-nh-u))L(T-nh-u)+(1+L(T-nh))L(T-nh)\right]\rmd u\right)\rmd s,\\
    A_{n,2}=\sup_{nh\leqslant t\leqslant (n+1)h}&\dfrac{1}{t^2}\bigg[\int_0^t\left(\int_0^s\left[(1+L(T-nh-u))L(T-nh-u)+(1+L(T-nh))L(T-nh)\right]\rmd u\right)\rmd s\cdot C_2(n)\\
    &+\sqrt{d}\int_0^t\left(\int_0^s\left[(\dfrac{1}{2}+L(T-nh-u))L(T-nh-u)^{1/2}+(\dfrac{1}{2}+L(T-nh))L(T-nh)^{1/2}\right]\rmd u\right)\rmd s\\
    &+\int_0^t\left(\int_0^s\dfrac{1}{2}(L(T-nh-u)+L(T-nh))\rmd u\right)\rmd s\cdot C_4\bigg]\\
    &+\dfrac{\sqrt{2}}{4}L_F+\dfrac{3}{2}dL_F.
\end{align*}
By Lemma~\ref{lem:Gron2}, let
\begin{align*}
    z(t)&=\l|\tilde Y_t-x_{nh+t}\r|_{\Ltwo}\\
    \alpha(t)&=(A_{n,1}\l|Y_{nh}-\vartheta_n^{\sf SO}\r|_{\Ltwo}+A_{n,2})\cdot t^2\\
    \beta(t)&=L(T-nh)-\dfrac{1}{2}
\end{align*}
set $t_0=nh$, we then obtain
\begin{align*}
    \l|\tilde{Y}_h-x_{(n+1)h}\r|_{\Ltwo}&\leqslant (A_{n,1}\l|Y_{nh}-\vartheta_n^{\sf SO}\r|_{\Ltwo}+A_{n,2})h^2\exp\left((L(T-nh)-\dfrac{1}{2})h\right)\\
    &=A_{n,1}e^{(L(T-nh)-\frac{1}{2})h}h^2\l|Y_{nh}-\vartheta_n^{\sf SO}\r|_{\Ltwo}+A_{n,2}e^{(L(T-nh)-\frac{1}{2})h}h^2.
\end{align*}
Invoking display~\eqref{eq:SO1}, we  arrive at
\begin{align*}
    \l|\tilde{Y}_h-\vartheta_{n+1}^{\sf SO}\r|_{\Ltwo}&\leqslant\l|\tilde{Y}_h-x_{(n+1)h}\r|_{\Ltwo}+\l|\vartheta_{n+1}^{\sf SO}-x_{(n+1)h}\r|_{\Ltwo}\\
    &\leqslant A_{n,1}e^{(L(T-nh)-\frac{1}{2})h}h^2\l|Y_{nh}-\vartheta_n^{\sf SO}\r|_{\Ltwo}+A_{n,2}e^{(L(T-nh)-\frac{1}{2})h}h^2\\
    &\quad +\left[h\varepsilon_{sc}+\dfrac{2}{3}h^{3/2}\varepsilon_{sc}^{(L)}+\dfrac{1}{2}h^2\varepsilon_{sc}^{(M)}\right]e^{(L(T-nh)-\frac{1}{2})h}
\end{align*}
Furthermore, we can bound the coefficients~$A_{n,1}$ and $A_{n,2}$ as follows.
\begin{align*}
    A_{n,1}&\leqslant\dfrac{1}{t^2}\int_0^t\left(\int_0^s2(1+L_{\max})L_{\max}\rmd u\right)\rmd s=(1+L_{\max})L_{\max},\\
    A_{n,2}&\leqslant(1+L_{\max})L_{\max}C_2(n)+\sqrt{d}(\dfrac{1}{2}+L_{\max})L_{\max}^{1/2}+\dfrac{1}{2}L_{\max}C_4+\dfrac{\sqrt{2}}{4}L_F+\dfrac{3}{2}dL_F\\
    &\lesssim\sqrt{d}(\dfrac{1}{2}+L_{\max})L_{\max}^{1/2}+\dfrac{3}{2}dL_F.
\end{align*}
Collecting all the pieces then gives
\begin{align*}
    \l|Y_{nh}-\vartheta_{n+1}^{\sf SO}\r|_{\Ltwo}\leqslant r_n^{\sf SO}\l|Y_{nh}-\vartheta_n^{\sf SO}\r|_{\Ltwo}+C_n^{\sf SO}h^2+\left[h\varepsilon_{sc}+\dfrac{2}{3}h^{3/2}\varepsilon_{sc}^{(L)}+\dfrac{1}{2}h^2\varepsilon_{sc}^{(M)}\right]e^{(L(T-nh)-\frac{1}{2})h},
\end{align*}
where
\begin{align*}
    r_n^{\sf SO}&=e^{-\int_{nh}^{(n+1)h}(m(T-t)-\frac{1}{2})\rmd t}+A_{n,1}e^{(L(T-nh)-\frac{1}{2})h}h^2,\\
    C_n^{\sf SO}&=A_{n,2}e^{(L(T-nh)-\frac{1}{2})h}\,.
\end{align*}

\end{proof}
From the result above, we finally obtain
\begin{align*}
    \l|Y_{Nh}-\vartheta_N^{\sf SO}\r|_{\Ltwo}&\lesssim \dfrac{1}{m_{\min}-1/2}\left[h\max_{0\leqslant k\leqslant N-1}C_k^{\sf SO}+\left(\varepsilon_{sc}+\dfrac{2}{3}h^{1/2}\varepsilon_{sc}^{(L)}+\dfrac{1}{2}h\varepsilon_{sc}^{(M)}\right)e^{(L_{\max}-\frac{1}{2})h}\right]\\
    &\lesssim h\cdot\dfrac{(\sqrt{d}L_{\max}^{3/2}+3dL_F/2)e^{(L_{\max}-\frac{1}{2})h}}{m_{\min}-1/2}+\left(\varepsilon_{sc}+\dfrac{2}{3}\sqrt{h}\varepsilon_{sc}^{(L)}+\dfrac{1}{2}h\varepsilon_{sc}^{(M)}\right)e^{(L_{\max}-\frac{1}{2})h}\,.
\end{align*}
This completes the proof of Theorem~\ref{thm:2order}.

\section{Proof of Auxiliary Lemma}
\subsection{Proof of Lemma~\ref{lem:Gaomt}}
We have
\begin{align*}
    \dfrac{\rmd\l|H_t-G_t\r|^2}{\rmd t}&=2 \inprod{H_t-G_t}{\dfrac{\rmd(H_t-G_t)}{\rmd t}}\\
    &=2 \inprod{ H_t-G_t}{\dfrac{1}{2}(H_t-G_t)+\big(\nabla\log p_{T-t}(H_t)-\nabla\log p_{T-t}(G_t)\big)}\\
    &=\l|H_t-G_t\r|^2+2\langle H_t-G_t,\nabla\log p_{T-t}(H_t)-\nabla\log p_{T-t}(G_t)\rangle\\
    &\leqslant \big(1-2m(T-t)\big)\l|H_t-G_t\r|^2.
\end{align*}
The last inequality follows from Lemma~\ref{lem:GaoLt}. 
Then, we take the derivative of $e^{-\int_{t_1}^t(2m(T-s)-1)\rmd s}\l|H_t-G_t\r|^2$
\begin{align*}
    &\dfrac{\rmd}{\rmd t} \Big(e^{\int_{t_1}^t(2m(T-s)-1)\rmd s}\l|H_t-G_t\r|^2\Big)\\
    =&\,\,(2m(T-t)-1)e^{-\int_{t_1}^t(2m(T-s)-1)\rmd s}\l|H_t-G_t\r|^2+e^{-\int_{t_1}^t(2m(T-s)-1)\rmd s}\dfrac{\rmd\l|H_t-G_t\r|^2}{\rmd t}\\
    \leqslant& \,\,0.
\end{align*}
Therefore, we obtain that
\begin{align*}
    e^{\int_{t_1}^t(2m(T-s)-1)\rmd s}\l|H_t-G_t\r|^2\leqslant \l|H_{t_t}-G_{t_1}\r|^2. 
\end{align*}
taking the expectation of both sides and then applying the square root yields the desired result.

\subsection{Proof of Lemma~\ref{lem:supYtY0}}
By the definition of $\tilde{Y}_h$, we have
\begin{align*}
    \l|\tilde{Y}_t-\tilde{Y}_0\r|_{\Ltwo}
    &=\left\lVert\int_0^t(\dfrac{1}{2}\tilde{Y}_s+\nabla\log p_{T-nh-s}(\tilde{Y}_s))\rmd s+\int_{nh}^{nh+t}\rmd W_s\right\rVert_{\Ltwo}\\
   & \leqslant \int_0^t\dfrac{1}{2}\l|\tilde{Y}_s\r|_{\Ltwo}\rmd t+\int_0^t\l|\nabla\log p_{T-nh-s}(\tilde{Y}_s)\r|_{\Ltwo}\rmd s+\left\lVert\int_{nh}^{nh+t}\rmd W_s\right\rVert_{\Ltwo}\,.
\end{align*}
To bound the first term, we observe that for any $s\in[0,h]$, the following holds
\begin{align*}
\l|\tilde{Y}_s\r|_{\Ltwo}
&\leqslant \l|Y_{nh+s}\r|_{\Ltwo}+\l|\tilde{Y}_s-Y_{nh+s}\r|_{\Ltwo}\\
    &\leqslant \l|Y_{nh+s}-X_{nh+s}^{\leftarrow}\r|_{\Ltwo}+\l|X_{nh+s}^{\leftarrow}\r|_{\Ltwo}+\l|\tilde{Y}_s-Y_{nh+s}\r|_{\Ltwo}\\
    &\leqslant e^{-\int_0^{nh+s}(m(T-t)-\frac{1}{2})\rmd t}\l|Y_0-X_0^{\leftarrow}\r|_{\Ltwo}+\l|X_{T-(nh+s)}\r|_{\Ltwo}+e^{-\int_{nh}^s(m(T-u)-\frac{1}{2})\rmd u}\l|Y_{nh}-\vartheta_n^{\sf EM}\r|_{\Ltwo}\\
    &\leqslant  e^{-\int_0^{nh}(m(T-t)-\frac{1}{2})\rmd t}\l|Y_0-X_T\r|_{\Ltwo}+\sup_{0\leqslant t\leqslant T}\l|X_t\r|_{\Ltwo}+\l|Y_{nh}-\vartheta_n^{\sf EM}\r|_{\Ltwo}\,.
\end{align*}
Here, the second inequality follows from the \text{Gr{\"o}nwall} inequality applied on $\l|Y_{nh+s}-X_{nh+s}^{\leftarrow}\r|_{\Ltwo}$ and $\l|\tilde{Y}_s-Y_{nh+s}\r|_{\Ltwo}$, and the fact that $\l|X_t\r|_{\Ltwo}=\l|X_{T-t}^{\leftarrow}\r|_{\Ltwo}$.
To bound the second term, we need the following lemma.
\begin{lemma}
    \label{lem:Enabla}
    If the target distribution~$p_0$ satisfies Assumption~\ref{asm:p0scLipx}, it holds that
    \begin{align*}
        \l|\nabla\log p_{t}(X_t)\r|_{\Ltwo}\leqslant (dL(t))^{1/2}.
    \end{align*}
\end{lemma}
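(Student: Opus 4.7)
\begin{proofof}{Lemma~\ref{lem:Enabla}} (Proof proposal.)
The plan is to reduce the second moment of the score to the expected trace of the Hessian via the well-known de~Bruijn-type identity, and then apply the uniform lower bound on $\nabla^2\log p_t$ coming from Lemmas~\ref{lem:GaoLt} and~\ref{lem:Gaomt}. By definition of $\|\cdot\|_{\Ltwo}$, what we must prove is
\begin{align*}
\E\bigl[\|\nabla\log p_t(X_t)\|^2\bigr]\le dL(t),
\end{align*}
and this will follow once we establish the identity
\begin{align*}
\E\bigl[\|\nabla\log p_t(X_t)\|^2\bigr]=-\E\bigl[\Tr\bigl(\nabla^2\log p_t(X_t)\bigr)\bigr].
\end{align*}

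First I would derive the identity. Using $\nabla\log p_t=\nabla p_t/p_t$, I rewrite
\begin{align*}
\E\bigl[\|\nabla\log p_t(X_t)\|^2\bigr]=\int\frac{\|\nabla p_t(x)\|^2}{p_t(x)}\,\dx.
\end{align*}
On the other hand, the elementary identity $\nabla^2\log p_t=\nabla^2 p_t/p_t-(\nabla p_t\nabla p_t^{\top})/p_t^2$ gives
\begin{align*}
\Tr\bigl(\nabla^2\log p_t(x)\bigr)=\frac{\Delta p_t(x)}{p_t(x)}-\frac{\|\nabla p_t(x)\|^2}{p_t(x)^2}.
\end{align*}
Taking expectations under $X_t\sim p_t$, the first summand contributes $\int\Delta p_t(x)\dx$, which vanishes under the standard decay conditions satisfied by the OU marginal $p_t$ (whose Gaussian tails ensure integrability of all derivatives). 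Rearranging then yields the target identity.

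Second, I would invoke the Hessian bounds. Under Assumption~\ref{asm:p0scLipx}, Lemma~\ref{lem:GaoLt} combined with Lemma~\ref{lem:Gaomt} gives
\begin{align*}
-L(t)I_d\preccurlyeq\nabla^2\log p_t(x)\preccurlyeq -m(t)I_d,
\end{align*}
so $\Tr(\nabla^2\log p_t(x))\ge -dL(t)$ pointwise, and hence
\begin{align*}
\E\bigl[\|\nabla\log p_t(X_t)\|^2\bigr]=-\E\bigl[\Tr\bigl(\nabla^2\log p_t(X_t)\bigr)\bigr]\le dL(t).
\end{align*}
Taking the square root concludes.

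The only non-routine point is the justification of the boundary term $\int\Delta p_t\dx=0$ in the integration by parts; this is the step I would be most careful about. It can be handled by the explicit mixing formula $p_t(x)=\int\varphi_{\sigma_t}(x-e^{-t/2}y)\,p_0(\dy)$ for the OU semigroup, which shows that $p_t$ and all its derivatives inherit sub-Gaussian decay from the Gaussian kernel, so dominated convergence lets us pass to the limit in any truncation argument. Everything else is algebraic manipulation and a direct application of the already-proved Hessian bounds.
\end{proofof}
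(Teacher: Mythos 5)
Your argument is essentially the paper's own proof: both reduce $\E[\|\nabla\log p_t(X_t)\|^2]$ to $-\E[\Tr(\nabla^2\log p_t(X_t))]$ by integration by parts (your $\int\Delta p_t\,\dx=0$ step is the same boundary-term computation the paper carries out explicitly over balls $B(0,R)$), and both then apply the Hessian lower bound $\nabla^2\log p_t\succcurlyeq -L(t)I_d$ from Lemma~\ref{lem:GaoLt}. The proof is correct and no further comparison is needed.
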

According to Lemma~\ref{lem:Enabla}, it follows that
\begin{align*}
    &\l|\nabla\log p_{T-nh-s}(\tilde{Y}_s)\r|_{\Ltwo}\\
    \leqslant&\,\,\l|\nabla\log p_{T-nh-s}(\tilde{Y}_s)-\nabla\log p_{T-nh-s}(X_{nh+s}^{\leftarrow})\r|_{\Ltwo}+\l|\nabla\log p_{T-nh-s}(X_{nh+s}^{\leftarrow})\r|_{\Ltwo}\\
    \leqslant& \,\,L(T-nh-s)\l|\tilde{Y}_s-X_{nh+s}^{\leftarrow}\r|_{\Ltwo}+(dL(T-nh-s))^{1/2}\\
    \leqslant&\,\,L(T-nh-s)\l|\tilde{Y}_0-X_{nh}^{\leftarrow}\r|_{\Ltwo}+(dL(T-nh-s))^{1/2}\\
    \leqslant&\,\,L(T-nh-s)\left(\l|\tilde{Y}_0-Y_{nh}\r|_{\Ltwo}+\l|Y_{nh}-X_{nh}^{\leftarrow}\r|_{\Ltwo}\right)+(dL(T-nh-s))^{1/2}\\
    \leqslant& \,\,L(T-nh-s)\left(\l|Y_{nh}-\vartheta_n^{\sf EM}\r|_{\Ltwo}+e^{-\int_0^{nh}(m(T-t)-\frac{1}{2})\rmd t}\l|Y_0-X_T\r|_{\Ltwo})+(dL(T-nh-s)\right)^{1/2}.
\end{align*}
Here, we use the fact that $\tilde{Y}_0=\vartheta_n^{\sf EM}$, and \text{Gr{\"o}nwall} inequality are used in the third inequality and the last one. This completes the proof.

\subsection{Proof of Lemma~\ref{lem:Brown1}}
For the stochastic integral of process $X$, we have
\begin{align*}
    \mathbb{E}(I_t(X))^2=\mathbb{E}\int_0^tX_u^2d\langle M \rangle_u.
\end{align*}
Then, we obtain 
\begin{align*}
    &\left\lVert\int_{nh}^{(n+U_n)h}\rmd W_t-\int_0^1\left(\int_{nh}^{(n+u)h}\rmd W_t\right)\rmd u\right\rVert_{\Ltwo}^2\\
    =&\,\,\mathbb{E}\left(\int_0^1\int_{nh}^{(n+1)h}-\textbf{1}_{\{U_n\leqslant u\}}\textbf{1}_{\{(n+U_n)h\leqslant t\leqslant (n+u)h\}}+\textbf{1}_{\{U_n>u\}}\textbf{1}_{\{(n+u)h\leqslant t\leqslant (n+U_n)h\}}\rmd W_t\rmd u\right)^2\\
    \leqslant&\,\,\int_0^1\mathbb{E}\left(\int_{nh}^{(n+1)h}-\textbf{1}_{\{U_n\leqslant u\}}\textbf{1}_{\{(n+U_n)h\leqslant t\leqslant (n+u)h\}}+\textbf{1}_{\{U_n>u\}}\textbf{1}_{\{(n+u)h\leqslant t\leqslant (n+U_n)h\}}\rmd W_t\right)^2\rmd u\\
    = & \,\,\int_0^1\left(\mathbb{E}\int_{nh}^{(n+1)h}\textbf{1}_{\{U_n\leqslant u\}}\textbf{1}_{\{(n+U_n)h\leqslant t\leqslant (n+u)h\}}+\textbf{1}_{\{U_n>u\}}\textbf{1}_{\{(n+u)h\leqslant t\leqslant(n+U_n)h\}}\rmd t\right)\rmd u\\
    = & \,\,\int_0^1\left(\mathbb{E}\left(\textbf{1}_{\{U_n\leqslant u\}}(u-U_n)h+\textbf{1}_{\{U_n>u\}}(U_n-u)h\right)\right)\rmd u\\
    =&\,\,h\int_0^1\Big(u^2-u+\dfrac{1}{2}\Big)\rmd u\\
    =&\,\,\dfrac{1}{3}h.
\end{align*}

\subsection{Proof of Lemma~\ref{lem:Gron2}}
Define the function $w(s)$ via
\begin{align*}
    w(s)=\exp\left(-\int_{t_0}^s\beta(r)dr\right)\int_{t_0}^s\beta(r)z(r)\rmd r,\quad \forall s\geqslant t_0.
\end{align*}
Differentiating this function gives
\begin{align*}
    w'(s)=\left(z(s)-\int_{t_0}^s\beta(r)z(r)\rmd r\right)\beta(s)\exp\left(-\int_{t_0}^s\beta(r)\rmd r\right)\leqslant \alpha(s)\beta(s)\exp\left(-\int_{t_0}^s\beta(r)\rmd r\right).
\end{align*}
Note that $w(t_0)=0$. Integrating the function $w$ from $t_0$ to $t$ yields
\begin{align*}
    w(t)\leqslant \int_{t_0}^t\alpha(s)\beta(s)\exp\left(-\int_{t_0}^s\beta(r)\rmd r\right)\rmd s.
\end{align*}
By the definition of $w(s)$, we also have
\begin{align*}
    \int_{t_0}^t\beta(s)z(s)\rmd s=\exp\left(\int_{t_0}^t\beta(r)\rmd r\right)w(t).
\end{align*}
Combining the previous two displays provides us with
\begin{align*}
    \int_{t_0}^t\beta(s)z(s)\rmd s\leqslant\int_{t_0}^t\alpha(s)\beta(s)\exp\left(\int_s^t\beta(r)\rmd r\right)\rmd s.
\end{align*}
By substituting this estimate into the inequality, we can obtain the first desired result.
Furthermore, if $\alpha$ is non-decreasing, then for any $s\leqslant t$, it holds that $\alpha(s)\leqslant \alpha(t)$.
This leads to 
\begin{align*}
    z(t)\leqslant \alpha(t)+\alpha(t)\int_{t_0}^t\beta(s)\exp\left(\int_s^t\beta(r)\rmd r\right)\rmd s.
\end{align*}
which can be simplified to
\begin{align*}
    z(t)\leqslant \alpha(t)\exp\left(\int_{t_0}^t\beta(r)\rmd r\right),\quad t\geqslant t_0.
\end{align*}
This completes the proof.

\subsection{Proof of Lemma~\ref{lem:Frob}}
Let $f \in C^3(\mathbb{R}^d)$ and assume that its Hessian matrix is $L_F$-Lipschitz continuous with respect to the Frobenius norm; that is, for all $x,y \in \mathbb{R}^d$ we have
\begin{align*}
    \l|\nabla^2 f(x) - \nabla^2 f(y)\r|_F \leqslant L_F \l|x-y\r|_2.
\end{align*}
Denote by $\{e_1, e_2, \cdots, e_d\}$ the standard orthonormal basis of $\mathbb{R}^d$. For any fixed $x \in \R^d$ and any index $i \in \{1,\cdots,d\}$, consider the directional derivative of the Hessian in the direction $e_i$:
\begin{align*}
    \partial_i \nabla^2 f(x) := \lim_{h \to 0} \frac{\nabla^2 f(x+he_i) - \nabla^2 f(x)}{h}.
\end{align*}
By the Lipschitz continuity of $\nabla^2 f$, for any nonzero $h$ we have
\begin{align*}
\l|\dfrac{\nabla^2 f(x+he_i) - \nabla^2 f(x)}{h}\r|_F \leqslant \dfrac{L_F |h|}{|h|} = L_F.
\end{align*}
Taking the limit as $h \to 0$ yields
\begin{align*}
    \l|\partial_i \nabla^2 f(x)\r|_F \leqslant L_F.
\end{align*}
Note that the matrix $\partial_i \nabla^2 f(x)$ is the $i$-th slice of the third-order derivative tensor $\nabla^3 f(x)$, its $(j,k)$-entry is $\dfrac{\partial^3 f(x)}{\partial x_i\,\partial x_j\,\partial x_k}$.

By the definition of Frobenius norm, we have
\begin{align*}
    \l|\nabla^3 f(x)\r|_F^2 = \sum_{i=1}^d \l|\partial_i \nabla^2 f(x)\r|_F^2\,.
\end{align*}
It then follows that
\begin{align*}
\l|\nabla^3 f(x)\r|_F^2 \leqslant \sum_{i=1}^d L_F^2 = d\,L_F^2.
\end{align*}
Taking the square root of both sides, we obtain
\begin{align*}
\l|\nabla^3 f(x)\r|_F \leqslant \sqrt{d}\, L_F.
\end{align*}
This completes the proof.

\subsection{Proof of Lemma~\ref{lem:Fokker}}
Notice that
\begin{align*}
    \nabla^2\log p_t(x)=&-\dfrac{1}{p_t(x)^2}\nabla p_t(x)\nabla p_t(x)^\top+\dfrac{1}{p_t(x)}\nabla^2p_t(x)\\
    =&-\nabla\log p_t(x)\nabla\log p_t(x)^\top+\dfrac{1}{p_t(x)}\nabla^2 p_t(x),
\end{align*}
which indicates
\begin{align*}
    \dfrac{1}{2}\sum_{i=1}^d\dfrac{\partial^2 p_t(x)}{\partial x_i^2}\cdot\dfrac{1}{p_t(x)}
    &=\dfrac{1}{2}\Tr\left(\dfrac{1}{p_t(x)}\nabla^2p_t(x)\right)\\
    &=\dfrac{1}{2}\Tr\Big(\nabla^2\log p_t(x)+\nabla\log p_t(x)\nabla\log p_t(x)^\top\Big)\\
    &=\dfrac{1}{2}\Tr\big(\nabla^2\log p_t(x)\big)+\dfrac{1}{2}\l|\nabla\log p_t(x)\r|^2,
\end{align*}
Additionally, we have
\begin{align*}
    \nabla\left(\dfrac{\partial^2\log p_t(x)}{\partial x_i^2}\right)&=\nabla\left(\dfrac{\partial^2p_t(x)}{\partial x_i^2}\cdot\dfrac{1}{p_t(x)}-\left(\dfrac{\partial p_t(x)}{\partial x_i}\cdot\dfrac{1}{p_t(x)}\right)^2\right)\\
    &=\nabla\left(\dfrac{\partial^2p_t(x)}{\partial x_i^2}\right)\cdot\dfrac{1}{p_t(x)}-\dfrac{\partial^2p_t(x)}{\partial x_i^2}\cdot\dfrac{1}{p_t(x)}\cdot\nabla\log p_t(x)-\nabla\left(\left(\dfrac{\partial\log p_t(x)}{\partial x_i}\right)^2\right)\,.
\end{align*}
Then, we obtain
\begin{align*}
    &\nabla\left(\sum_{i=1}^d\dfrac{\partial^2p_t(x)}{\partial x_i^2}\right)\cdot\dfrac{1}{p_t(x)}\\
    =&\nabla\left(\Tr(\nabla^2\log p_t(x))\right)+\left[\Tr(\nabla^2\log p_t(x))+\l|\nabla\log p_t(x)\r|^2\right]\cdot\nabla\log p_t(x)+\nabla(\l|\nabla\log p_t(x)\r|^2).
\end{align*}

\subsection{Proof of Lemma~\ref{lem:Enabla}}
Note that
\begin{align*}
    \mathbb{E}(\l|\nabla\log p_t(X_t)\r|^2)=&\int_{\mathbb{R}^d}\l|\nabla\log p_t(x)\r|^2p_t(x)\rmd x\\
    =&\lim_{R\to\infty}\int_{B(0,R)}\langle\nabla\log p_t(x),\nabla\log p_t(x)\rangle p_t(x)\rmd x\\
    =&\lim_{R\to\infty}\int_{B(0,R)}\langle\nabla\log p_t(x),\nabla p_t(x)\rangle \rmd x\,,
\end{align*}
where $B(0,R)$ denotes the Euclidean ball with radius $R>0$ centered at the origin.
Using integration by parts, we then obtain
\begin{align*}
    \mathbb{E}(\l|\nabla\log p_t(X_t)\r|^2)&=\lim_{R\to\infty}\int_{B(0,R)}-p_t(x)\Delta\log p_t(x)\rmd x+\int_{\partial B(0,R)}p_t(x)\dfrac{\partial \log p_t(x)}{\partial \vec{n}}\rmd S\\
    &=\int_{\mathbb{R}^d}p_t(x)\cdot(-\Delta\log p_t(x))\rmd x\\
    &\leqslant d L(t),
\end{align*}
where $\dfrac{\partial f}{\partial\vec{n}}=\nabla f\cdot\vec{n}$ represents the directional derivative along the normal vector $\vec{n}$ and $\rmd S$ denotes the surface integral over the spherical surface. Here we use the fact that $p_t(x)$ converges to $0$ at an exponential rate as $\l|x\r|$ approaches infinity, and the fact that
\begin{align*}
-\Delta\log p_t(x)=-\Tr(\nabla^2\log p_t(x))\in [0,dL(t)]  \,,
\end{align*}
which follows from Lemma~\ref{lem:GaoLt}.

\section{Numerical Studies}
\label{app:simulation}
We apply the five schemes to the posterior density of penalized logistic regression, defined by $p_0(\theta)\propto \exp(-f(\theta))$
with the potential function
\begin{align*}
	f(\theta)=\dfrac{\lambda}{2}\|\theta\|^2+\dfrac{1}{n_{\sf data}}\sum_{i=1}^{n_{\sf data}}\log(1+\exp(-y_ix_i^\top \theta)),
\end{align*}
where $\lambda>0$ denotes the tuning parameter. The data $\{x_i,y_i\}_{i=1}^{n_{\sf data}}$, composed of binary labels $y_i\in\{-1,1\}$ and features $x_i\in\mathbb{R}^d$ generated from $x_{i,j}\mathop{\sim}\limits^{iid}\mathcal{N}(0,\sigma^2)$.

\subsection{Calculation}
In this part, we derive explicit formulas for each coefficient term we need. First, the score function can be computed as
\begin{align*}
    \nabla\log p_0(\theta)
    &=-\left(\lambda \theta+\dfrac{1}{n_{\sf data}}\sum_{i=1}^{n_{\sf data}}\dfrac{-y_ix_i\exp(-y_ix_i^\top \theta)}{1+\exp(-y_ix_i^\top \theta)}\right)\\
    &=-\left(\lambda \theta+\dfrac{1}{n_{\sf data}}\sum_{i=1}^{n_{\sf data}}\dfrac{-y_ix_i}{1+\exp(y_ix_i^\top \theta)}\right).
\end{align*}
For simplicity, we denote the logistic sigmoid function $\sigma(u)=\dfrac{1}{1+e^{-u}}$, then
\begin{align*}
	\nabla\log p_0(\theta)=-\left(\lambda \theta+\dfrac{1}{n_{\sf data}}\sum_{i=1}^{n_{\sf data}}-y_ix_i\sigma(-y_ix_i^\top \theta)\right).
\end{align*}
Since $\sigma'(u)=\sigma(u)[1-\sigma(u)]$, we have
\begin{align*}
	\nabla^2\log p_0(\theta)
    =&-\left(\lambda I_d+\dfrac{1}{n_{\sf data}}\sum_{i=1}^{n_{\sf data}}y_i^2\sigma(-y_ix_i^\top \theta)\left[1-\sigma(-y_ix_i^\top \theta)\right]x_ix_i^\top\right)\\
	=&-\lambda I_d-\dfrac{1}{n_{\sf data}}\sum_{i=1}^{n_{\sf data}}\sigma(-y_ix_i^\top \theta)\left[1-\sigma(-y_ix_i^\top \theta)\right]x_ix_i^\top.
\end{align*}
As $x_ix_i^\top\succcurlyeq 0$, $\nabla^2\log p_0(\theta)\preccurlyeq-\lambda I_d$. We also have that $\sigma(1-y_ix_i^\top \theta)\in(0,1)$, then
\begin{align*}
	\nabla^2\log p_0(\theta)\succcurlyeq&-\lambda I_d-\dfrac{1}{4n_{\sf data}}\sum_{i=1}^{n_{\sf data}}x_ix_i^\top\\
    \succcurlyeq&-(\lambda+\dfrac{1}{n_{\sf data}}\lambda_{\max}(\sum_{i=1}^{n_{\sf data}}x_ix_i^\top))I_d\,.
\end{align*}
Therefore, 
\begin{align*}
	m_0=\lambda,\quad L_0=\lambda+\dfrac{1}{n_{\sf data}}\lambda_{\max}(\sum_{i}^{n_{\sf data}}x_ix_i^\top).
\end{align*}
Recall that the transition probability $p_{t|0}(\theta_t|\theta_0)=\phi(\theta_t;\mu_t,\Sigma_t)$, where $\mu_t=e^{-\frac{1}{2}t}\theta_0,\Sigma_t=(1-e^{-t})I_d$, and $\phi(\theta,\mu,\Sigma)$ denotes the probability density function of $\mathcal{N}(\mu,\Sigma)$, then we have
\begin{align*}
	p_t(\theta_t)&=\int_{\mathbb{R}^d}p_{t|0}(\theta_t|\theta_0)p_0(\theta_0)\rmd \theta_0\\
	&=\int_{\mathbb{R}^d}\dfrac{1}{\sqrt{(2\pi)^d|\Sigma_t|}}\exp(-\dfrac{1}{2}(\theta_t-\mu_t)^\top\Sigma_t^{-1}(\theta_t-\mu_t))p_0(\theta_0)\rmd \theta_0\\
	&=\int_{\mathbb{R}^d}\dfrac{1}{\left[2\pi(1-e^{-t})\right]^{d/2}}\exp(-\dfrac{1}{2(1-e^{-t})}\|\theta_t-e^{-\frac{1}{2}t}\theta_0\|^2)p_0(\theta_0)\rmd \theta_0\\
	&=\dfrac{1}{\left[2\pi(1-e^{-t})\right]^{d/2}}\mathbb{E}_{\theta_0\sim p_0}\left[\exp(-\dfrac{1}{2(1-e^{-t})}\|\theta_t-e^{-\frac{1}{2}t}\theta_0\|^2)\right]\,.
\end{align*}
Hence,
\begin{align*}
	\nabla p_t(\theta_t)&=\dfrac{1}{\left[2\pi(1-e^{-t})\right]^{d/2}}\mathbb{E}_{\theta_0\sim p_0}\left[\nabla\left(\exp(-\dfrac{1}{2(1-e^{-t})}\|\theta_t-e^{-\frac{1}{2}t}\theta_0\|^2)\right)\right]\\
	&=\dfrac{1}{\left[2\pi(1-e^{-t})\right]^{d/2}}\mathbb{E}_{\theta_0\sim p_0}\left[\exp(-\dfrac{1}{2(1-e^{-t})}\|\theta_t-e^{-\frac{1}{2}t}\theta_0\|^2)\cdot\dfrac{-(\theta_t-e^{-\frac{1}{2}t}\theta_0)}{1-e^{-t}}\right],\\
	\nabla^2 p_t(\theta_t)&=\dfrac{1}{\left[2\pi(1-e^{-t})\right]^{d/2}}\mathbb{E}_{\theta_0\sim p_0}\bigg[\exp(-\dfrac{1}{2(1-e^{-t})}\|\theta_t-e^{-\frac{1}{2}t}\theta_0\|^2)\\
    &\qquad\qquad\qquad\qquad\qquad\cdot\bigg(\dfrac{(\theta_t-e^{-\frac{1}{2}t}\theta_0)(\theta_t-e^{-\frac{1}{2}t}\theta_0)^\top}{(1-e^{-t})^2}-\dfrac{1}{1-e^{-t}}I_d\bigg)\bigg].
\end{align*}
We can approximate $p_t(\theta_t),\nabla p_t(\theta_t)$ and $\nabla^2 p_t(\theta_t)$ or even higher order derivative tensor of $p_t(\theta_t)$ by Monte Carlo method, therefore, we can compute score function and its high order derivative by
\begin{align*}
	\nabla\log p_t(\theta_t)=\dfrac{\nabla p_t(\theta_t)}{p_t(\theta_t)},\quad \nabla^2\log p_t(\theta_t)=\dfrac{\nabla^2 p_t(\theta_t)}{p_t(\theta_t)}-\dfrac{\nabla p_t(\theta_t)\nabla p_t(\theta_t)^\top}{p_t(\theta_t)^2}\,.
\end{align*}

\subsection{Implementation}
In the numerical studies, we set $T=10$, and the number of Monte Carlo iterations is chosen as the floor of $T/h$, where $h$ varies according to the step size indicated in the figure.

\end{document}